\definecolor{bl}{RGB}{20,20,150}
\newcommand{\Acal}{\mathcal{A}}
\newcommand{\Bcal}{\mathcal{B}}
\newcommand{\Ecal}{\mathcal{E}}
\newcommand{\Mcal}{\mathcal{M}}
\newcommand{\Ncal}{\mathcal{N}}
\newcommand{\Pcal}{\mathcal{P}}
\newcommand{\Scal}{\mathcal{S}}
\newcommand{\Xcal}{\mathcal{X}}
\newcommand{\Ycal}{\mathcal{Y}}
\newcommand{\Zcal}{\mathcal{Z}}
\newcommand{\N}{\mathbb{N}}
\newcommand{\R}{\mathbb{R}}
\newcommand{\RBM}{\operatorname{RBM}}
\newcommand{\LTF}{\operatorname{LTF}}
\newcommand{\supp}{\operatorname{supp}}
\newcommand{\sgn}{\operatorname{sgn}}
\newcommand{\argmax}{\operatorname{argmax}}
\newcommand{\hs}{\operatorname{hs}}
\newcommand{\Del}[2]{\Delta_{#1,#2}}
\newcommand{\unitlayer}[3]{ 
\framebox{\tikzset{node distance=.4cm, auto}
\begin{tikzpicture}[scale=0.9, every node/.style={transform shape}]
\tikzstyle{neuron}=[circle, draw=black, inner sep=.01cm, minimum size = .55cm]

\foreach \name / \i in {1,...,#2}
 \node[neuron] (I-\name) at (\i,0) {$#1_{\i}$};
 
 \node (I-dots) [node distance = .8cm, right of = I-#2] {$\cdots$};
 \node[neuron] (I-end)  [node distance = .8cm, right of = I-dots] {${#1}_{#3}$};
 
\end{tikzpicture}
}}
\renewenvironment{abstract}
{\centerline{\large\bf Abstract}\vspace{0.7ex}%
  \bgroup\leftskip 20pt\rightskip 20pt\small\noindent}%
{\par\egroup\vskip 0.25ex}
\newenvironment{keywords}
{\bgroup\leftskip 20pt\rightskip 20pt \small\noindent{\bf Keywords:} }%
{\par\egroup\vskip 0.25ex}
\definecolor{bl}{RGB}{20,20,150}
\newtheorem{theorem}{Theorem}
\newtheorem{lemma}[theorem]{Lemma}
\newtheorem{corollary}[theorem]{Corollary}
\newtheorem{proposition}[theorem]{Proposition}
\theoremstyle{definition}
\newtheorem{definition}[theorem]{Definition}
\newtheorem{remark}[theorem]{Remark}
\renewenvironment{proof}[1][\unskip]{\par\noindent{\bf Proof #1. }}{\hfill$\square$\\[2mm]}
\begin{document} 

\title{\vspace{-10mm} 
\begin{minipage}{1\textwidth}
\centering
\Large\bf Geometry and Expressive Power of\\Conditional Restricted Boltzmann Machines
\end{minipage}}

\author[1]{Guido Mont\'ufar} 
\author[1,2,3]{Nihat Ay} 
\author[1]{Keyan Ghazi-Zahedi} 
\affil[1]{\small Max Planck Institute for Mathematics in the Sciences, Inselstra\ss e 22, 04103 Leipzig, Germany}
\affil[2]{\small Department of Mathematics and Computer Science, Leipzig University, PF 10 09 20, 04009 Leipzig, Germany }
\affil[3]{\small Santa Fe Institute, 1399 Hyde Park Road, Santa Fe, NM 87501, USA}
\date{}

\maketitle

\begin{abstract}%
Conditional restricted Boltzmann machines are undirected stochastic neural networks with a layer of input and output units connected bipartitely to a layer of hidden units. These networks define models of conditional probability distributions on the states of the output units given the states of the input units, parametrized by interaction weights and biases. We address the representational power of these models, proving results their ability to represent conditional Markov random fields and conditional distributions with restricted supports, the minimal size of universal approximators, the maximal model approximation errors, and on the dimension of the set of representable conditional distributions. 
We contribute new tools for investigating conditional probability models, which allow us to improve the results that can be derived from existing work on restricted Boltzmann machine probability models.
\end{abstract}
\smallskip

\begin{keywords}
conditional restricted Boltzmann machine, universal approximation, Kullback-Leibler approximation error, expected dimension
\end{keywords}

\thispagestyle{empty}

\section{Introduction}

Restricted Boltzmann Machines (RBMs)~\citep{Smolensky1986,freund1994unsupervised} are generative probability models defined by undirected stochastic networks with bipartite interactions between visible and hidden units. 
These models are well-known in machine learning applications, where they are used to infer distributed representations of data and to train the layers of deep neural networks~\citep{hinton2006fast,Bengio:2009:LDA:1658423.1658424}. 
The restricted connectivity of these networks allows to train them efficiently on the basis of cheap inference and finite Gibbs sampling~\citep{Hinton:2002:TPE:639729.639730,HintonAParactical}, even when they are defined with many units and parameters. 
An RBM defines Gibbs-Boltzmann probability distributions over the observable states of the network, depending on the interaction weights and biases. An introduction is offered by~\citet{FisherAnIntroduction}.
The expressive power of these probability models has attracted much attention and has been studied in numerous papers, treating, in particular, their universal approximation properties~\citep[][]{Younes1996109,LeRoux:2008:RPR:1374176.1374187,Montufar2011}, approximation errors~\citep{NIPS2011_4380}, efficiency of representation~\citep{NIPS2013_5020,montufar2012does}, and dimension~\citep{Cueto2010}. 

In certain applications, it is preferred to work with conditional probability distributions, instead of joint probability distributions.
For example, in a classification task, the conditional distribution may be used to indicate a belief about the class of an input, without modeling the probability of observing that input; in sensorimotor control, it can describe a stochastic policy for choosing actions based on world observations; and in the context of information communication, to describe a channel. RBMs naturally define models of conditional probability distributions, called conditional restricted Boltzmann machines (CRBMs). These models inherit many of the nice properties of RBM probability models, such as the cheap inference and efficient training. Specifically, a CRBM is defined by clamping the states of an {\em input} subset of the visible units of an RBM. For each input state one obtains a conditioned distribution over the states of the {\em output} visible units. See Figure~\ref{figure:CRBM} for an illustration of this architecture. This kind of conditional models and slight variants thereof have seen success in many applications; for example, in classification~\citep{LarochelleB08}, collaborative filtering~\citep{Salakhutdinov:2007:RBM}, motion modeling~\citep{Taylor06modelinghuman,zeilerPigeon,DBLP:journals/corr/abs-1202-3748,sutskever_hinton_07}, and reinforcement learning~\citep[][]{Sallans:2004:RLF:1005332.1016794}.

So far, however, there is not much theoretical work addressing the expressive power of CRBMs.
We note that it is relatively straightforward to obtain some results on the expressive power of CRBMs from the existing theoretical work on RBM probability models. 
Nevertheless, an accurate analysis requires to take into account the specificities of the conditional case. 
Formally, a CRBM is a collection of RBMs, with one RBM for each possible input value. These RBMs differ in the biases of the hidden units, as these are influenced by the input values. 
However, these hidden biases are not independent for all different inputs, and, moreover, the same interaction weights and biases of the visible units are shared for all different inputs. This sharing of parameters draws a substantial distinction of CRBM models from independent tuples of RBM models. 

In this paper we address the representational power of CRBMs, contributing theoretical insights to the optimal number of hidden units. Our focus lies on the classes of conditional distributions that can possibly be represented by a CRBM with a fixed number of inputs and outputs, depending on the number of hidden units. Having said this, we do not discuss the problem of finding the optimal parameters that give rise to a desired conditional distribution (although our derivations include an algorithm that does this), nor problems related to incomplete knowledge of the target conditional distributions and generalization errors. A number of training methods for CRBMs have been discussed in the references listed above, depending on the concrete applications. 
The problems that we deal with here are the following: 
1) are distinct parameters of the model mapped to distinct conditional distributions; 
what is the smallest number of hidden units that suffices for obtaining a model that can 
2) approximate any target conditional distribution arbitrarily well (a universal approximator); 
3) approximate any target conditional distribution without exceeding a given error tolerance; 
4) approximate selected classes of conditional distributions arbitrarily well? 
We provide non-trivial solutions to all of these problems. 
We focus on the case of binary units, but the main ideas extend to the case of discrete non-binary units. 

\medskip

This paper is organized as follows. 
Section~\ref{sec:definitions} contains formal definitions and elementary properties of CRBMs. 
Section~\ref{sec:geometry} investigates the geometry of CRBM models in three subsections. 
In Section~\ref{sec:dimension} we study the dimension of the sets of conditional distributions represented by CRBMs and show that in most cases this is the dimension expected from counting parameters (Theorem~\ref{theorem:dimension}). 
In Section~\ref{sec:universal_approximation} we address the universal approximation problem, deriving upper and lower bounds on the minimal number of hidden units that suffices for this purpose (Theorem~\ref{theorem:universal}). 
In Section~\ref{sec:approximation_errors} we analyze the maximal approximation errors of CRBMs (assuming optimal parameters) and derive an upper-bound for the minimal number of hidden units that suffices to approximate every conditional distribution within a given error tolerance (Theorem~\ref{theorem:errors}). 
Section~\ref{sec:expressive} investigates the expressive power of CRBMs in two subsections. 
In Section~\ref{sec:hierarchicalconditionals} we describe how CRBMs can represent natural families of conditional distributions that arise in Markov random fields. 
In Section~\ref{sec:bounded_support} we study the ability of CRBMs to approximate conditional distributions with restricted supports. This section addresses, especially, the approximation of deterministic conditional distributions (Theorem~\ref{proposition:universaldeterministic}). 
In Section~\ref{sec:discussion} we offer a discussion and an outlook. 
In order to present the main results in a concise way, we have deferred all proofs to the appendices. 
Nonetheless, we think that the proofs are interesting in their own right, and we have prepared them with a fair amount of detail.

\section{Definitions}
\label{sec:definitions}

We will denote the set of probability distributions on $\{0,1\}^n$ by $\Delta_n$. 
A probability distribution $p\in\Delta_n$ is a vector of $2^n$ non-negative entries $p(y)$, $y\in\{0,1\}^n$, adding to one, $\sum_{y\in\{0,1\}^n}p(y)=1$. 
The set $\Delta_n$ is a $(2^n-1)$-dimensional simplex in $\R^{2^n}$.

We will denote the set of conditional distributions of a variable $y\in\{0,1\}^n$,
given another variable $x\in\{0,1\}^k$, by $\Del{k}{n}$. 
A conditional distribution $p(\cdot|\cdot)\in \Del{k}{n}$ is a $2^k\times 2^n$ row-stochastic matrix with rows $p(\cdot|x)\in\Delta_{n}$, $x\in\{0,1\}^k$. 
The set $\Del{k}{n}$ is a $2^k(2^n-1)$-dimensional polytope in $\R^{2^k\times 2^n}$. 
It can be regarded as the $2^k$-fold Cartesian product $\Del{k}{n} = \Delta_n\times \cdots\times \Delta_n$, 
where there is one probability simplex $\Delta_n$ for each possible input state $x\in\{0,1\}^k$. 
We will use the abbreviation $[N]:=\{1,\ldots, N\}$, where $N$ is a natural number. 

\begin{figure} 
	\vspace{-.1cm}
	\begin{center}
		\tikzset{node distance=2cm, auto}
		\begin{tikzpicture}[scale=0.9, every node/.style={transform shape}]
		\node (H) {\unitlayer{z}{4}{m}};
		\node (I) [left of = H,  below of=H] {\unitlayer{x}{2}{k}};
		\node (O) [right of = H, below of=H] {\unitlayer{y}{2}{n}};
		
		\draw[->,  line width = .8pt] (I) to node {$V$} (H);
		\draw[<->, line width = .8pt] (H) to node {$W$} (O);
		
		\node (h) at (1.9,.35) {};
		\node (o) at (3,-1.65) {};
		\node (c) [node distance=.7cm, right of = h, above of =h] {};
		\node (b) [node distance=.7cm, right of = o, above of =o] {};
		\draw[<-,  line width = .8pt] (h) to node {$c$} (c);
		\draw[<-,  line width = .8pt] (o) to node {$b$} (b);
		
		\node (i) [node distance=.8cm, below of = I] {Input layer};
		\node (h) [node distance=.8cm, above of = H] {Hidden layer};
		\node (o) [node distance=.8cm, below of = O] {Output layer};
		\end{tikzpicture}
		\vspace{-.3cm}
	\end{center}
	\caption{Architecture of a CRBM. An RBM is the special case with $k=0$. }\label{figure:CRBM}
\end{figure}
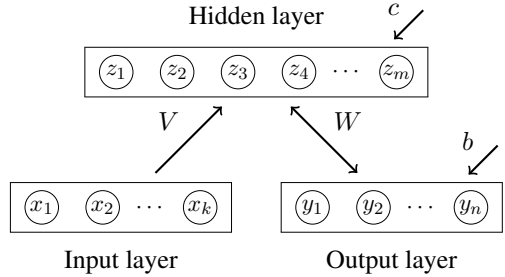

\begin{definition}\label{definitionCRBM}\normalfont
	The conditional restricted Boltzmann machine (CRBM) with $k$ input units, $n$ output units, and $m$ hidden units, denoted $\RBM_{n,m}^k$, is the set of all conditional distributions in $\Del{k}{n}$ that can be written as
	\begin{equation*}
	p(y|x) =  \frac{1}{Z(W,b,Vx+c)} \sum_{z\in\{0,1\}^m} \exp(z^\top V x + z^\top W y + b^\top y
	+ c^\top z),
	\quad \forall y\in\{0,1\}^n, x\in\{0,1\}^k,
	\label{definitioneq}
	\end{equation*}
	with normalization function
	\begin{equation*}
	Z(W,b,Vx+c) = \\
	\sum_{y\in\{0,1\}^n}\sum_{z\in\{0,1\}^m} \exp(z^\top V x + z^\top W y + b^\top y
	+ c^\top z), \\
	\quad \forall x\in\{0,1\}^k.
	\end{equation*}
	Here, $x$, $y$, and $z$ are column state vectors of the $k$ input units, $n$ output units, and $m$ hidden units, respectively, and ${}^\top$ denotes transposition. 
	The parameters of this model are the matrices of interaction weights $V\in\R^{m\times k}$, $W\in\R^{m\times n}$ and the vectors of biases $b\in\R^{n}$, $c\in\R^{m}$. 
	
	When there are no input units ($k=0$), the model $\RBM_{n,m}^k$ reduces to the restricted Boltzmann machine probability model with $n$ visible units and $m$ hidden units, denoted $\RBM_{n,m}$.
\end{definition}

We can view $\RBM_{n,m}^k$ as a collection of $2^k$ restricted Boltzmann machine probability models with shared parameters.
For each input $x\in\{0,1\}^k$, the output distribution $p(\cdot|x)$ is the probability distribution represented by $\RBM_{n,m}$ for the parameters $W,b,(Vx+c)$. 
All $p(\cdot|x)$ have the same interaction weights $W$, the same biases $b$ for the visible units, and differ only in the biases $(Vx+c)$ for the hidden units. 
The joint behavior of these distributions with shared parameters is not trivial. 

The model $\RBM_{n,m}^k$ can also be regarded as representing block-wise normalized versions of the joint probability distributions represented by $\RBM_{n+k,m}$. 
Namely, a joint distribution $p\in \RBM_{n+k,m}\subseteq \Delta_{k+n}$ is an array with entries $p(x,y)$, $x\in\{0,1\}^k$, $y\in\{0,1\}^n$. 
Conditioning $p$ on $x$ is equivalent to considering the normalized $x$-th row $p(y|x) = p(x,y) / \sum_{y'}p(x,y')$, $y\in\{0,1\}^n$.

\section{Geometry of Conditional Restricted Boltzmann Machines}
\label{sec:geometry}

In this section we investigate three basic questions about the geometry of CRBM models. 
First, what is the dimension of a CRBM model? 
Second, how many hidden units does a CRBM need in order to be able to approximate every conditional distribution arbitrarily well?  
Third, how accurate are the approximations of a CRBM, depending on the number of hidden units? 

\subsection{Dimension}
\label{sec:dimension}

The model $\RBM_{n,m}^k$ is defined by marginalizing out the hidden units of a graphical model. 
This implies that several choices of parameters may represent the same conditional distributions. 
In turn, the dimension of the set of representable conditional distributions may be smaller than the number of model parameters, in principle. 

When the dimension of $\RBM_{n,m}^k$ is equal to the number of parameters, $\dim(\RBM_{n,m}^k) = (k+n)m+n+m$, 
or, otherwise, equal to the dimension of the ambient polytope of conditional distributions, $\dim(\RBM_{n,m}^k) = 2^k(2^n-1)$, 
then the model is said to have the {\em expected dimension}. 
In this section we show that $\RBM_{n,m}^k$ has the expected dimension for most triplets $(k,n,m)$. 
In particular, we show that this holds in all practical cases, where the number of hidden units $m$ is smaller than exponential with respect to the number of visible units $k+n$. 

The dimension of a parametric model is given by the maximum of the rank of the Jacobian of its parametrization (assuming mild differentiability conditions). 
Computing the rank of the Jacobian is not easy in general. 
A resort is to compute the rank only in the limit of large parameters, which corresponds to considering a piece-wise linearized version of the original model, called the {\em tropical model}. 
\citet{Cueto2010} used this approach to study the dimension of RBM probability models. 
Here we apply their ideas in order to study the dimension of CRBM conditional models. 

The following functions from coding theory will be useful for phrasing the results: 
\begin{definition}
	\label{definition:ak}
	\normalfont
	Let $A(n, d)$ denote the cardinality of the largest subset of $\{0,1\}^n$ whose elements are at least Hamming distance $d$ apart.
	Let $K(n,d)$ denote the smallest cardinality of a set such that every element of $\{0,1\}^n$ is at most Hamming distance $d$ apart from that set. 
\end{definition}

\citet[][]{Cueto2010} showed that 
$\dim(\RBM_{n,m}) = nm +n +m$ for $m + 1\leq A(n, 3)$, and $\dim(\RBM_{n,m}) = 2^n -1$ for $m \geq K(n,1)$. 
It is known that $A(n,3)\geq 2^{n - \lceil \log_2(n+1) \rceil}$ and $K(n,1)\leq 2^{n-\lfloor \log_2(n+1) \rfloor}$. 
In turn, the probability model $\RBM_{n,m}$ has the expected dimension for most pairs $(n,m)$. 
Noting that $\dim(\RBM_{n,m}^k)\geq \dim(\RBM_{k+n,m})-(2^k-1)$, 
we directly infer the following bounds for the dimension of conditional models: 

\begin{proposition}
	\label{proposition:naivedim}
	\mbox{}
	\begin{itemize}
		\item
		$\dim(\RBM_{n,m}^k)\geq (n+k)m + n + m  + k-(2^k-1)$ for $m+1\leq A(k+n,3)$.
		\item
		$\dim(\RBM_{n,m}^k) = 2^{k}(2^n-1)$ for $m\geq K(k+n,1)$.
	\end{itemize}
\end{proposition}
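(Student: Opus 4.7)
The plan is to establish the stated bounds as direct corollaries of the key inequality
\[
\dim(\RBM_{n,m}^k)\ \geq\ \dim(\RBM_{k+n,m})\ -\ (2^k-1),
\]
combined with the two results of \citet{Cueto2010} recalled just above the proposition. So the main task is to prove this inequality; everything else is substitution.

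For the inequality, I would use the identification from Section~\ref{sec:definitions}: $\RBM_{n,m}^k$ is exactly the image of $\RBM_{k+n,m}$ under the row-wise normalization map $\pi\colon \Delta_{k+n}\to\Del{k}{n}$ sending $p(x,y)\mapsto p(x,y)/\sum_{y'} p(x,y')$. Since $\pi$ is a rational map that is regular on the interior of the simplex, and both $\RBM_{k+n,m}$ and its image are semi-algebraic sets, one can apply the fiber dimension theorem for semi-algebraic maps: if every fiber of $\pi$ restricted to $\RBM_{k+n,m}$ has dimension at most $d$, then $\dim(\pi(\RBM_{k+n,m}))\geq \dim(\RBM_{k+n,m})-d$. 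The fibers of $\pi$ on $\Delta_{k+n}$ are parameterized by the $x$-marginal $r\in\Delta_k$: a generic fiber is $\{p : p(x,y)=r(x)q(y|x), r\in\Delta_k\}$, which has dimension exactly $2^k-1$, and every fiber has dimension at most $2^k-1$. The restriction to $\RBM_{k+n,m}$ can only reduce fiber dimensions, so the hypothesis $d\leq 2^k-1$ holds, giving the inequality.

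With the inequality in hand, the two assertions follow immediately. For the first, the assumption $m+1\leq A(k+n,3)$ is precisely the regime in which \citet{Cueto2010} give $\dim(\RBM_{k+n,m})=(k+n)m+(k+n)+m$, and subtracting $2^k-1$ yields $(n+k)m+n+m+k-(2^k-1)$. For the second, $m\geq K(k+n,1)$ is the regime in which $\RBM_{k+n,m}=\Delta_{k+n}$ (again by \citet{Cueto2010}), so $\pi(\RBM_{k+n,m})=\pi(\Delta_{k+n})=\Del{k}{n}$, which has the full ambient dimension $2^k(2^n-1)$; the reverse inequality is automatic, giving equality.

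The only technical nuisance is the first part: one must be a bit careful that the fiber dimension theorem applies in the semi-algebraic (as opposed to purely algebraic) setting, and that $\pi$ is well-defined on the relevant set — concretely, that generic elements of $\RBM_{k+n,m}$ have strictly positive $x$-marginals, which is clear because strict positivity holds on a dense open subset of the parameter space. I expect this to be the only real obstacle; once formulated correctly it is standard, and no further computation is needed.
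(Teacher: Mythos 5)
Your proposal is correct and follows essentially the same route as the paper, which likewise derives both items by subtracting the $(2^k-1)$-dimensional fibers of the row-normalization map (parametrized by the input marginal $p(x)$) from the known dimensions of $\RBM_{k+n,m}$; you simply make the semi-algebraic fiber-dimension argument explicit where the paper states it in one line. One small correction: for $m\geq K(k+n,1)$ the cited result of \citet{Cueto2010} gives only that $\RBM_{k+n,m}$ is full-dimensional, not that it equals $\Delta_{k+n}$, but your own inequality already yields $\dim(\RBM_{n,m}^k)\geq (2^{k+n}-1)-(2^k-1)=2^k(2^n-1)$, which is the ambient dimension, so the equality in the second item still follows.
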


These bounds are too loose  and do not allow us to attest whether the conditional model has the expected dimension, 
unless $m\geq K(k+n,1)$. 
Hence we need to study the conditional model in more detail. We obtain the following result:

\begin{theorem}
	\label{theorem:dimension}
	The conditional model $\RBM_{n,m}^k$ has the expected dimension in the following cases:
	\begin{itemize}
		\item 
		$\dim(\RBM_{n,m}^k) = (k+n+1)m + n$ for $m+1\leq A(k+n,4)$. 
		\item 
		$\dim(\RBM_{n,m}^k) = 2^k (2^n -1)$ for $m\geq K(k+n,1)$. 
	\end{itemize}
\end{theorem}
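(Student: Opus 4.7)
\noindent
The second bullet of the theorem reproduces the second bullet of Proposition~\ref{proposition:naivedim}, and my proof would simply recall that argument: observe that $\RBM_{n,m}^k$ is the image of $\RBM_{k+n,m}$ under the conditionalization map $\pi\colon \Delta_{k+n}^\circ\to\Del{k}{n}$, $\pi(p)(y|x)=p(x,y)/p(x)$, which is a smooth submersion onto its image with fibers of dimension $2^k-1$. Combined with \citet{Cueto2010}'s equality $\dim(\RBM_{k+n,m})=2^{k+n}-1$ for $m\geq K(k+n,1)$, this yields the lower bound $\dim(\RBM_{n,m}^k)\geq 2^{k+n}-1-(2^k-1)=2^k(2^n-1)$, which already matches the ambient dimension.

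For the first bullet, $(k+n+1)m+n$ is a trivial upper bound because it is the number of free parameters, so I need only a matching lower bound. The plan is to exhibit one parameter configuration at which the Jacobian of the parameterization has this rank, and then invoke lower semi-continuity of rank to conclude that the same rank is generic. The configuration is a tropical one, inspired by the construction of \citet{Cueto2010} for RBM probability models. First pick $m+1$ codewords $z_0,z_1,\ldots,z_m\in\{0,1\}^{k+n}$ at pairwise Hamming distance at least $4$, which is possible by the assumption $m+1\leq A(k+n,4)$. For each hidden unit $i\in[m]$, set the weight row $(V_i,W_i)$ equal to $2z_i-\mathbf{1}\in\{\pm 1\}^{k+n}$ and choose the bias $c_i=\tfrac{3}{2}-|z_i|_1$; a short computation based on the identity $\sum_j(2z_{i,j}-1)v_j=|z_i|_1-d_H(z_i,v)$ shows that, on $\{0,1\}^{k+n}$, the linear threshold $V_ix+W_iy+c_i>0$ selects exactly the Hamming ball $B_1(z_i)$ of radius one around $z_i$. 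I then rescale $(V,W,c)\to \lambda(V,W,c)$ and let $\lambda\to\infty$, so the sigmoids $\sigma(\lambda(V_ix+W_iy+c_i))$ converge to the indicators $\mathbbm{1}[(x,y)\in B_1(z_i)]$.

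At these tropical parameters the derivatives of the joint unnormalized log-probability $\log\tilde p(x,y)$ with respect to $(V_i,W_i,c_i)$ collapse to the vectors $(x,y,1)\cdot\mathbbm{1}[(x,y)\in B_1(z_i)]$, and the derivatives with respect to $b$ are simply $y$. Because $B_1(z_i)$ contains $k+n+1$ affinely independent points, the $k+n+1$ directions for each hidden unit are linearly independent; because the balls $B_1(z_i)$, $i\in[m]$, are pairwise disjoint (distance at least $4$ certainly implies distance at least $3$), contributions of distinct hidden units have disjoint supports and are mutually independent. The remaining task is to pass from the joint Jacobian to the conditional one, i.e.\ to quotient by the $2^k$-dimensional subspace of functions of $x$ alone. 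The reserve codeword $z_0$ serves exactly this purpose: $B_1(z_0)$ is disjoint from every $B_1(z_i)$, and in the slice $x=z_0^x$ it contains the $n+1$ affinely independent $y$-values $z_0^y,z_0^y\oplus e_1,\ldots,z_0^y\oplus e_n$, so any putative relation $b^\top y+\sum_i f_i(x,y)=g(x)$, restricted to $B_1(z_0)$, forces $b=0$ and $g\equiv 0$ on $B_1(z_0^x)$. A parallel ball-by-ball argument at each $z_i$ then rules out all nontrivial $f_i$.

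I expect this last step, tracking linear independence modulo functions of $x$, to be the main obstacle, and it is where the distance-$4$ hypothesis is genuinely used: in the unconditional setting \citet{Cueto2010} gets away with distance $3$, but the additional $2^k$-dimensional quotient in the conditional case forces a stronger separation both among the $z_i$ and between each $z_i$ and the reserve $z_0$. Granting the independence claim, the Jacobian at these parameters attains rank $(k+n+1)m+n$ in the tropical limit, so by lower semi-continuity the same rank is attained at some finite parameter point, delivering the desired lower bound $\dim(\RBM_{n,m}^k)\geq (k+n+1)m+n$.
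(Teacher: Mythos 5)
Your overall strategy coincides with the paper's: the second bullet via the full-dimensionality of $\RBM_{k+n,m}$ together with the conditionalization map, and the first bullet via a tropical limit in which each hidden unit's inference region is a radius-$1$ Hamming ball around a codeword of a distance-$4$ code, with one reserve codeword, followed by a rank count modulo functions of $x$. The gap is in the step you yourself flag as the main obstacle, and it is not closed by what you describe. Restricting the putative relation $b^\top y+\sum_i f_i(x,y)=g(x)$ to the ball $B_1(z_i)$ only yields $\alpha_i+\beta_i^\top x+\gamma_i^\top y=g(x)$ on the $k+n+1$ points of that ball; the slice $x=z_i^x$ kills $\gamma_i$, but the remaining $k+1$ equations $\alpha_i+\beta_i^\top x=g(x)$, one for each $x$ in the projection of $B_1(z_i)$, involve $k+1$ unknown values of $g$ and so cannot force $\alpha_i=\beta_i=0$. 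No ``ball-by-ball'' argument can finish here: the component $(\alpha_i+\beta_i^\top x)\,\mathbbm{1}_{B_1(z_i)}(x,y)$ can only be separated from the functions of $x$ by looking \emph{outside} the ball. What is actually needed is that for every input $x$ the cylinder set $[x]=\{x\}\times\{0,1\}^n$ contains at least one point covered by none of $B_1(z_1),\ldots,B_1(z_m)$; evaluating the relation at that point gives $g(x)=0$, and only then does the within-ball affine independence kill every $f_i$.

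This uncovered-point condition is exactly where the distance-$4$ hypothesis enters, and not for the reason you give: pairwise disjointness of the balls, and disjointness of the reserve ball $B_1(z_0)$ from the others, already follow from distance $3$. The paper's argument is that a cylinder set $[x]$ intersecting $B_1(z_i)$ without being contained in it must contain a point at Hamming distance exactly $2$ from $z_i$ (walk coordinate by coordinate, inside the subcube $[x]$, from an interior point to an exterior one), and when the codewords are pairwise at distance at least $4$ such a point lies in no $B_1(z_j)$; hence no cylinder set $[x]$ is contained in $\cup_i B_1(z_i)$ (one should also note that a single radius-$1$ ball cannot contain the $2^n$ points of $[x]$ once $n\geq 2$). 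Adding this observation, and routing the vanishing of $g$ through it before invoking affine independence inside each ball, turns your outline into the paper's proof.
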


We note the following practical version of the theorem, 
which results from inserting appropriate bounds on the functions $A$ and $K$:  

\begin{corollary}
	\label{corollary:dimension}
	The conditional model $\RBM_{n,m}^k$ has the expected dimension in the following cases:
	\begin{itemize}
		\item $\dim(\RBM_{n,m}^k) = (n+k+1)m + n$ for $m \leq 2^{(k+n) - \lfloor\log_2( (k+n)^2 -(k+n) +2 )\rfloor}$. 
		\item $\dim(\RBM_{n,m}^k) = 2^k (2^n -1)$ for $m \geq 2^{(k+n)-\lfloor \log_2(k+n+1) \rfloor}$. 
	\end{itemize}
\end{corollary}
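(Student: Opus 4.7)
The plan is to deduce Corollary~\ref{corollary:dimension} directly from Theorem~\ref{theorem:dimension} by substituting standard coding-theoretic bounds for $A(k+n,4)$ and $K(k+n,1)$. Write $N=k+n$. For the lower bound on $A(N,4)$ I will use the Gilbert-Varshamov packing construction combined with the parity-extension identity relating distance-$3$ and distance-$4$ binary codes, and for the upper bound on $K(N,1)$ I will use shortened Hamming codes.

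For the first bullet, I invoke the identity $A(N,4)=A(N-1,3)$, obtained by appending (resp.\ deleting) a parity bit. Applying the Gilbert-Varshamov bound $A(n,d)\geq 2^n/V(n,d-1)$, with $V(n,r)=\sum_{i=0}^{r}\binom{n}{i}$, and evaluating $V(N-1,2)=1+(N-1)+\binom{N-1}{2}=(N^2-N+2)/2$, I obtain
$$A(N,4) \;\geq\; \frac{2^{N-1}}{V(N-1,2)} \;=\; \frac{2^N}{N^2-N+2}.$$
A short check using $N^2-N+2 < 2^{\lfloor\log_2(N^2-N+2)\rfloor+1}$ then shows that the stated hypothesis on $m$ forces $m+1\leq A(N,4)$, so the first bullet of Theorem~\ref{theorem:dimension} delivers the claimed dimension $(k+n+1)m+n$.

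For the second bullet, I rely on the covering-code estimate $K(N,1)\leq 2^{N-\lfloor\log_2(N+1)\rfloor}$. This follows from Hamming's construction: when $N=2^r-1$ the perfect Hamming code furnishes a radius-$1$ covering with $2^{N-r}=2^N/(N+1)$ codewords, and shortening yields the analogous inequality for general $N$. The hypothesis $m\geq 2^{N-\lfloor\log_2(N+1)\rfloor}$ therefore implies $m\geq K(N,1)$, and the second bullet of Theorem~\ref{theorem:dimension} yields $\dim(\RBM_{n,m}^k)=2^k(2^n-1)$.

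There is no substantial obstacle here: the corollary is a bookkeeping reduction that replaces the combinatorial quantities $A(\cdot,4)$ and $K(\cdot,1)$ from Theorem~\ref{theorem:dimension} by explicit closed-form functions of $k+n$. The only step demanding care is the arithmetic verification that the floor-log expressions produce valid sufficient conditions in the correct direction on each side; everything else is pure substitution.
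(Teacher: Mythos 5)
Your second bullet is fine and matches the paper: the shortened-Hamming covering bound $K(k+n,1)\leq 2^{(k+n)-\lfloor\log_2(k+n+1)\rfloor}$ is exactly what the paper invokes, and the substitution into Theorem~\ref{theorem:dimension} is immediate.

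The first bullet, however, has a genuine quantitative gap: your chain of bounds loses a factor of $2$ and does not cover the stated range of $m$. Writing $N=k+n$ and $t=\lfloor\log_2(N^2-N+2)\rfloor$, the identity $A(N,4)=A(N-1,3)$ followed by the Gilbert bound $A(N-1,3)\geq 2^{N-1}/V(N-1,2)$ with $V(N-1,2)=(N^2-N+2)/2$ gives only
$A(N,4)\geq 2^{N}/(N^2-N+2)$, and since $N^2-N+2\geq 2^{t}$ this quantity is \emph{at most} $2^{N-t}$ (and strictly less unless $N^2-N+2$ is a power of two). Your ``short check'' using $N^2-N+2<2^{t+1}$ therefore only yields $A(N,4)>2^{N-t-1}$, which verifies the hypothesis $m+1\leq A(N,4)$ of Theorem~\ref{theorem:dimension} merely for $m\leq 2^{N-t-1}$ --- half the threshold claimed in the corollary. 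The paper instead uses the Varshamov bound for \emph{linear} codes: $A(l,4)\geq 2^{r}$ for the largest integer $r$ with $2^{r}<2^{l}/V(l-1,2)=2^{l+1}/(l^2-l+2)$; the numerator $2^{l+1}$ (rather than your $2^{l}$) is precisely the extra factor of $2$ needed to conclude $A(l,4)\geq 2^{l-\lfloor\log_2(l^2-l+2)\rfloor}$. To repair your argument, either quote the Varshamov bound directly, or keep your parity-extension identity but pair it with the stronger Hamming-code estimate $A(N-1,3)\geq 2^{(N-1)-\lceil\log_2 N\rceil}$ and verify that this dominates $2^{N-t}$ for the relevant $N$ (which requires a separate, non-trivial check for small $N$).
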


These results show that, in all cases of practical interest, where $m$ is less than exponential in $k+n$, 
the dimension of the CRBM model is indeed equal to the number of model parameters. 
In all these cases, almost every conditional distribution that can be represented by the model 
is represented by at most finitely many different choices of parameters. 

On the other hand, the dimension alone is not very informative about the ability of a model to approximate target distributions. 
In particular, it may be that a high dimensional model covers only a tiny fraction of the set of all conditional distributions, or also that a low dimensional model can approximate any target conditional relatively well. 
We address the minimal dimension and number of parameters of a universal approximator in the next section. 
In the subsequent section we address the approximation errors depending on the number of parameters.

\subsection{Universal Approximation} 
\label{sec:universal_approximation}

In this section we ask for the smallest number of hidden units $m$ for which the model $\RBM_{n,m}^k$ can approximate every conditional distribution from $\Del{k}{n}$ arbitrarily well. 

Note that each conditional distribution $p(y|x)$ can be identified with the set of joint distributions of the form $r(x,y)=q(x) p(y|x)$, with strictly positive marginals $q(x)$. 
In particular, by fixing a marginal distribution, we obtain an identification of $\Del{k}{n}$ and a subset of $\Delta_{k+n}$. 
Figure~\ref{Figure2} illustrates this identification in the case $n=k=1$ and $q\equiv \frac{1}{2}$. 

\begin{figure}
	\centering
	\setlength{\unitlength}{1cm}
	\begin{picture}(5,5.25)(0,-.35)
	\put(0,0){\includegraphics[width=5cm]{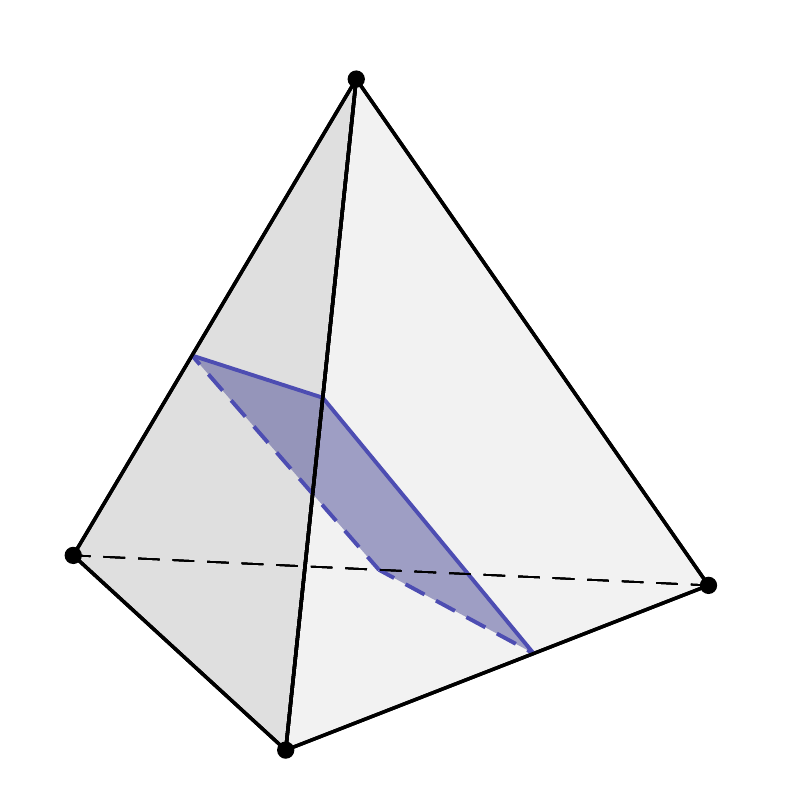}}
	\put(-.1,1.1){$\delta_{00}$}
	\put(1.5,-.25){$\delta_{10}$}
	\put(4.75,1){$\delta_{01}$}
	\put(2.5,4.5){$\delta_{11}$}
	\put(2.6,2){$\tfrac12\Del{1}{1}$}
	\put(3.75,3){$\Delta_{2}$}
	\end{picture}
	\caption{
		The polytope of conditional distributions~$\Del{1}{1}$ embedded in the probability simplex~$\Delta_2$.
	}\label{Figure2}
\end{figure}

This implies that universal approximators of joint probability distributions define universal approximators of conditional distributions. We know that $\RBM_{n+k,m}$ is a universal approximator whenever $m\geq \frac{1}{2} 2^{k+n} -1$~\citep[see][]{Montufar2011}, and therefore:

\begin{proposition}
	\label{proposition:universal}
	The model $\RBM_{n,m}^k$ can approximate every conditional distribution from $\Del{k}{n}$ arbitrarily well whenever $m\geq \frac{1}{2}2^{k+n}-1$.
\end{proposition}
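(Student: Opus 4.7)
The plan is to exploit the identification of $\Del{k}{n}$ with a slice of $\Delta_{k+n}$ mentioned immediately before the proposition, combined with the cited universal approximation result for RBM joint probability models.

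First, I would verify the key algebraic compatibility: if $r\in\RBM_{k+n,m}$ is a joint distribution, then its conditional $r(y|x)=r(x,y)/\sum_{y'} r(x,y')$ lies in $\RBM_{n,m}^k$. To see this, write the joint RBM weights as a block matrix $U=[V\mid W]\in\R^{m\times(k+n)}$ and the visible bias as $[a;b]$. Substituting into the RBM formula and dividing by the row-sum, the factors involving $a^\top x$ cancel out of numerator and denominator, and the resulting expression matches Definition~\ref{definitionCRBM} exactly with parameters $(V,W,b,c)$. Thus conditioning an element of $\RBM_{k+n,m}$ on its first $k$ coordinates always produces a member of $\RBM_{n,m}^k$.

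Next, for a target conditional $p(\cdot|\cdot)\in\Del{k}{n}$, I would choose a strictly positive marginal $q\in\Delta_k$ (for concreteness the uniform one, $q\equiv 2^{-k}$) and form the joint distribution $r(x,y):=q(x)\,p(y|x)\in\Delta_{k+n}$. Since $m\geq \tfrac{1}{2}2^{k+n}-1$, the result of~\citet{Montufar2011} invoked in the excerpt guarantees that $\RBM_{k+n,m}$ is a universal approximator of joint distributions on $\{0,1\}^{k+n}$. So for any $\varepsilon>0$, there exists $\tilde r\in\RBM_{k+n,m}$ with $\|\tilde r - r\|_\infty<\varepsilon$. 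Because $\RBM_{k+n,m}$ consists of strictly positive distributions, we may also arrange that the marginal $\tilde q(x):=\sum_{y'}\tilde r(x,y')$ is strictly positive for every $x$.

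Finally, I would transfer the joint approximation to a conditional approximation. By step one, $\tilde p(y|x):=\tilde r(x,y)/\tilde q(x)\in\RBM_{n,m}^k$. On the finite set $\{0,1\}^k\times\{0,1\}^n$ the conditioning map $r\mapsto r(\cdot|\cdot)$ is continuous at every joint with strictly positive marginal, and $r$ itself has marginal $q>0$ bounded away from zero. Hence for $\varepsilon$ small enough, $\tilde p$ is arbitrarily close to $p$ in any norm on $\Del{k}{n}$, which completes the proof. The only subtlety is this continuity transfer, but it is routine on the finite support with the uniform bound $\tilde q(x)\geq q(x)-\varepsilon' \geq 2^{-k}/2$ obtained by choosing $\varepsilon$ small; no further combinatorial work is needed since all the heavy lifting is done by the joint universal approximation theorem.
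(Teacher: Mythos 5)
Your proposal is correct and follows essentially the same route as the paper, which derives Proposition~\ref{proposition:universal} directly from the identification of $\Del{k}{n}$ with a slice of $\Delta_{k+n}$ (via a fixed strictly positive marginal) together with the universal approximation result for $\RBM_{k+n,m}$; your write-up merely makes explicit the block decomposition of the weights, the cancellation of the input bias, and the continuity of the conditioning map, all of which the paper leaves implicit.
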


This improves previous results by~\citet{Younes1996109} and~\citet{Maaten2011}. 
On the other hand, since conditional models do not need to model the input-state distribution, 
in principle it is possible that $\RBM_{n,m}^k$ is a universal approximator even if $\RBM_{n+k,m}$ is not a universal approximator. 
In fact, we obtain the following improvement of Proposition~\ref{proposition:universal}, which does not follow from corresponding results for RBM probability models: 

\begin{theorem}
	\label{theorem:universal}
	The model $\RBM_{n,m}^k$ can approximate every conditional distribution from $\Del{k}{n}$ arbitrarily well whenever
	\begin{equation*}
	\renewcommand*{\arraystretch}{1.2}
	m\geq
	\left\{
	\begin{matrix*}[l]
	\frac{1}{2} 2^{k} (2^n-1) ,   	 	 & \text{if $k\geq 1$}\\
	\frac{3}{8} 2^{k} (2^n -1) +1  ,	 & \text{if $k\geq 3$}\\
	\frac{1}{4} 2^{k} (2^n-1 + 1/30) ,   & \text{if $k\geq 21$}
	\end{matrix*}
	\right. .
	\end{equation*}
	In fact, the model $\RBM_{n,m}^k$ can approximate every conditional distribution from $\Del{k}{n}$ arbitrarily well whenever
	$m \geq  2^{k} K(r) (2^n-1 ) + 2^{S(r)}P(r)$, where $r$ is any natural number satisfying $k\geq 1+\cdots +r =:S(r)$,
	and $K$ and $P$ are functions
	(defined in Lemma~\ref{lemma:starpacking} and Proposition~\ref{proposition:secondp}) which tend to approximately $0.2263$ and $0.0269$, respectively, as $r$ tends to infinity.
\end{theorem}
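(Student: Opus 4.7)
The plan is to reduce universal approximation of CRBMs to the universal approximation technique for ordinary RBMs, exploiting that $\RBM_{n,m}^k$ is a family of $2^k$ RBMs whose shared parameters $(W,b)$ are supplemented by input-dependent hidden biases $Vx+c$. Recall the key fact for RBMs: with $2^{n-1}-1$ hidden units plus a free visible bias $b$, the model realizes every mixture distribution with $2^{n-1}$ components placed along a perfect matching of the hypercube $\{0,1\}^n$, hence is a universal approximator of $\Delta_n$. In the CRBM setting, the bias $b$ is shared across all $2^k$ inputs, so the base of this mixture construction is common to every conditional unless additional hidden units are spent to compensate. For any prescribed subset $\Xcal_j\subseteq\{0,1\}^k$, I can use large entries of $V_j$ and a suitable $c_j$ to make a hidden unit effectively active only on $\Xcal_j$ and saturated on its complement, so the design problem reduces to the combinatorial choice of activation sets.

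For the first bound $m\geq \frac{1}{2}2^k(2^n-1)=2^{k+n-1}-2^{k-1}$, I would pair the $2^k$ inputs along a perfect matching of $\{0,1\}^k$ into $2^{k-1}$ disjoint pairs and, for each pair $\{x,x'\}$, allocate one hidden unit with activation set exactly $\{x,x'\}$ to absorb the input-dependent shift of the mixture base. Separately, for each input $x$ I would allocate a private pool of $2^{n-1}-1$ hidden units active only on $\{x\}$ that implement the pair-swap mixture construction for the conditional $p(\cdot|x)$. The total is $2^{k-1}+2^k(2^{n-1}-1)=\frac{1}{2}2^k(2^n-1)$, and a limit in which the $V$-entries diverge makes these activations crisp and delivers universal approximation of $\Del{k}{n}$.

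For the improved coefficients $\frac{3}{8}$, $\frac{1}{4}$, and the general bound $2^k K(r)(2^n-1)+2^{S(r)}P(r)$, I would replace the pairs of the first construction by Hamming stars of radius $r$ in $\{0,1\}^k$, leveraging that the saturation pattern of a hidden unit in the input cube is a threshold set and in particular can be any such star. Lemma~\ref{lemma:starpacking} provides an efficient packing of $\{0,1\}^k$ by radius-$r$ stars with density $K(r)$, allowing the pair-swap pool to be shared across all inputs inside a common star and producing the main term $2^k K(r)(2^n-1)$. The sharing introduces a coupling between the conditionals at distinct inputs inside the same star, because a shared unit contributes the same $W$-action to each of them; a small number of additional per-star correction units, counted by $P(r)$ via Proposition~\ref{proposition:secondp}, restores the independence of the conditionals and yields the stated bound.

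The main obstacle is precisely this decoupling of conditional distributions when hidden units are shared across inputs. The RBM pair-swap construction produces an independent probability shift per hidden unit; once a unit acts on several inputs simultaneously, one must arrange the shared pair-swaps and the correction units so that the induced shift at each input in the star can still be prescribed independently of the others. I would handle this through a two-scale limit, sending the $V$-entries to infinity fast enough to crispify the star activations while the $W$-entries and correction biases are tuned to match arbitrary target conditionals with vanishing error. Verifying that the number of corrections per star admits the asymptotics $K(r)\to 0.2263$ and $P(r)\to 0.0269$ is the combinatorial core of the theorem, and naturally produces the explicit constants $\frac{1}{2}$, $\frac{3}{8}$, and $\frac{1}{4}$ from the cases $r=0,1,2$.
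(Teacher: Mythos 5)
Your first bound is plausible and is essentially an alternative bookkeeping of the paper's $r=1$ case: you give each input a private pool of $2^{n-1}-1$ units plus one shared ``absorber'' per input pair, whereas the paper lets every one of the $2^n-1$ units assigned to a pair act on \emph{both} inputs simultaneously, with independently tunable mixture weights $\lambda^x,\lambda^{x'}$ (Lemma~\ref{lemma:sharing} and Corollary~\ref{implication1}). The mechanism that makes this simultaneous-but-independent control possible is the crux you are missing: the product distribution $s$ in a sharing step $p\mapsto\lambda p+(1-\lambda)p\ast s$, restricted to a set of \emph{affinely independent} input points, can be made proportional to any positive vector. This is why the paper's ``stars'' are radius-\emph{one} Hamming balls sitting inside $r$-dimensional cylinder sets (hence $r+1$ affinely independent points), not radius-$r$ Hamming balls. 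Your proposal to ``replace the pairs by Hamming stars of radius $r$'' breaks down for $r\geq 2$: a radius-$r$ ball has $\sum_{j\le r}\binom{k}{j}\gg k+1$ points, which cannot be affinely independent, so a single hidden unit cannot prescribe independent probability shifts at all of them, and no amount of ``two-scale limits'' on $V$ versus $W$ recovers the lost degrees of freedom.

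Relatedly, you misread what $K(r)$ and $P(r)$ count. $2^{k-S(r)}F(r)=2^kK(r)$ is the length of a \emph{sequence} of radius-one stars of successively smaller dimension, built by recursively splitting the uncovered residue $D^{(j_1,\dots,j_i)}\setminus\bigcup_i B^{(\dots),i}$ into lower-dimensional cylinder sets (Lemma~\ref{lemma:starpacking}); each star in the sequence consumes $2^n-1$ hidden units. The term $2^{S(r)}P(r)=R(r)$ counts \emph{resets} (Corollary~\ref{implication2}): sharing steps that return the rows of a cylinder set to a Dirac delta before that cylinder is processed, needed because adjusting the conditionals on a star $B\cap C$ uncontrollably perturbs the rows indexed by $C\setminus B$. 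These are not ``correction units restoring independence of the conditionals within a star''---within a star no correction is needed, by affine independence. The ordering constraint on the sequence (each star's enclosing cylinder must avoid all previously placed stars) is what makes the count close, and your static allocation of units to stars does not engage with it. Finally, the constants $\tfrac12,\tfrac38,\tfrac14$ arise from $r=1$, $r=2$, and $r=6$ (whence the thresholds $k\geq S(1)=1$, $k\geq S(2)=3$, $k\geq S(6)=21$), not from $r=0,1,2$.
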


We note the following weaker but practical version of Theorem~\ref{theorem:universal}: 

\begin{corollary}
	\label{theorem:universal_shortversion}
	Let $k\geq 1$. The model $\RBM_{n,m}^k$ can approximate every conditional distribution from $\Del{k}{n}$ arbitrarily well whenever
	$m\geq \frac{1}{2} 2^{k} (2^n-1) = \frac{1}{2}2^{k+n} - \frac{1}{2}2^k$.
\end{corollary}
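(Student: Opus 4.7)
The plan is essentially to quote Theorem~\ref{theorem:universal}. Its first bullet already asserts that for $k \geq 1$, the model $\RBM_{n,m}^k$ is a universal approximator of $\Del{k}{n}$ whenever $m \geq \tfrac{1}{2} 2^k (2^n - 1)$. The only additional step is the algebraic identity
\begin{equation*}
\tfrac{1}{2} 2^k (2^n - 1) \;=\; \tfrac{1}{2} 2^{k+n} - \tfrac{1}{2} 2^k ,
\end{equation*}
which is nothing more than distributing. So the proof is literally one line: invoke the theorem and expand. The corollary is explicitly advertised in the text as a ``weaker but practical version'' of Theorem~\ref{theorem:universal}, and no new content needs to be introduced at this level.

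Since there is nothing to do at the corollary itself, let me describe briefly how I would attempt the upstream statement it inherits. The naive bound from Proposition~\ref{proposition:universal} is $m \geq \tfrac{1}{2} 2^{k+n} - 1$, obtained by viewing the CRBM as a block-normalized $\RBM_{k+n,m}$ and invoking the RBM universal approximator of \citet{Montufar2011}. The additional saving of roughly $\tfrac{1}{2}2^k$ hidden units should come from the fact that a conditional model need only match $2^k$ output distributions and not the input marginal. A natural construction is to partition $\{0,1\}^k$ into $2^{k-1}$ disjoint pairs $\{x,x'\}$ and assign to each pair a dedicated block of $2^n - 1$ hidden units, choosing the input weights $V$ so that switching from $x$ to $x'$ flips the effective hidden bias within the block. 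This lets the same output weights $W$ be reused to realize two different families of product distributions inside the block, and the Mont\'ufar--Ay ``product-mixture sharpening'' can then be applied independently to approximate the paired targets $p(\cdot|x)$ and $p(\cdot|x')$.

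The main obstacle in carrying out such a plan is controlling cross-talk between blocks: for an input $\tilde x$ outside a given pair, the block's hidden units should not disturb $p(\cdot|\tilde x)$. The standard workaround is a limiting argument that drives $V\tilde x + c$ to $\pm\infty$, saturating the offending hidden units so that their effect on the output reduces to an additive shift of the shared visible bias $b$. Since $b$ is shared across all inputs, one then has to verify that the shifts contributed by all blocks for every input $\tilde x$ can be simultaneously absorbed without destroying the independent approximation inside each pair. This simultaneous-absorption bookkeeping is where I expect the technical work of the theorem to lie, and the refined bullets ($3/8$ for $k \geq 3$, $1/4$ for $k \geq 21$) presumably come from replacing pairs by larger clusters of inputs controlled by the functions $K(r)$ and $P(r)$ mentioned in the theorem statement.
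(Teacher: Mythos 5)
Your proof of the corollary is correct and is exactly what the paper does: the statement is just the first bullet of Theorem~\ref{theorem:universal} (equivalently the $r=1$ case of the general bound $m^{(r)}_{k,n}=2^{k-S(r)}F(r)(2^n-1)+R(r)$, where $S(1)=1$, $F(1)=1$, and no resets are needed) combined with the trivial expansion $\tfrac{1}{2}2^k(2^n-1)=\tfrac{1}{2}2^{k+n}-\tfrac{1}{2}2^k$. Your sketch of the upstream argument is also broadly faithful to the paper's actual proof for $r=1$, which partitions $\{0,1\}^k$ into $2^{k-1}$ one-dimensional cylinder sets (adjacent pairs) and spends $2^n-1$ sharing steps per pair using product distributions supported on $\{\tilde y\}\times C$, the cylinder-set support being what prevents the cross-talk you worry about.
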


These results are significant, because they reduce the bounds following from universal approximation results for probability models by an additive term of order $2^k$, 
which corresponds precisely to the order of parameters needed in order to model the input-state distributions. 

As expected, the asymptotic behavior of the theorem's bound is exponential in the number of input and output units.
This lies in the nature of the universal approximation property. A crude lower bound on the number of hidden units that suffices for universal approximation can be obtained by comparing the number of parameters of the model and the dimension of the conditional polytope:

\begin{proposition}
	\label{proposition:universallower}
	If the model $\RBM_{n,m}^k$ can approximate every conditional distribution from $\Del{k}{n}$ arbitrarily well, then necessarily
	$m \geq \frac{1}{{(n+k+1 )}}(2^k(2^n-1) -n)$.
\end{proposition}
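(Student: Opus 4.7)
The plan is a parameter-counting argument that matches the number of free parameters of $\RBM_{n,m}^k$ against the dimension of the ambient polytope $\Del{k}{n}$.

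First I would count parameters. The parametrization in Definition~\ref{definitionCRBM} uses $V \in \R^{m \times k}$, $W \in \R^{m \times n}$, $b \in \R^n$, and $c \in \R^m$, so the parameter space has dimension $mk + mn + n + m = m(k+n+1) + n$. Since the map from parameters to the conditional distribution $p(\cdot|\cdot)$ is smooth (in fact real-analytic), the image set $\RBM_{n,m}^k \subseteq \Del{k}{n}$ has dimension (as a real semialgebraic set, or equivalently as an analytic image) at most $m(k+n+1)+n$.

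Next I would use the universal approximation hypothesis. By assumption, $\RBM_{n,m}^k$ is dense in $\Del{k}{n}$, which is a polytope of dimension $2^k(2^n-1)$ inside $\R^{2^k \times 2^n}$. Because $\RBM_{n,m}^k$ is the image of a semialgebraic (in fact analytic) map, it is a semialgebraic set, so the standard fact that a proper semialgebraic subset of a manifold has strictly smaller dimension forces $\dim(\RBM_{n,m}^k) \geq \dim(\Del{k}{n}) = 2^k(2^n-1)$. Combining with the upper bound from parameter counting yields
\begin{equation*}
m(k+n+1) + n \;\geq\; 2^k(2^n-1),
\end{equation*}
and rearranging gives the claimed inequality $m \geq \frac{1}{n+k+1}\bigl(2^k(2^n-1) - n\bigr)$.

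There is no real obstacle here; the argument is a soft dimension comparison. The only subtlety to justify carefully is the implication ``dense $\Rightarrow$ full-dimensional,'' for which one invokes the semialgebraic structure of $\RBM_{n,m}^k$ (or, alternatively, notes that the closure of a smoothly parametrized set in $\R^N$ cannot have larger dimension than the parameter space, so a dense subset of a $d$-dimensional polytope must come from a parameter space of dimension at least $d$). Everything else is bookkeeping of the parameter count.
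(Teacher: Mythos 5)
Your argument is correct in substance and arrives at the same inequality, but it justifies the key step differently from the paper. The paper does not invoke semialgebraic geometry: it first reduces the conditional statement to a joint one (Lemma~\ref{lemma:parametercounting}), observing that $\RBM_{n,m}^k$ being a universal approximator of conditionals forces the joint model obtained by appending an arbitrary input distribution (an exponential family of dimension $m(n+k+1)+n+2^k-1$) to be a universal approximator of $\Delta_{k+n}$; it then handles the closure explicitly by writing $\overline{\Ecal}$ as a \emph{finite} union of exponential families, one per support set, and applies Sard's theorem to each piece to conclude that a dimension-deficient model has image of measure zero, hence its closure cannot be the whole simplex. Your route instead works directly in $\Del{k}{n}$ and rests on the facts that $\RBM_{n,m}^k$ is semialgebraic and that a semialgebraic set has the same dimension as its closure. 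That is a valid and arguably cleaner path, but two of your supporting remarks need repair. First, semialgebraicity is not immediate from the formula in Definition~\ref{definitionCRBM}, since $\exp$ is not a semialgebraic function; you need the substitution $t=e^{\theta}$, under which the parametrization becomes a rational map on the positive orthant, and then Tarski--Seidenberg. Second, your parenthetical fallback --- that the closure of a smoothly parametrized set cannot exceed the parameter dimension --- is false in general (an irrational winding is a smooth curve dense in the two-dimensional torus), and the phrase ``a proper semialgebraic subset of a manifold has strictly smaller dimension'' is also not the right statement (an open ball is a proper full-dimensional subset); the fact you actually need is $\dim(\overline{S})=\dim(S)$ for semialgebraic $S$, which does give ``dense $\Rightarrow$ full-dimensional.'' With those corrections your proof stands; the trade-off is that the paper's Sard-based lemma avoids semialgebraic machinery and applies uniformly to marginals of exponential families, while your version is shorter once the semialgebraic structure is granted.
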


The results presented above highlight the fact that CRBM universal approximation may be possible with a drastically smaller number of hidden units than RBM universal approximation, for the same number of visible units. 
However, even with these reductions the universal approximation property requires an enormous number of hidden units. 
In order to provide a more informative description of the approximation capabilities of CRBMs, 
in the next section we investigate how the maximal approximation error decreases as hidden units are added to the model.

\subsection{Maximal Approximation Errors}
\label{sec:approximation_errors}

From a practical perspective it is not necessary to approximate conditional distributions arbitrarily well, but fair approximations suffice. 
This can be especially important if the number of required hidden units grows disproportionately with the quality of the approximation. 
In this section we investigate the maximal approximation errors of CRBMs depending on the number of hidden units. Figure~\ref{figure:approximationerror} gives a schematic illustration of the maximal approximation error of a conditional model. 

\begin{figure}
	\centering
	\includegraphics{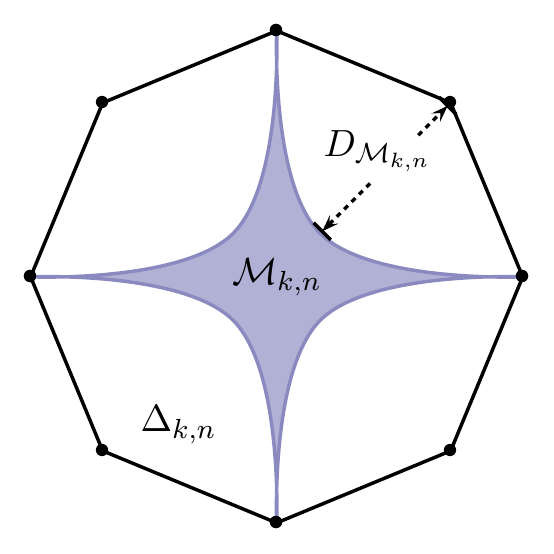}
	\caption{Schematic illustration of the maximal approximation error of a model $\Mcal_{k,n}\subseteq\Delta_{k,n}$. }
	\label{figure:approximationerror}
\end{figure}

The Kullback-Leibler divergence of two probability distributions $p$ and $q$ in $\Delta_{k+n}$ is given by
\begin{align*}
D(p \| q ) 
:=&  \sum_x \sum_y p(x) p(y|x) \log \frac{p(x) p(y|x)}{ q(x)q(y|x)}\\
= & D(p_X  \| q_X ) + \sum_x p(x) D(p(\cdot| x) \| q(\cdot\| x)),
\end{align*}
where $p_X=\sum_{y\in\{0,1\}^n} p(x,y)$ denotes the marginal distribution over $x\in\{0,1\}^k$. 

The divergence of two conditional distributions $p(\cdot|\cdot)$ and $q(\cdot|\cdot)$ in $\Del{k}{n}$ is given by 
\begin{equation*}
D(p(\cdot|\cdot) \| q(\cdot|\cdot) ) := \sum_{x} u_X(x) D(p(\cdot | x) \| q(\cdot|x)),
\end{equation*}
where $u_X$ denotes the uniform distribution over $x$. 
Even if the divergence between two joint distributions does not vanish, the divergence between their conditional distributions may vanish.  

The divergence from a conditional distribution $p(\cdot|\cdot)$ to the set $\Mcal_{k,n}$ of conditional distributions defined by a model of joint probability distributions $\Mcal_{k+n}$ is given by
\begin{equation*}
D(p(\cdot|\cdot)\| \Mcal_{k,n}) := 
\inf_{q\in\Mcal_{k,n}}  D(p(\cdot|\cdot) \| q(\cdot|\cdot) ) 
=
\inf_{q\in\Mcal_{k+n}}  D(u_X p(\cdot|\cdot) \| q )  -  D(u_X \| q_X). 
\end{equation*}

The maximum of the divergence from a conditional distribution to $\Mcal_{k,n}$ satisfies
\begin{equation*}
D_{\Mcal_{k,n}}
: = \max_{p(\cdot|\cdot)\in\Del{k}{n}} D(p(\cdot|\cdot)\|\Mcal_{k,n})
\leq \max_{p\in\Delta_{k+n}} D(p\|\Mcal_{k+n}) =: D_{{\Mcal}_{k+n}}.
\end{equation*}

Hence we can bound the maximal divergence of a CRBM by the maximal divergence of an RBM~\citep[studied in][]{NIPS2011_4380} and obtain the following: 
\begin{proposition}
	\label{proposition:errors}
	If $m\leq 2^{(n+k)-1}-1$, then the divergence from any conditional distribution $p(\cdot|\cdot)\in\Del{k}{n}$ to the model $\RBM_{n,m}^k$ is bounded by
	\begin{equation*}
	D_{\RBM_{n,m}^k}\leq D_{\RBM_{k+n,m}}
	\leq
	(n+k)-\lfloor\log_2(m+1)\rfloor -\frac{m+1}{2^{\lfloor\log_2(m+1)\rfloor}} .
	\end{equation*}
\end{proposition}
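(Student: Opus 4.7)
The statement is a chain of two bounds, and the second (the numerical estimate on $D_{\RBM_{k+n,m}}$) is precisely the RBM maximal-divergence bound of \citet{NIPS2011_4380}, whose hypothesis $m\leq 2^{(n+k)-1}-1$ matches the one assumed here. So my plan is to cite that result for the second inequality and focus the new work entirely on the first inequality $D_{\RBM_{n,m}^k}\leq D_{\RBM_{k+n,m}}$.

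The plan is to exploit the identity stated just before the proposition,
$$D(p(\cdot|\cdot)\|\Mcal_{k,n}) = \inf_{q\in\Mcal_{k+n}}\bigl[ D(u_X p(\cdot|\cdot)\|q) - D(u_X\|q_X)\bigr],$$
with $\Mcal_{k+n}=\RBM_{k+n,m}$ and $\Mcal_{k,n}=\RBM_{n,m}^k$. First I would verify this identity by the chain rule: any $q\in\Mcal_{k+n}$ factors as $q(x,y)=q_X(x)\,q(y|x)$, and expanding $\sum_{x,y} u_X(x)p(y|x)\log[u_X(x)p(y|x)/q(x,y)]$ separates cleanly into $D(u_X\|q_X)+D(p(\cdot|\cdot)\|q(\cdot|\cdot))$. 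Since the conditioning map $q\mapsto q(\cdot|\cdot)$ sends $\RBM_{k+n,m}$ onto $\RBM_{n,m}^k$ (this is the block-wise normalization interpretation noted at the end of Section~\ref{sec:definitions}), taking the infimum over $q$ gives the identity.

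Next, since $D(u_X\|q_X)\geq 0$ uniformly in $q$, dropping this nonnegative term can only enlarge the infimum, so
$$D(p(\cdot|\cdot)\|\RBM_{n,m}^k)\;\leq\;\inf_{q\in\RBM_{k+n,m}} D(u_X p(\cdot|\cdot)\|q)\;=\;D(u_X p(\cdot|\cdot)\|\RBM_{k+n,m}).$$
Then I would take the maximum over $p(\cdot|\cdot)\in\Del{k}{n}$ on the left, and on the right note that the joint distributions $u_X p(\cdot|\cdot)$ form a subset of $\Delta_{k+n}$, so their maximum divergence is bounded by $D_{\RBM_{k+n,m}}$. This yields $D_{\RBM_{n,m}^k}\leq D_{\RBM_{k+n,m}}$, after which the cited RBM bound completes the chain.

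There is no substantive obstacle: the entire proof hinges on the single observation that the subtracted term $D(u_X\|q_X)$ is nonnegative and hence may be discarded in an upper bound. The only thing I would be careful about is that the infimum in the identity really ranges over joints whose conditional factors cover all of $\RBM_{n,m}^k$, which follows from the definition of the CRBM model as the image of the corresponding RBM under conditioning; and that enlarging the maximization domain from $\{u_X p(\cdot|\cdot):p(\cdot|\cdot)\in\Del{k}{n}\}$ to all of $\Delta_{k+n}$ is valid, which is immediate since the former is a subset of the latter.
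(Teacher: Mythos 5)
Your proof is correct and takes essentially the same route as the paper: the first inequality is exactly the comparison $D_{\Mcal_{k,n}}\leq D_{\Mcal_{k+n}}$ displayed just before the proposition, obtained, as you do, from the chain-rule decomposition of $D(u_X p(\cdot|\cdot)\|q)$ and discarding the nonnegative term $D(u_X\|q_X)$ inside the infimum. The second inequality is then the cited RBM maximal-divergence bound, whose hypothesis $m\leq 2^{(n+k)-1}-1$ you correctly match.
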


This proposition implies the universal approximation result from Proposition~\ref{proposition:universal} as the special case with vanishing approximation error, 
but it does not imply Theorem~\ref{theorem:universal} in the same way. 
Taking more specific properties of the conditional model into account, we can improve the proposition and obtain the following:  

\begin{theorem}
	\label{theorem:errors}
	Let $l\in [n]$.
	The divergence from any conditional distribution in $\Del{k}{n}$ to the model $\RBM_{n,m}^k$ is bounded from above by
	\begin{equation*}
	\renewcommand*{\arraystretch}{1.2}
	D_{\RBM_{n,m}^k}\leq n-l, \quad \text{whenever } m\geq
	\left\{
	\begin{matrix*}[l]
	\frac{1}{2} 2^{k} (2^l-1) ,   	 	 & \text{if $k\geq 1$}\\
	\frac{3}{8} 2^{k} (2^l -1) +1  ,	 & \text{if $k\geq 3$}\\
	\frac{1}{4} 2^{k} (2^l-1 + 1/30) ,   & \text{if $k\geq 21$}
	\end{matrix*}
	\right. .
	\end{equation*}
	In fact, the divergence from any conditional distribution in $\Del{k}{n}$ to $\RBM_{n,m}^k$ is bounded from above by
	$D_{\RBM_{n,m}^k} \leq n-l$, where $l$ is the largest integer with $m\geq 2^{k-S(r)}F(r)(2^l-1)+R(r)$.
\end{theorem}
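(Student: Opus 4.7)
The plan is to reduce the bounded-error problem on $n$ outputs to the exact universal-approximation problem on $l$ outputs (established by Theorem~\ref{theorem:universal}), paying the divergence overhead of at most $n-l$ by letting the remaining $n-l$ output units be marginally uniform.

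First, I would split the output coordinates as $y=(y_1,y_2)$ with $y_1\in\{0,1\}^l$ and $y_2\in\{0,1\}^{n-l}$, and for a given target $p\in\Del{k}{n}$ define the candidate approximation $\tilde p(y_1,y_2|x):=p_1(y_1|x)\,u(y_2)$, where $p_1(y_1|x)=\sum_{y_2}p(y_1,y_2|x)$ is the marginal and $u$ is the uniform distribution on $\{0,1\}^{n-l}$. By the chain rule for relative entropy (applied inside each fixed $x$),
\begin{equation*}
D\bigl(p(\cdot|x)\bigm\|\tilde p(\cdot|x)\bigr)=\sum_{y_1}p_1(y_1|x)\,D\bigl(p(\cdot|y_1,x)\bigm\|u\bigr)\leq n-l,
\end{equation*}
since the divergence from any distribution on $\{0,1\}^{n-l}$ to the uniform distribution is at most $n-l$ (in bits). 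Averaging over $x$ gives $D(p(\cdot|\cdot)\|\tilde p(\cdot|\cdot))\leq n-l$.

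The second step is to realize $\tilde p$ as an element of $\RBM_{n,m}^k$ whenever the marginal conditional $p_1(y_1|x)$ lies in $\RBM_{l,m}^k$. Given parameters $(V,W_1,b_1,c)$ for $\RBM_{l,m}^k$ that represent $p_1$, I would lift them to parameters for $\RBM_{n,m}^k$ by appending zero columns to the weight matrix (so $W=[W_1\ \mathbf{0}]\in\R^{m\times n}$) and zero entries to the visible bias (so $b=(b_1,\mathbf{0})\in\R^n$). With this choice the Gibbs weights factor as a product over $y_2$-independent terms and a uniform factor $\prod_{j>l}1$, the partition function factors by $2^{n-l}$, and a direct calculation shows that the resulting conditional distribution is exactly $p_1(y_1|x)\,u(y_2)=\tilde p(y|x)$.

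Finally, apply Theorem~\ref{theorem:universal} with $n$ replaced by $l$: whenever $m$ meets the stated bounds (with $l$ in place of $n$), $\RBM_{l,m}^k$ contains $p_1(\cdot|\cdot)$ for \emph{every} target $p$, hence the lifted $\tilde p\in\RBM_{n,m}^k$ and $D_{\RBM_{n,m}^k}\leq n-l$. The three explicit regimes in the statement and the refined bound involving $S(r),F(r),R(r)$ are inherited verbatim from the three regimes and the refined bound of Theorem~\ref{theorem:universal} after this substitution $n\mapsto l$. The main obstacle I anticipate is not conceptual but bookkeeping: one must verify carefully that zeroing the weights and biases of the extra $n-l$ units really produces the product form $p_1\cdot u$ after normalization, and that this lifting preserves $m$ exactly (no extra hidden units are used), so that the hidden-unit counts from Theorem~\ref{theorem:universal} transfer without loss.
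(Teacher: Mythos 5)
Your proposal is correct, and its overall architecture coincides with the paper's: both approximate the target $p(\cdot|x)$ by $p_1(y_1|x)\,u(y_2)$ (its marginal on the first $l$ output bits times the uniform distribution on the rest), bound the resulting divergence by $n-l$, and then show that this coarsened family of conditionals lies in the closure of $\RBM_{n,m}^k$ under the stated hidden-unit counts. The paper phrases the first half via partition models and Lemma~\ref{lemma:divergprod} (the Cartesian-product divergence bound), which is the same computation as your chain-rule argument. Where you genuinely diverge is in the representability step: the paper proves Proposition~\ref{theorem:universalpartition} by re-running Algorithm~\ref{alg1} with modified sharing steps $p'=\lambda p+(1-\lambda)u_z$ targeting uniform distributions on cylinder sets, whereas you invoke Theorem~\ref{theorem:universal} for $\RBM_{l,m}^k$ as a black box and lift its parameters to $\RBM_{n,m}^k$ by zero-padding $W$ and $b$, which manifestly preserves $m$ and produces exactly the product form $p_1\cdot u$. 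Your route is more economical and easier to verify, since it avoids re-examining the combinatorics of the sharing-step algorithm; the paper's route is marginally more self-contained in that it never needs the embedding $\RBM_{l,m}^k\hookrightarrow\RBM_{n,m}^k$. One small point of care (shared with the paper): Theorem~\ref{theorem:universal} gives approximation in closure rather than exact containment of $p_1$, so the final step should pass to a sequence $q_1^{(j)}\to p_1$ and use that $q\mapsto D(p\|q)$ is continuous near $\tilde p$ because $\supp(p)\subseteq\supp(\tilde p)$; this is routine and does not affect the bound.
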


This theorem implies the universal approximation result from Theorem~\ref{theorem:universal} as the special case with vanishing approximation error. 
We note the following weaker but practical version of Theorem~\ref{theorem:errors} (analogue to Corollary~\ref{theorem:universal_shortversion}):

\begin{corollary}
	Let $k\geq 1$ and $l\in [n]$.
	The divergence from any conditional distribution in $\Del{k}{n}$ to the model $\RBM_{n,m}^k$ is bounded from above by
	$D_{\RBM_{n,m}^k}\leq n-l$, whenever  $m\geq \frac{1}{2} 2^{k} (2^l-1)$.
\end{corollary}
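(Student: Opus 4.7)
The plan is to observe that this corollary is simply the first-case specialization of Theorem~\ref{theorem:errors}, and then record the conceptual picture that makes the claim intuitive. Theorem~\ref{theorem:errors} provides three sufficient conditions on $m$, valid under increasingly strong hypotheses on $k$. The corollary extracts the weakest of these, which already holds for every $k \geq 1$, namely $m \geq \tfrac{1}{2}2^k(2^l - 1)$, and drops reference to the sharper cases. Thus a one-line proof suffices: invoke the $k \geq 1$ case of Theorem~\ref{theorem:errors}.

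For context, the mechanism making Theorem~\ref{theorem:errors} work is the one underlying Theorem~\ref{theorem:universal}, applied with $l$ in place of $n$. The universal approximation construction behind Theorem~\ref{theorem:universal} uses roughly $\tfrac{1}{2}2^k(2^l-1)$ hidden units to approximate arbitrarily well every conditional distribution of the form $p(\cdot\mid x)$ supported (for each $x$) on a prescribed $l$-dimensional face $F_x \subseteq \{0,1\}^n$. Given any target $p(\cdot\mid\cdot) \in \Del{k}{n}$, one chooses, for each input $x$ independently, the $l$-face $F_x$ of maximum $p(\cdot\mid x)$-mass among the $2^{n-l}$ parallel $l$-faces partitioning $\{0,1\}^n$, approximates $p(\cdot\mid x)$ by its restriction to $F_x$ (renormalized), and obtains the classical bound $D(p(\cdot\mid x) \,\|\, \tilde p(\cdot\mid x)) \leq n - l$ uniformly in $x$. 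Averaging over $x$ with the uniform weight $u_X$ yields $D_{\RBM_{n,m}^k} \leq n - l$.

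Since this reasoning is exactly what yields Theorem~\ref{theorem:errors}, no additional argument is required for the corollary: it follows formally by substituting the first-case bound from the theorem. The only "obstacle" would have been proving Theorem~\ref{theorem:errors} itself, which combines the face-partitioning approximation step above with the CRBM constructions developed for Theorem~\ref{theorem:universal}; both have been established earlier and can be taken as given here.
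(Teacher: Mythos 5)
Your proof is correct and matches the paper's: the corollary is obtained by reading off the $k\geq 1$ case of Theorem~\ref{theorem:errors}, exactly as the paper intends (the corollary is stated without separate proof, as the analogue of Corollary~\ref{theorem:universal_shortversion}). One caveat about your background paragraph, which fortunately is not load-bearing for the corollary: the mechanism behind Theorem~\ref{theorem:errors} is not approximation by the renormalized restriction of $p(\cdot|x)$ to a best $l$-face --- that would give $D(p(\cdot|x)\,\|\,\tilde p(\cdot|x))=+\infty$ whenever $p(\cdot|x)$ places mass outside the chosen face, since the Kullback-Leibler divergence is infinite when the support of the first argument is not contained in that of the second. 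The paper instead works with products of \emph{partition models} (Lemma~\ref{lemma:divergprod} and Proposition~\ref{theorem:universalpartition}): the approximating conditional is constant on each of the $2^l$ cylinder blocks of size $2^{n-l}$, hence has full support, and the bound $n-l$ is the standard partition-model divergence bound, combined with the fact that the CRBM can approximate the $2^k$-fold product of such partition models using $\tfrac12 2^k(2^l-1)$ hidden units.
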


Given an error tolerance, we can use these bounds to find a sufficient number of hidden units that guarantees approximations within this error tolerance. 

In plain terms,  the results presented above show that the worst case approximation errors of CRBMs decrease at least with the logarithm of the number of hidden units. On the other hand, in practice one is not interested in approximating all possible conditional distributions, but only special classes. 
One can expect that CRBMs can approximate certain classes of conditional distributions better than others. 
This is the subject of the next section.

\section{Representation of Special Classes of Conditional Models}
\label{sec:expressive}

In this section we ask about the classes of conditional distributions that can be compactly represented by CRBMs and whether CRBMs can approximate interesting conditional distributions using only a moderate number of hidden units. 

The first part of the question is about familiar classes of conditional distributions that can be expressed in terms of CRBMs, 
which in turn would allow us to compare CRBMs with other models and to develop a more intuitive picture of Definition~\ref{definitionCRBM}. 

The second part of the question clearly depends on the specific problem at hand. 
Nonetheless, some classes of conditional distributions may be considered generally interesting, as they contain solutions to all instances of certain classes of problems. 
An example is the class of deterministic conditional distributions, which suffices to solve any Markov decision problem in an optimal way. 

\subsection{Representation of Conditional Markov Random Fields}
\label{sec:hierarchicalconditionals}

In this section we discuss the ability of CRBMs to represent conditional Markov random fields, depending on the number of hidden units that they have. 
The main idea is that each hidden unit of an RBM can be used to model the pure interaction of a group of visible units. 
This idea appeared in previous work by~\citet{Younes1996109}, in the context of universal approximation. 

\begin{definition}
	Consider a simplicial complex $I$ on $[N]$; that is, a collection of subsets of $[N]=\{1,\ldots, N\}$ such that $A\in I$ implies $B\in I$ for all $B\subseteq A$, and $\emptyset\in I$. 
	The random field $\Ecal_I\subseteq\Delta_N$ with interactions $I$ is the set of probability distributions of the form 
	\begin{equation*}
	p(x) = \frac{1}{Z}\exp\Big( \sum_{A\in I} \theta_A \prod_{i\in A}x_i \Big), \quad\text{for all $x = (x_1,\ldots, x_N) \in \{0,1\}^N$}, 
	\end{equation*} 
	with normalization $Z=\sum_{x'\in\{0,1\}^N}\exp(\sum_{A\in I} \theta_A\prod_{i\in A}x'_i)$ and parameters $\theta_A \in \mathbb{R}$, $A\in I$. 
\end{definition}

\begin{figure}
	\centering
	\includegraphics[scale=.7]{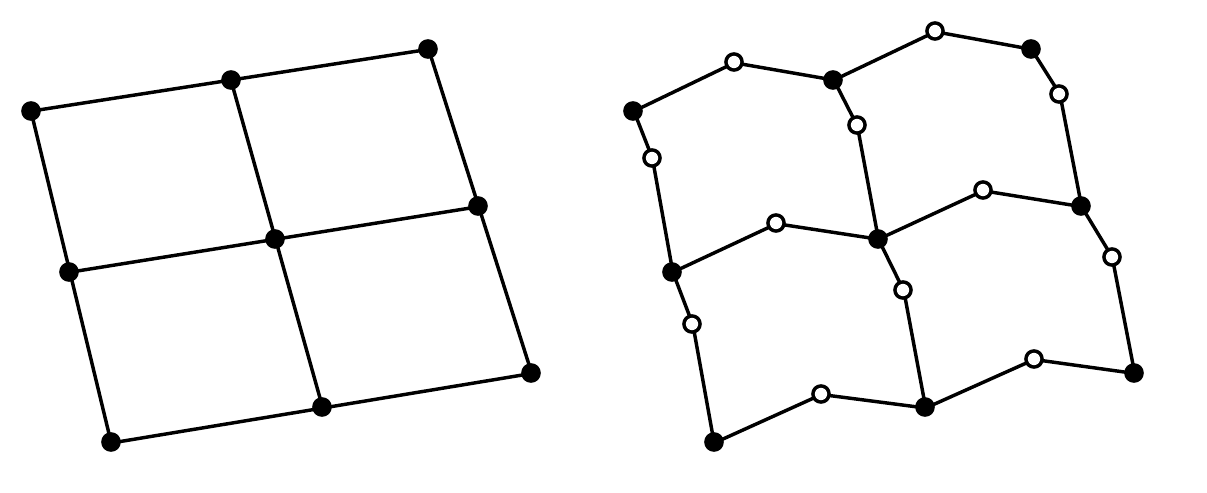}
	\caption{Example of a Markov random field and a corresponding RBM architecture that can represent it. }
	\label{figure:interactions}
\end{figure}

We obtain the following result:
\begin{theorem}
	\label{theorem:interaction}
	Let $I$ be a simplicial complex on $[k+n]$. 
	If $m \geq |\{A\in I\colon A\not\subseteq [k], A\neq \{k+1\},\ldots, A\neq \{k+n\}, A\neq \emptyset  \}|$, 
	then the model $\RBM^k_{n,m}$ can represent every conditional distribution of $(x_{k+1},\ldots, x_{k+n})$, given $(x_1,\ldots, x_k)$, that can be represented by $\Ecal_I\subseteq\Delta_{k+n}$. 
\end{theorem}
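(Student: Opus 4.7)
The plan is to construct a CRBM that realizes the given conditional distribution by dedicating one hidden unit to each interaction of $I$ that is ``conditionally relevant,'' with that unit designed to add a single monomial to the energy in the limit of infinite weights. Writing the combined variable as $v=(x,y)\in\{0,1\}^{k+n}$ (so $v_i=x_i$ for $i\in[k]$ and $v_{k+l}=y_l$ for $l\in[n]$) and a target joint in $\Ecal_I$ as $p(x,y)=\exp\bigl(\sum_{A\in I}\theta_A\prod_{i\in A}v_i\bigr)/Z$, I first observe that interactions $A\subseteq[k]$ contribute factors depending only on $x$ and cancel in the normalization of $p(y|x)$, while singletons $A=\{k+l\}$ can be absorbed exactly into the CRBM output bias $b_l$. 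What must be produced by hidden units is therefore the collection $I^\star:=\{A\in I\colon A\not\subseteq[k],\,|A|\geq 2\}$, whose cardinality matches the lower bound on $m$ in the statement.

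Next, for each $A\in I^\star$ I set up a single-unit gadget. Picking a partition $A=A^+\sqcup A^-$, I set $V_{j,i}=+s$ for $i\in A^+\cap[k]$, $V_{j,i}=-s$ for $i\in A^-\cap[k]$, $W_{j,l}=+s$ for $k+l\in A^+$, $W_{j,l}=-s$ for $k+l\in A^-$, the remaining entries of row $j$ equal to zero, and $c_j=-s|A^+|+t$. A direct check shows that $V_{j,\cdot}x+W_{j,\cdot}y+c_j$ equals $t$ exactly on configurations with $v_i=1$ for $i\in A^+$ and $v_i=0$ for $i\in A^-$, and is at most $t-s$ otherwise, so as $s\to\infty$ the contribution $\log(1+\exp(V_{j,\cdot}x+W_{j,\cdot}y+c_j))$ converges pointwise to $\log(1+e^t)\prod_{i\in A^+}v_i\prod_{i\in A^-}(1-v_i)$. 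Expanding the complementation factors yields a polynomial supported only on subsets $C\subseteq A$ with $A^+\subseteq C$, and the coefficient on the top monomial $\prod_{i\in A}v_i$ is $(-1)^{|A^-|}\log(1+e^t)$. By restricting to $A^-\in\{\emptyset,\{i^\star\}\}$ for some $i^\star\in A$ and tuning $t$, I realize any prescribed real coefficient on the top monomial, with all ``leakage'' confined to at most one proper subset of $A$.

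I then sweep through $I^\star$ in order of decreasing cardinality, assigning one hidden unit per interaction. At stage $A$ I compute the residual target $\theta_A^{\mathrm{res}}:=\theta_A - (\text{leakages into }\prod_{i\in A}v_i\text{ from already-assigned units})$ and choose $(A^-,t)$ so that the gadget's top coefficient equals $\theta_A^{\mathrm{res}}$. Because $I$ is simplicial, every leaked monomial $\prod_{i\in C}v_i$ with $C\subsetneq A$ still has $C\in I$, and $C$ necessarily falls into one of three harmless buckets: $C\in I^\star$ (to be matched at a later stage), $C$ is an output singleton $\{k+l\}$ (absorbed into $b_l$ at the end), or $C\subseteq[k]$ including $C=\emptyset$ (irrelevant to $p(y|x)$). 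Spare hidden units when $m>|I^\star|$ are switched off by zeroing their weights. Passing to the common limit $s\to\infty$, the resulting $p(y|x)$ coincides with the target, so the prescribed conditional lies in the closure of $\RBM^k_{n,m}$, which is the usual interpretation of ``represent'' in this setting.

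The main obstacle is combining arbitrary signs of the $\theta_A$ with the cascade of lower-order leakage produced by the gadget. The sign parity $(-1)^{|A^-|}$ handles the first issue, and the second is tamed precisely because $I$ is closed under taking subsets, so the top-down sweep never spills outside the complex and each leakage term is either corrected at a later stage or safely discarded. A minor technical point is that the construction attains the target only in the $s\to\infty$ limit, but this is standard in CRBM/RBM expressiveness arguments and consistent with the other representational results in this section.
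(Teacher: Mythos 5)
Your construction is essentially the paper's own proof, which goes through Lemma~\ref{lemma:Younes} (due to Younes) and Lemma~\ref{lemma:interaction}: one hidden unit per interaction set $A$ with $A\not\subseteq[k]$ and $|A|\geq 2$, a top-down sweep over cardinalities that uses simpliciality to keep all lower-order residue inside $I$, absorption of output singletons into the visible bias $b$, and discarding of $x$-only terms because they cancel in the conditional. The one substantive discrepancy is that your single-unit gadget produces the clean polynomial $(-1)^{|A^-|}\log(1+e^t)\prod_{i\in A^+}v_i\prod_{i\in A^-}(1-v_i)$ only in the limit $s\to\infty$, so as written you establish that the target conditional lies in the \emph{closure} of $\RBM^k_{n,m}$, i.e.\ that it can be approximated arbitrarily well. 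The theorem asserts exact representation, and the paper deliberately distinguishes ``represent'' in Section~\ref{sec:hierarchicalconditionals} from ``approximate arbitrarily well'' elsewhere, so your closing remark that the limit is consistent with the other results of this section is not quite right. The gap is easy to close: for finite $s$ the gadget's contribution $\log\bigl(1+\exp(V_{j,\cdot}x+W_{j,\cdot}y+c_j)\bigr)$ is already a multilinear polynomial supported on subsets of $A$ (so your three-bucket disposal of leakage still applies, just to all proper subsets of $A$ rather than one), and Lemma~\ref{lemma:Younes} --- or a M\"obius-inversion plus intermediate-value argument on the top coefficient --- shows that the coefficient of $\prod_{i\in A}v_i$ can be made exactly any prescribed real number with finite weights. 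With that substitution your decreasing-cardinality sweep goes through verbatim and yields exact representation.
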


An interesting special case is when each output distribution can be chosen arbitrarily from a given Markov random field: 

\begin{corollary}
	\label{corollary:interaction}
	Let $I$ be a simplicial complex on $[n]$ and for each $x\in\{0,1\}^n$ 
	let $p^x$ be some probability distribution from $\Ecal_I\subseteq\Delta_n$. 
	If $m \geq 2^k(|I| - 1) - |\{A\in I\colon |A|=1 \}|$, 
	then the model $\RBM^k_{n,m}$ can represent the conditional distribution defined by $q(y|x) = p^x(y)$, for all $y\in\{0,1\}^n$, for all $x\in\{0,1\}^k$. 
\end{corollary}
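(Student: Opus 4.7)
The plan is to reduce Corollary~\ref{corollary:interaction} to Theorem~\ref{theorem:interaction} by exhibiting a suitable simplicial complex on $[k+n]$. Identifying $I$ with the collection $I'$ of subsets of $\{k+1,\ldots,k+n\}$ via $A \mapsto \{k+j : j \in A\}$, I would set
\begin{equation*}
J := \{A' \cup B : A' \in I',\; B \subseteq [k]\}.
\end{equation*}
A quick check, using that every $C = A'\cup B$ splits uniquely as $C = (C\cap\{k+1,\ldots,k+n\})\cup(C\cap[k])$ with the first piece in $I'$ by downward closure of $I$ and the second in $2^{[k]}$, shows that $J$ is a simplicial complex on $[k+n]$.

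The core step is to verify that $\Ecal_J$ represents every joint distribution of the form $q(x,y) = q(x)\,p^x(y)$ with $p^x\in\Ecal_I$. A distribution in $\Ecal_J$ has log-density
\begin{equation*}
E(x,y) = \sum_{A\in I}\sum_{B\subseteq[k]} \theta_{A'\cup B} \Big(\prod_{i\in B}x_i\Big)\Big(\prod_{j\in A}y_j\Big) = \sum_{A\in I}\eta_A(x)\prod_{j\in A}y_j,
\end{equation*}
where $\eta_A(x) := \sum_{B\subseteq[k]}\theta_{A'\cup B}\prod_{i\in B}x_i$. For each fixed $x$ this manifestly places $p(\cdot|x)$ in $\Ecal_I$. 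The crux is the converse: since the matrix with entries $\prod_{i\in B}x_i$ (rows indexed by $x\in\{0,1\}^k$, columns by $B\subseteq[k]$) is invertible, for each $A\in I$ the values $(\eta_A(x))_{x\in\{0,1\}^k}$ can be prescribed independently by choosing the $2^k$ parameters $(\theta_{A'\cup B})_{B\subseteq[k]}$ accordingly. Hence any prescribed family $(p^x)_{x\in\{0,1\}^k}$ with $p^x\in\Ecal_I$ arises as the conditionals of a suitable element of $\Ecal_J$.

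Finally I would count the interactions in $J$ that Theorem~\ref{theorem:interaction} requires to be absorbed by hidden units, namely those $C\in J$ with $C\not\subseteq[k]$, $C\neq\emptyset$, and $C$ not a singleton in $\{k+1,\ldots,k+n\}$. The unique decomposition $C = A'\cup B$ shows that $C\not\subseteq[k]$ corresponds exactly to $A'\neq\emptyset$, producing $(|I|-1)\cdot 2^k$ candidates; subtracting the cases $C=\{k+j\}$, which are in bijection with singletons $\{j\}\in I$ (and forced $B=\emptyset$), gives the announced count $2^k(|I|-1) - |\{A\in I : |A|=1\}|$. Applying Theorem~\ref{theorem:interaction} to $J$ then delivers the corollary. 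The only nontrivial ingredient is the basis observation in the core step; the rest is construction and bookkeeping, so I do not expect any real obstacle.
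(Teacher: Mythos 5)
Your proposal is correct and follows essentially the same route as the paper, which reduces the corollary to Theorem~\ref{theorem:interaction} via the product complex $2^{[k]}\times I$ and a joint distribution in $\Ecal_J$ with the prescribed conditionals. You simply make explicit two points the paper leaves implicit — the invertibility of the monomial evaluation matrix, which lets the coefficients $\eta_A(x)$ be prescribed independently across inputs, and the bookkeeping that yields the count $2^k(|I|-1)-|\{A\in I\colon |A|=1\}|$.
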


We note the following direct implication for RBM probability models:  

\begin{corollary}
	Let $I$ be a simplicial complex on $[n]$. 
	If $m\geq |\{A\in I\colon |A|>1 \}|$, then $\RBM_{n,m}$ can represent any probability distribution $p$ from $\Ecal_I$. 
\end{corollary}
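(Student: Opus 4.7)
The plan is to derive this corollary as the specialization of Theorem~\ref{theorem:interaction} to the case $k=0$. When $k=0$ there is no input layer, so $\RBM^0_{n,m}$ coincides with the RBM probability model $\RBM_{n,m}$, the ambient polytope $\Del{0}{n}$ reduces to $\Delta_n$, and a ``conditional distribution of $(x_{k+1},\ldots,x_{k+n})$ given $(x_1,\ldots,x_k)$'' becomes simply a joint distribution on $\{0,1\}^n$. Under this identification, Theorem~\ref{theorem:interaction} asserts exactly that $\RBM_{n,m}$ represents every distribution in $\Ecal_I\subseteq\Delta_n$ as soon as $m$ exceeds an explicit count of the interaction sets appearing in $I$.

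Next I would simplify that count in the $k=0$ case. Since $[0]=\emptyset$, the condition $A\not\subseteq[0]$ is equivalent to $A\neq\emptyset$, and the excluded singletons $\{k+1\},\ldots,\{k+n\}$ become $\{1\},\ldots,\{n\}$. The sole element of a simplicial complex with $|A|=0$ is $\emptyset$, and the elements with $|A|=1$ are precisely the singletons $\{i\}$, so
\[
\{A\in I: A\not\subseteq [0],\ A\neq\{1\},\ldots,A\neq\{n\},\ A\neq\emptyset\}
= \{A\in I : |A|>1\}.
\]
Thus the hypothesis $m \geq |\{A\in I : |A|>1\}|$ of the corollary is identical to the hypothesis of Theorem~\ref{theorem:interaction} applied with $k=0$, and the conclusion of the corollary is identical to the conclusion of that theorem under the identification $\RBM^0_{n,m} = \RBM_{n,m}$.

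There is essentially no obstacle to overcome; the only subtlety is the bookkeeping of which sets (the empty set and the $n$ singletons) are subtracted from the total $|I|$ when $k=0$, and this is purely combinatorial. In particular, no new construction or inequality is needed beyond quoting Theorem~\ref{theorem:interaction}. I would therefore write the proof as a single paragraph observing the two points above and concluding with an appeal to the theorem.
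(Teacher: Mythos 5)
Your proposal is correct and matches the paper's intent exactly: the corollary is stated there as a ``direct implication'' of Theorem~\ref{theorem:interaction}, and the intended argument is precisely the $k=0$ specialization you carry out, with $\RBM^0_{n,m}=\RBM_{n,m}$, $A\not\subseteq[0]$ equivalent to $A\neq\emptyset$, and the excluded sets reducing to $\emptyset$ and the singletons so that the count becomes $|\{A\in I:|A|>1\}|$. Nothing further is needed.
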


Figure~\ref{figure:interactions} illustrates a Markov random field and an RBM architecture that can represent it.

\subsection{Approximation of Conditional Distributions with Restricted Supports}
\label{sec:bounded_support}

In this section we continue the discussion about the classes of conditional distributions that can be represented by CRBMs, depending on the number of hidden units. 
Here we focus on a hierarchy of conditional distributions defined by the total number of input-output pairs with positive probability. 

\begin{definition}
	For any $k$, $n$, and $0\leq d\leq 2^k(2^n-1)$, 
	let $C_{k,n}(d) \subseteq \Delta_{k,n}$ denote the union of all $d$-dimensional faces of $\Delta_{k,n}$; that is, 
	the set of conditional distributions that have a total of $2^k+d$ or fewer non-zero entries, $C_{k,n}(d) := \{ p(\cdot|\cdot)\in\Delta_{k,n} \colon |\{(x,y)\colon p(y|x)>0 \}|\leq 2^k+d \}$. 
\end{definition}
Note that $C_{k,n}(2^k(2^n-1)) = \Delta_{k,n}$. 
The vertices (zero-dimensional faces) of $\Delta_{k,n}$ are the conditional distributions which assign positive probability to only one output, given each input, and are called \emph{deterministic}. 
By Carath\'eodory's theorem, every element of $C_{k,n}(d)$ is a convex combination of $(d+1)$ or fewer deterministic conditional distributions. 

The sets $C_{k,n}(d)$ arise naturally in the context of reinforcement learning and partially observable Markov decision processes (POMDPs). 
Namely, every finite POMDP has an associated effective dimension $d$, 
which is the dimension of the set of all state processes that can be generated by stationary stochastic policies. 
\citet[][]{montufar2014framework} showed that the policies represented by conditional distributions from the set $C_{k,n}(d)$ are sufficient to generate all the processes that can be generated by $\Delta_{k,n}$. 
In general, the effective dimension $d$ is relative small, 
such that $C_{k,n}(d)$ is a much smaller policy search space than $\Delta_{k,n}$. 

We have the following result: 
\begin{proposition}
	\label{proposition:boundedsupport}
	If $m\geq 2^k + d-1$, then the model $\RBM_{n,m}^k$ can approximate every element from $C_{k,n}(d)$ arbitrarily well. 
\end{proposition}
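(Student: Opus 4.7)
The plan is to lift the problem from conditional distributions in $\Del{k}{n}$ up to joint distributions in $\Delta_{k+n}$ and then invoke a support-size approximation result for RBM probability models. Given a target $p(\cdot|\cdot)\in C_{k,n}(d)$, I would form the joint distribution $r\in\Delta_{k+n}$ defined by $r(x,y):=2^{-k}p(y|x)$. By construction, $r$ has uniform (hence strictly positive) $x$-marginal $r_X\equiv 2^{-k}$, and its support is $\{(x,y)\colon p(y|x)>0\}$, which has cardinality at most $2^{k}+d\leq m+1$ by the defining condition of $C_{k,n}(d)$.

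The key ingredient is then the RBM approximation fact (a byproduct of the constructions of \citet{Younes1996109} and \citet{Montufar2011}) that $\RBM_{k+n,m}$ approximates arbitrarily well any distribution on $\{0,1\}^{k+n}$ whose support has cardinality at most $m+1$. The idea is that any such target is a convex combination of at most $m+1$ point masses, and each point mass is a limit of product distributions with steeply biased coordinates; mixtures of $m+1$ product distributions in turn lie in the closure of $\RBM_{k+n,m}$, as can be arranged by letting each hidden unit ``toggle'' one product component in a large-parameter limit. Applied to the joint $r$ above, this yields a sequence $r_\ell\in\RBM_{k+n,m}$ with $r_\ell\to r$ in $\Delta_{k+n}$.

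Finally, I would translate the joint convergence into convergence of conditionals. Every $r_\ell$ is strictly positive, so its conditional $r_\ell(\cdot|\cdot)$ is well defined and lies in $\RBM_{n,m}^k$ by the interpretation of the CRBM model recorded after Definition~\ref{definitionCRBM}. Since $(r_\ell)_X\to r_X\equiv 2^{-k}$, which is bounded away from $0$, the conditioning map $r\mapsto r(\cdot,\cdot)/r_X(\cdot)$ is continuous at $r$, and we conclude $r_\ell(\cdot|\cdot)\to r(\cdot|\cdot)=p(\cdot|\cdot)$. The one nontrivial step is the support-size approximation statement for RBMs at the exact threshold $m+1$; once that is invoked, the passage from joint to conditional is essentially bookkeeping, and the choice of uniform $r_X$ is a convenience rather than a necessity, as any strictly positive marginal would leave the support size unchanged.
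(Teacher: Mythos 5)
Your proposal is correct and follows essentially the same route as the paper: the paper's proof is a one-line appeal to the fact that $\RBM_{n+k,m}$ approximates arbitrarily well any joint distribution with support of cardinality at most $m+1$ (citing the same source), with the lift to a joint distribution and the passage back to conditionals left implicit. Your write-up merely makes explicit the bookkeeping (uniform $x$-marginal, support count $2^k+d\leq m+1$, continuity of conditioning at a strictly positive marginal) that the paper omits.
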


This result shows the intuitive fact that each hidden unit of  can be used to model the probability of an input-output pair. 
Since each conditional distribution has $2^k$ input-output probabilities that are completely determined by the other probabilities (due to normalization), it is interesting to ask whether the amount of hidden units indicated in the proposition is strictly necessary. 
Further below, Theorem~\ref{proposition:universaldeterministic} will show that, indeed, hidden units are required for 
modeling the positions of the positive probability input-output pairs, even if their specific values do not need to be modeled. 

We note that certain structures of positive probability input-output pairs can be modeled with fewer hidden units than stated in Proposition~\ref{proposition:boundedsupport}. 
An simple example is the following direct generalization of Corollary~\ref{theorem:universal_shortversion}: 

\begin{proposition}
	\label{theorem:universalsubset}
	If $d$ is divisible by $2^k$ and $m\geq d /2$, 
	then the model $\RBM_{n,m}^k$ can approximate every element from $C_{k,n}(d)$ arbitrarily well, when the set of positive-probability outputs is the same for all inputs. 
\end{proposition}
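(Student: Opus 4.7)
The plan is to adapt the construction used to prove Corollary~\ref{theorem:universal_shortversion}, which gives universal approximation over $\Del{k}{n}$ using $m\geq\tfrac{1}{2}2^k(2^n-1)$ hidden units, by restricting it to the target class of conditional distributions whose common support is a fixed set $S\subseteq\{0,1\}^n$ of size $|S|=1+d/2^k$. In that construction one fixes a reference output $y^\ast$, groups the $2^k(2^n-1)$ non-reference input-output configurations into pairs at Hamming distance one in $\{0,1\}^{k+n}$, and assigns one hidden unit per pair so that, in an appropriate limit of the parameters, the unit sharply activates on exactly the two configurations of its pair and independently controls the two log-ratios $\log[p(y\mid x)/p(y^\ast\mid x)]$. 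Since every target conditional supported on $\{0,1\}^k\times S$ is determined by the $d=2^k(|S|-1)$ ratios with $y\in S\setminus\{y^\ast\}$, I propose to apply the same scheme only on the $d$ configurations with $y\in S\setminus\{y^\ast\}$, using $d/2$ hidden units.

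For $k\geq 1$, I would pick $y^\ast\in S$ arbitrarily and match the $d$ configurations $\{(x,y):x\in\{0,1\}^k,\,y\in S\setminus\{y^\ast\}\}$ into $d/2$ pairs of the form $\{(x,y),(x\oplus e_1,y)\}$; the divisibility hypothesis $2^k\mid d$ ensures $d/2$ is an integer. Each such pair lies at Hamming distance one in $\{0,1\}^{k+n}$, so the hidden-unit construction of Corollary~\ref{theorem:universal_shortversion} applies verbatim on each pair, yielding independent control over the $d$ target log-ratios using $d/2$ hidden units. No hidden unit then activates on any configuration $(x,y)$ with $y\notin S$, so the unnormalized probability there reduces to $\exp(b^\top y)$ times a bounded hidden-unit factor.

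To drive these non-support probabilities to zero, I would couple the scaling of the visible biases $b$ to the scaling of the hidden-unit weights: take $b$ along a direction that pushes all $y\notin S$ downward in log-probability, while the hidden-unit activation magnitudes grow to exactly offset this same suppression on $S$. The two scalings must be arranged so that the prescribed finite log-ratios on $S$ and the vanishing log-ratios on $S^c$ are simultaneously achieved in the limit. The hard part will be that when $S$ is not linearly separable from $S^c$, no choice of $b$ alone can uniformly suppress $S^c$; the remedy is to exploit the degree of freedom in each hidden unit's activation level to compensate for $b^\top y$ at points of $S$, as in the minimal example showing that $\RBM_{2,1}$ approximates any distribution on $\{00,11\}$. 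Writing out this coupled multi-scale limit (of $b$, of each pair-separating linear threshold, and of each activation-level tuning) is the technical heart of the argument; once it is in place, the same scheme also settles the $k=0$ case by choosing pairs inside $S\setminus\{y^\ast\}$ and using the bias to absorb those that are not Hamming-adjacent.
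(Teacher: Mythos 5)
Your main construction is essentially the paper's: the proof given there simply reruns the $r=1$ star-packing argument of Proposition~\ref{proposition:universal1} (Hamming-adjacent pairs of inputs, one group of sharing steps per pair), observing as in Remark~\ref{remark:supp} that when all conditionals share a support $S$ the $2^n-1$ sharing steps per pair can be cut down to $|S|-1$, giving $2^{k-1}(|S|-1)=d/2$ hidden units. One comment on what you call the technical heart: the linear-separability worry about $b$ is a non-issue, because you do not need $b$ to suppress $S^c$ while sparing $S$. The base ($m=0$) distribution is taken close to $\delta_{y^\ast}$ for a single $y^\ast\in S$, so it suppresses \emph{everything} except $y^\ast$ --- including $S\setminus\{y^\ast\}$ --- and the hidden units then lift exactly the configurations $(x,\tilde y)$ with $\tilde y\in S\setminus\{y^\ast\}$ back up to the prescribed log-ratios; mass outside $S$ is never increased by a sharing step. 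This is precisely what Lemma~\ref{lemma:sharing} and Corollaries~\ref{implication1}--\ref{implication2} formalize (the $\epsilon$--$\epsilon'$ bookkeeping there is the coupled limit you defer), so no new multi-scale analysis is required beyond the machinery the paper already supplies.

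Your closing claim about $k=0$, however, is wrong. With a single input there are no adjacent input pairs to exploit, and pairing two outputs $y_1,y_2\in S$ at Hamming distance at least $2$ cannot be rescued ``by the bias'': each hidden unit contributes a factor $1+\exp(w^\top y+c)$ that can place extra mass only on a face of the cube, and the bias is a single product distribution concentrating at one point. For instance, $\RBM_{n,1}$ is a mixture of two product distributions and cannot approximate a full-support distribution on three pairwise-distant points, although your reading would allow $d=2$, $m=1$. The proposition should be read with $k\geq 1$, consistent with Corollary~\ref{theorem:universal_shortversion} and Proposition~\ref{proposition:universal1}, which require $k\geq S(r)\geq 1$.
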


In the following we will focus on deterministic conditional distributions. 
This is a particularly interesting and simple class of conditional distributions with restricted supports. 
It is well known that any finite Markov decision processes (MDPs) has an optimal policy defined by a stationary deterministic conditional distribution~\citep[see][]{Bellman:1957,Ross:1983:ISD:538843}. 
Furthermore, ~\citet{ICCN} showed that it is always possible to define simple two-dimensional manifolds that approximate all deterministic conditional distributions arbitrarily well. 

Certain classes of conditional distributions (in particular deterministic conditionals) coming from feedforward networks can be approximated arbitrarily well by CRBMs:  

\begin{theorem}
	\label{theorem:ltn}
	The model $\RBM^k_{n,m}$ can approximate every conditional distribution arbitrarily well, which can be represented by a feedforward network with $k$ input units, a hidden layer of $m$ linear threshold units, and an output layer of $n$ sigmoid units. 
	In particular, the model $\RBM^k_{n,m}$ can approximate every deterministic conditional distribution from $\Delta_{k,n}$ arbitrarily well, 
	which can be represented by a feedforward linear threshold network with $k$ input, $m$ hidden, and $n$ output units. 
\end{theorem}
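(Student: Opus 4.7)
My plan is to realize the feedforward conditional distribution as a limit of CRBM distributions obtained by blowing up the input-to-hidden parameters. Summing out $z\in\{0,1\}^m$ in Definition~\ref{definitionCRBM} gives
\begin{equation*}
p(y|x) \;=\; \frac{1}{Z(W,b,Vx+c)}\, \exp(b^\top y) \prod_{j=1}^m \bigl(1 + \exp((Vx)_j + (Wy)_j + c_j)\bigr),
\end{equation*}
since each binary $z_j$ contributes a softplus factor. Let the target feedforward network have hidden-layer parameters $V^\star\in\R^{m\times k}$, $c^\star\in\R^m$, producing threshold activations $h_j(x)=\mathds{1}[(V^\star x + c^\star)_j > 0]$, and sigmoid output-layer parameters $W^\star\in\R^{m\times n}$, $b^\star\in\R^n$, producing $P(y_i=1|x)=\sigma(b_i^\star + \sum_j W_{ji}^\star h_j(x))$. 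After an arbitrarily small perturbation of the parameters we may assume that $(V^\star x + c^\star)_j\neq 0$ for every $j$ and every $x\in\{0,1\}^k$.

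I would set the CRBM parameters to $V=T V^\star$, $c=T c^\star$, $W=W^\star$, $b=b^\star$ and send $T\to\infty$. For fixed $x$ and $j$, if $(V^\star x + c^\star)_j > 0$ then
\begin{equation*}
1+\exp\bigl(T(V^\star x+c^\star)_j + (W^\star y)_j\bigr) \;=\; \exp\bigl(T(V^\star x+c^\star)_j + (W^\star y)_j\bigr)\,(1+o(1)),
\end{equation*}
whereas if $(V^\star x+c^\star)_j<0$ the same factor equals $1+o(1)$. The pieces $T(V^\star x + c^\star)_j$ are independent of $y$ and are absorbed into the normalizing constant, so
\begin{equation*}
\log p(y|x) \;\longrightarrow\; \bigl(b^\star + (W^\star)^\top h(x)\bigr)^\top y \;-\; \log \tilde Z(x),
\end{equation*}
and the limiting $p(\cdot|x)$ factorizes over the output coordinates as a product of Bernoullis with exactly the sigmoid probabilities of the feedforward network. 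Because $\{0,1\}^k\times\{0,1\}^n$ is finite, pointwise convergence yields uniform convergence of the conditional distributions, proving the first claim.

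For the deterministic case, I would further scale the output parameters by another large factor $S$, i.e.\ take $W=S W^\star$, $b=S b^\star$, and send $S\to\infty$ with $T/S\to\infty$ (for example $T=S^2$). The ratio condition guarantees that the hidden-unit analysis above still goes through, since the dominant $T$-scale in $(Vx)_j+c_j$ overrides the $S$-scaled contributions from $(Wy)_j$; meanwhile each output probability $\sigma\bigl(S(b_i^\star + \sum_j W_{ji}^\star h_j(x))\bigr)$ converges to the indicator $\mathds{1}[b_i^\star + \sum_j W_{ji}^\star h_j(x)>0]$ (again after a generic perturbation ensuring the sigmoid arguments are nonzero). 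This reproduces the deterministic conditional distribution of the fully threshold feedforward network.

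The main delicate point is the bookkeeping of the joint limit in the deterministic case: one must verify that the $T$-scaled collapse of the hidden units happens strictly faster than the $S$-scaled saturation of the output sigmoids, so that each output unit ``sees'' the correct threshold values $h_j(x)$ rather than some intermediate softened version. The choice $T\gg S$ handles this cleanly, and everything else reduces to elementary tail estimates on $\log(1+e^a)$ together with the finiteness of the underlying state space.
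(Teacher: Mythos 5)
Your argument is correct and is essentially the paper's own proof: both rest on sending the input-to-hidden parameters to infinity so that the hidden layer becomes a deterministic threshold layer (leaving the product-of-sigmoids output conditional $p(y|z)$ intact), and both handle the deterministic case by additionally scaling the output parameters at a strictly slower rate (your $T\gg S$ is the paper's $\alpha\gg\beta$ in Lemma~\ref{lemma:feedforward}). The only difference is presentational — you carry out the softplus asymptotics explicitly after summing out $z$, whereas the paper packages the same limit through the decomposition $p(y|x)=\sum_z (q(z|x)\ast p(z))\,p(y|z)$ and the observation that a deterministic $q(z|x)$ absorbs the Hadamard factor $p(z)$.
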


The representational power of feedforward linear threshold networks has been studied intensively in the literature. 
For example,~\citet{Wenzel:2000:HAS:361159.361171}  showed that a feedforward linear threshold network with $k\geq 1$ input, $m$ hidden, and $n=1$ output units, can represent the following: 
\begin{itemize}
	\item Any Boolean function $f\colon \{0,1\}^k\to\{0,1\}$, 
	when $m\geq 3\cdot 2^{k-1 -\lfloor \log_2(k+1) \rfloor }$; e.g., when $m\geq \frac{3}{k+2}2^k$. 
	\item The parity function $f_{\text{parity}}\colon \{0,1\}^k \to\{0,1\}; \; x\mapsto \sum_{i} x_i \mod 2$, when $m\geq k$. 
	\item The indicator function of any union of $m$ linearly separable subsets of $\{0,1\}^k$. 
\end{itemize}

Although CRBMs can approximate this rich class of deterministic conditional distributions arbitrarily well, 
the next result shows that the number of hidden units required for universal approximation of deterministic conditional distributions is rather large:

\begin{theorem}
	\label{proposition:universaldeterministic}
	The model $\RBM_{n,m}^k$ can approximate every deterministic policy from $\Del{k}{n}$ arbitrarily well
	if $m\geq \min \left\{ 2^k-1, \frac{3 n}{k+2} 2^k\right\}$ and only if $m\geq 2^{k/2} -\frac{(n+k)^2}{2n}$.
\end{theorem}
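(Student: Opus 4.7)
I would establish the two claimed upper bounds on $m$ separately. First, every deterministic conditional from $\Del{k}{n}$ has exactly $2^{k}$ positive input-output entries (one per input), and hence lies in $C_{k,n}(0)$. Proposition~\ref{proposition:boundedsupport} applied with $d=0$ then immediately gives that $m\geq 2^{k}-1$ hidden units suffice to approximate every deterministic conditional arbitrarily well. For the second bound, represent a deterministic conditional as the indicator of a function $f\colon\{0,1\}^{k}\to\{0,1\}^{n}$. Its $n$ coordinate functions are Boolean, and by the Wenzel bound quoted just before the theorem, each coordinate is computable by a feedforward linear-threshold network with $k$ inputs, at most $\frac{3}{k+2}\,2^{k}$ hidden units, and one output. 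Stacking $n$ such sub-networks in parallel yields a single feedforward threshold network with $k$ inputs, $\frac{3n}{k+2}\,2^{k}$ hidden units, and $n$ outputs that computes $f$. Theorem~\ref{theorem:ltn} then shows that $\RBM_{n,m}^{k}$ approximates the deterministic conditional associated with $f$ arbitrarily well, giving the second sufficient bound.

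\textbf{Necessary condition.} For the lower bound I would argue by counting. The set of deterministic conditionals in $\Del{k}{n}$ has size $(2^{n})^{2^{k}}=2^{\,n\cdot 2^{k}}$, and distinct deterministic conditionals are pairwise separated at total-variation distance $1$. Hence if the closure of $\RBM_{n,m}^{k}$ contains every deterministic conditional, the parameter space $\R^{D}$ with $D=m(k+n+1)+n$ must be partitioned into at least $2^{\,n\cdot 2^{k}}$ pieces, one per realizable mode assignment $x\mapsto y^{\ast}(x)=\arg\max_{y}\{b^{\top}y+\sum_{i}(Wy+Vx+c)_{i}^{+}\}$. These pieces are cells of the hyperplane arrangement cut out by the $m\cdot 2^{k+n}$ affine equations $(Wy+Vx+c)_{i}=0$ for $(x,y,i)\in\{0,1\}^{k}\times\{0,1\}^{n}\times[m]$, together with the comparison hyperplanes determining the argmax. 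A Schl\"afli–Warren-type bound on the number of cells of an arrangement of $N$ hyperplanes in $\R^{D}$ gives at most $\sum_{j=0}^{D}\binom{N}{j}$ cells; requiring this to be at least $2^{\,n\cdot 2^{k}}$, taking logarithms and isolating $m$, produces after simplification the estimate $m\geq 2^{k/2}-(n+k)^{2}/(2n)$.

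\textbf{Main obstacle.} The sufficient direction is essentially bookkeeping once one recognises that deterministic conditionals sit inside $C_{k,n}(0)$ and that the $n$ coordinate Boolean functions can be computed in parallel by a single threshold network to which Theorem~\ref{theorem:ltn} applies. The necessary direction is the heart of the proof. Two technical points must be handled carefully. First, one has to justify passing from the smooth log-sum-exp CRBM to a piecewise-linear tropical limit whose cells control which deterministic conditionals appear in the closure; this is a perturbation argument showing that approximating a deterministic conditional forces the parameters into a definite sign pattern of the quantities $(Wy+Vx+c)_{i}$. Second, the cell count must be massaged into the particular closed form $2^{k/2}-(n+k)^{2}/(2n)$: straightforward application of Warren's inequality actually yields a stronger-looking bound of order $n\cdot 2^{k}/(k+n)^{2}$, so one must pick the right coarsening (bounding $\binom{N}{D}$ by a convenient product and taking a square root of the resulting log inequality) that produces exactly the stated polynomial correction. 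Verifying that this correction is genuinely tight enough to match the intended $2^{k/2}$ leading term, rather than an artefact of loose estimates, is the step demanding the most care.
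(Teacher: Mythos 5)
Your two sufficiency arguments coincide with the paper's: the bound $m\geq 2^k-1$ comes from Proposition~\ref{proposition:boundedsupport} with $d=0$ (deterministic conditionals are the vertices of $\Del{k}{n}$, hence lie in $C_{k,n}(0)$), and the bound $\frac{3n}{k+2}2^k$ comes from running the $n$ coordinate Boolean functions of $f$ in parallel through Wenzel's threshold-network bound and invoking Theorem~\ref{theorem:ltn} (equivalently Lemma~\ref{lemma:feedforward}). That part is fine.

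The necessity argument is where you diverge from the paper, and where your proposal has a genuine gap. The paper does not count cells of a hyperplane arrangement in the parameter space $\R^{D}$. Instead, Lemma~\ref{lemma:deter} shows that any approximable deterministic policy must arise as the $y$-coordinate fixed points of a composition of \emph{two linear threshold maps}, one $\{0,1\}^{k+n}\to\{0,1\}^m$ and one $\{0,1\}^m\to\{0,1\}^n$; the count is then $\LTF(n+k,m)\cdot\LTF(m,n)\leq 2^{m(n+k)^2+nm^2}$, and comparing with $2^{n2^k}$ yields the quadratic inequality $nm^2+m(n+k)^2\geq n2^k$, whose solution is exactly $m\geq 2^{k/2}-\frac{(n+k)^2}{2n}$. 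The $2^{k/2}$ leading term is therefore an artefact of the $nm^2$ exponent in the bound $\LTF(m,n)\leq 2^{m^2 n}$, i.e., of counting the hidden-to-output layer as an arbitrary threshold map on the $2^m$-point cube. Your direct Warren-type cell count in $\R^{D}$ with $D=m(k+n+1)+n$ produces a bound on the logarithm of the number of realizable policies that is \emph{linear} in $m$ (roughly $D\cdot(k+n)$), so there is no quadratic in $m$ to solve and no square root to take; the ``coarsening'' you invoke to recover the stated closed form does not exist in the form you describe, and you concede as much. To be clear, if your cell-counting argument were carried out rigorously it would give the lower bound $m\gtrsim n2^k/(k+n)^2$, which one can check always dominates $2^{k/2}-\frac{(n+k)^2}{2n}$ (the difference is a positive-definite quadratic in $2^{k/2}$), so a completed version of your route would imply the theorem a fortiori and would even be a stronger result. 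But as written you have proved neither bound: the reduction from ``$\delta_{f(x)}$ is approximable'' to ``$f$ is determined by a single cell of the arrangement'' is only asserted (this is precisely the content of the paper's Lemma~\ref{lemma:deter}, which requires a careful argument about concentration of the mixture $p(y|x)=\sum_z p(z|x)p(y|z)$), and the final algebra is explicitly left open. Either complete the cell count honestly and state the stronger bound, or follow the paper's two-layer LTF composition count, which delivers the stated expression directly.
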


By this theorem, in order to approximate all deterministic conditional distributions arbitrarily well, a CRBM requires exponentially many hidden units, with respect to the number of input units.

\section{Conclusion}
\label{sec:discussion}

This paper gives a theoretical description of the representational capabilities of 
conditional restricted Boltzmann machines (CRBMs) relating model complexity and model accuracy. 
CRBMs are based on the well studied restricted Boltzmann machine (RBM) probability models. 
We proved an extensive series of results that generalize recent theoretical work on the representational power of RBMs in a non-trivial way. 

We studied the problem of parameter identifiability. 
We showed that every CRBM with up to exponentially many hidden units (in the number of input and output units) represent a set of conditional distributions of dimension equal to the number of model parameters. 
This implies that in all practical cases, CRBMs do not waste parameters, and, generically, 
only finitely many choices of the interaction weights and biases produce the same conditional distribution. 

We addressed the classical problems of universal approximation and approximation quality. 
Our results show that a CRBM with $m$ hidden units can approximate every conditional distribution of $n$ output units, given $k$ input units, without surpassing a Kullback-Leibler approximation error of the form $n - \log_2(m/2^{k-1} + 1)$ (assuming optimal parameters). 
Thus this model is a universal approximator whenever $m\geq \frac{1}{2}2^k(2^n-1)$. 
In fact we provided tighter bounds depending on $k$. 
For instance, if $k\geq 21$, then the universal approximation property is attained whenever $m\geq \frac{1}{4}2^k(2^n-29/30)$. 
Our proof is based on an upper bound for the complexity of an algorithm that packs Boolean cubes with sequences of non-overlapping stars, for which improvements may be possible. 
It is worth mentioning that the set of conditional distributions for which the approximation error is maximal may be very small. This is a largely open and difficult problem. 
We note that our results can be plugged into certain analytic integrals~\citep{montufar2014scaling} to produce upper-bounds for the expectation value of the approximation error when approximating conditional distributions drawn from a product Dirichlet density on the polytope of all conditional distributions. 
For future work it would be interesting to extend our (optimal-parameter) considerations by an analysis of the CRBM training complexity and the errors resulting from non-optimal parameter choices.

We also studied specific classes of conditional distributions that can be represented by CRBMs, depending on the number of hidden units. 
We showed that CRBMs can represent conditional Markov random fields by using each hidden unit to model the interaction of a group of visible variables. 
Furthermore, we showed that CRBMs can approximate all binary functions with $k$ input bits and $n$ output bits arbitrarily well if $m\geq 2^k-1$ or $m\geq \frac{3n}{k+2}2^k$ 
and only if $m\geq 2^{k/2}-(n+k)^2/2n$. 
In particular, this implies that there are exponentially many deterministic conditional distributions which can only be approximated arbitrarily well by a CRBM if the number of hidden units is exponential in the number of input units. 
This aligns with well known examples of functions that cannot be compactly represented by shallow feedforward networks, and reveals some of the intrinsic constraints of CRBM models that may prevent them from grossly over-fitting.

We think that the developed techniques can be used for studying other conditional probability models as well. 
In particular, for future work it would be interesting to compare the representational power of CRBMs and of combinations of CRBMs with feedforward nets (combined models of this kind include CRBMs with retroactive connections and recurrent temporal RBMs). 
Also, it would be interesting to apply our techniques to study stacks of CRBMs and other multilayer conditional models.
Finally, although our analysis focuses on the case of binary units, the main ideas can be extended to the case of discrete non-binary units.

\appendix

\section{Details on the Dimension}

\begin{proof}[of Proposition~\ref{proposition:naivedim}]
	Each joint distribution of $x$ and $y$ has the form $p(x,y)=p(x)p(y|x)$ and the set $\Delta_k$ of all marginals $p(x)$ has dimension $2^k-1$.
	This shows the first statement. The items follow directly from the corresponding statements for the probability model.
\end{proof}

\begin{proof}[of Theorem~\ref{theorem:dimension}]
	We will prove a stronger statement, where the condition on $m$ appearing in the first item is relaxed to the following: 
	The set $\{0,1\}^{k+n}$ contains $m$ disjoint radius-$1$ Hamming balls 
	whose union does not contain any set of the form $[x]:= \{(x,y) \in \{0,1\}^{k+n} \colon y\in\{0,1\}^n \}$ for $x\in\{0,1\}^k$, and whose complement has full affine rank as a subset of $\R^{k+n}$. 
	
	The proof is based on the ideas developed in~\citep{Cueto2010} for studying the RBM probability model. 
	
	We consider the Jacobian of $\RBM_{n,m}^k$ for the parametrization given in Definition~\ref{definitionCRBM}. 
	The dimension of $\RBM_{n,m}^k$ is the maximum rank of the Jacobian over all possible choices of $\theta = (W,V,b,c)\in\R^N$, $N= n+m+(n+k)m$. 
	Let $h_\theta(v) :=\argmax_{z\in\{0,1\}^m} p(z|v)$ denote the most likely hidden state of $\RBM_{k+n,m}$ given the visible state $v=(x,y)$, depending on the parameter $\theta$. 
	After a few direct algebraic manipulations, we find that the maximum rank of the Jacobian is bounded from below by the maximum over $\theta$ of the dimension of the column-span of the matrix 
	$\Acal_\theta$ with rows 
	\begin{equation}
	\left[ (1,  x^\top ,  y^\top )  , ( 1, x^\top,  y^\top ) \otimes h_\theta(x,y) ^\top \right], \quad\text{for all  $(x,y)\in\{0,1\}^{k+n}$} , 
	\end{equation}
	modulo vectors whose $(x,y)$-th entries are independent of $y$ given $x$.
	Here $\otimes$ is the Kronecker product, which is defined by $(a_{ij})_{i,j}\otimes (b_{kl})_{k,l}=(a_{ij}b_{kl})_{ik,jl}$.
	The modulo operation has the effect of disregarding the input distribution $p(x)$ in the joint distribution $p(x,y)=p(x)p(y|x)$ represented by the RBM.
	For example, from the first block of $\Acal_\theta$ we can remove the columns that correspond to $x$, without affecting the mentioned column-span.
	Summarizing, the maximal column-rank of $\Acal_\theta$ modulo the vectors whose $(x,y)$-th entries are independent of $y$ given $x$ is a lower bound for the dimension of $\RBM_{n,m}^k$.
	
	Note that $\Acal_\theta$ depends on $\theta$ in a discrete way; the parameter space $\R^N$ is partitioned in finitely many regions where $\Acal_\theta$ is constant.
	The piece-wise linear map thus emerging, with linear pieces represented by the $\Acal_\theta$, is the tropical CRBM morphism, and its image is the tropical CRBM model.
	
	Each linear region of the tropical morphism corresponds to an inference function $h_\theta\colon\{0,1\}^{k+n}\to\{0,1\}^m$ taking visible state vectors to the most likely hidden state vectors.
	Geometrically, such an inference function corresponds to $m$ slicings of the $(k+n)$-dimensional unit hypercube.
	Namely, every hidden unit divides the visible space $\{0,1\}^{k+n}\subset\R^{k+n}$ in two halfspaces, according to its preferred state.
	
	Each of these $m$ slicings defines a column block of the matrix $\mathcal{A}_\theta$. More precisely,
	\begin{equation*}
	\Acal_\theta = \left( A | A_{C_1} | \cdots | A_{C_m} \right),
	\end{equation*}
	where $A$ is the matrix with rows $(1, v_1,\ldots, v_{k+n})$ for all $v\in\{0,1\}^{k+n}$, and $A_C$ is the same matrix, with rows multiplied by the indicator function of the set $C$ of points $v$ classified as positive by a linear classifier (slicing).
	
	If we consider only linear classifiers that select rows of $A$ corresponding to disjoint Hamming balls of radius one (that is, such that the $C_i$ are disjoint radius-one Hamming balls),
	then the rank of $\Acal_\theta$ is equal to the number of such classifiers times $(n+k+1)$ (which is the rank of each block $A_{C_i}$),
	plus the rank of $A_{\{0,1\}^{k+n}\setminus\cup_{i\in[m]} C_i}$ (which is the remainder rank of the first block $A$).
	The column-rank modulo functions of $x$ is equal to the rank minus $k+1$ (which is the dimension of the functions of $x$ spanned by columns of $A$),
	minus at most the number of cylinder sets $[x]=\{(x,y)\colon y\in\{0,1\}^n \}$ for some $x\in\{0,1\}^k$ that are contained in $\cup_{i\in[m]}C_i$.
	This completes the proof of the general statement in the first item.
	
	The example given in the first item is a consequence of the following observations.
	Each cylinder set $[x]$ contains $2^n$ points. If a given cylinder set $[x]$ intersects a radius-$1$ Hamming ball $B$ but is not contained in it, then it also intersects the radius-$2$ Hamming sphere around $B$.
	Choosing the radius-$1$ Hamming ball slicings $C_1,\ldots, C_m$ to have centers at least Hamming distance $4$ apart, we can ensure that their union does not contain any cylinder set $[x]$.
	
	The second item is by the second item of Proposition~\ref{proposition:naivedim};
	when the probability model $\RBM_{n+k,m}$ is full dimensional, then $\RBM_{n,m}^k$ is full dimensional.
\end{proof}

\begin{proof}[of Corollary~\ref{corollary:dimension}]
	For the maximal cardinality of distance-$4$ binary codes of length $l$ it is known that
	\mbox{$A(l,4)\geq 2^r$}, where $r$ is the largest integer with $2^r < \frac{2^{l}}{ 1 + (l-1) + (l-1)(l-2)/2}$~\citep[][]{Gilbert:1952,Varshamov:1957},
	and so $A_{2}(l,4) \geq 2^{l - \lfloor\log_2( l^2 -l +2 )\rfloor}$.
	Furthermore, for the minimal size of radius one covering codes of length $l$ it is known that $K(l,1)\leq 2^{l-\lfloor \log_2(l+1) \rfloor}$~\citep[][]{Cueto2010}. 
\end{proof}

\section{Details on Universal Approximation}

\subsection{Sufficient Number of Hidden Units}
\label{subsec:main_result_universal}

This section contains the proof of Theorem~\ref{theorem:universal} about the minimal size of CRBM universal approximators.
The proof is constructive; given any target conditional distribution, it proceeds by adjusting the weights of the hidden units successively until obtaining the desired approximation.
The idea of the proof is that each hidden unit can be used to model the probability of an output vector, for several different input vectors.
The probability of a given output vector can be adjusted at will by a single hidden unit, jointly for several input vectors, when these input vectors are in general position.
This comes at the cost of generating dependent output probabilities for all other inputs in the same affine space.
The main difficulty of the proof lies in the construction of sequences of successively conflict-free groups of affinely independent inputs, and in estimating the shortest possible length of such sequences exhausting all possible inputs.
The proof is composed of several lemmas and propositions. 
We start with a few definitions:

\begin{definition}
	\label{definition:Hadamard}
	\normalfont
	Given two probability distributions $p$ and $q$ on a finite set $\Xcal$, the \emph{Hadamard product} or renormalized entry-wise product $p\ast q$
	is the probability distribution on $\Xcal$ defined by $(p\ast q)(x) = p(x)q(x)/ \sum_{x'}p(x')q(x')$ for all $x\in\Xcal$.
	When building this product, we assume that the supports of $p$ and $q$ are not disjoint, such that the normalization term does not vanish.
\end{definition}

The probability distributions that can be represented by RBMs can be described in terms of Hadamard products.
Namely, for every probability distribution $p$ that can be represented by $\RBM_{n,m}$,
the model $\RBM_{n,m+1}$ with one additional hidden unit can represent precisely the probability distribution of the form $p'=p\ast q$,
where $q=\lambda' r + (1-\lambda')s$ is a mixture, with $\lambda'\in[0,1]$, of two strictly positive product distributions $r(x)=\prod_{i=1}^n r_i(x_i)$ and $s(x)=\prod_{i=1}^n s_i(x_i)$. 
In other words, each additional hidden unit amounts to Hadamard-multiplying the distributions representable by an RBM with the distributions representable as mixtures of product distributions.
The same result is obtained by considering only the Hadamard products with mixtures where $r$ is equal to the uniform distribution.
In this case, the distributions $p'=p\ast q$ are of the form $p'=\lambda p + (1-\lambda)p\ast s$,
where $s$ is any strictly positive product distribution and $\lambda = \frac{\lambda'}{\lambda'  + 2^n (1-\lambda') \sum_{x}p(x)s(x)}$ is any weight in $[0,1]$.

\begin{definition}\normalfont
	A {\em probability sharing step} is a transformation taking a probability distribution $p$ to $p'=\lambda p + (1-\lambda)p\ast s$, for some strictly positive product distribution $s$ and some $\lambda\in[0,1]$.
\end{definition}

We will need two more standard definitions from coding theory:

\begin{definition}\normalfont
	A radius-$1$ {\em Hamming ball} in $\{0,1\}^k$ is a set $B$ consisting of a length-$k$ binary vector and all its immediate neighbors; that is,
	$B = \{ x\in\{0,1\}^k\colon d_H(x,z)\leq 1 \}$ for some $z\in\{0,1\}^k$, where $d_H(x,z):=|\{i\in[k] \colon x_i\neq z_i \}|$ denotes the Hamming distance between $x$ and~$z$. Here $[k]:=\{1,\ldots, k\}$.
\end{definition}

\begin{definition}
	\normalfont
	An $r$-dimensional {\em cylinder set} in $\{0,1\}^k$ is a set $C$ of length-$k$ binary vectors with arbitrary values in $r$ coordinates and fixed values in the other coordinates; that is,
	$C= \{x\in\{0,1\}^k\colon x_i =z_i \text{ for all $i\in\Lambda$}  \} $ for some $z\in\{0,1\}^k$ and some $\Lambda\subseteq [k]$ with $k-|\Lambda| =r$.
\end{definition}
The geometric intuition is simple: a cylinder set corresponds to the vertices of a face of a unit cube,
and a radius-$1$ Hamming ball corresponds to the vertices of a corner of a unit cube.
The vectors in a radius-$1$ Hamming ball are affinely independent.
See~Figure~\ref{figure:stars}A for an illustration.
\medskip

In order to prove Theorem~\ref{theorem:universal},
for each $k\in\N$ and $n\in\N$ we want to find an $m_{k,n}\in\N$ such that:
for any given strictly positive conditional distribution $q(\cdot|\cdot)$,
there exists $p\in\RBM_{n+k,0}$ and $m_{k,n}$ probability sharing steps taking $p$ to a strictly positive joint distribution $p'$
with $p'(\cdot|\cdot)=q(\cdot|\cdot)$.
The idea is that the starting distribution is represented by an RBM with no hidden units, and each sharing step is realized by adding a hidden unit to the RBM. In order to obtain these sequences of sharing steps, we will use the following technical lemma:

\begin{lemma}
	\label{lemma:sharing}
	Let $B$ be a radius-$1$ Hamming ball in $\{0,1\}^k$ and let $C$ be a cylinder subset of $\{0,1\}^k$ containing the center of $B$.
	Let $\lambda^x\in(0,1)$ for all $x\in B\cap C$, let $\tilde y\in\{0,1\}^n$ and let $\delta_{\tilde y}$ denote the Dirac delta on $\{0,1\}^n$ assigning probability one to $\tilde y$.
	Let $p\in\Delta_{k+n}$ be a strictly positive probability distribution with conditionals $p(\cdot|x)$ and let
	\begin{equation*}
	p'(\cdot|x) :=
	\begin{cases}
	\lambda^x p(\cdot|x) + (1-\lambda^x) \delta_{\tilde y}, & \text{for all $x\in B\cap C$}\\
	p(\cdot|x), & \text{for all $x\in \{0,1\}^k\setminus C$}
	\end{cases}.
	\end{equation*}
	Then, for any $\epsilon > 0$, there is a 
	probability sharing step taking $p$ to a joint distribution $p''$
	with conditionals satisfying $\sum_{y} |p''(y|x) - p'(y|x)|\leq \epsilon $ for all $x\in(B\cap C) \cup (\{0,1\}^k\setminus C)$.
\end{lemma}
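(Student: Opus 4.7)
The plan is to carry out a single sharing step and show that the parameters of the product distribution $s$ and the mixture weight $\lambda$ can be tuned, with the aid of the geometry of $B$ and $C$, to hit the prescribed conditional update on $(B\cap C)\cup(\{0,1\}^k\setminus C)$ up to arbitrary accuracy. Writing $s=s_X\otimes s_Y$ and dividing out the $x$-marginal of $p'=\lambda p + (1-\lambda)\,p\ast s$, a direct computation yields
\[
p'(y\mid x)= p(y\mid x)\,\frac{1 + t(x)\,s_Y(y)}{1 + t(x)\,M(x)},\qquad M(x):=\sum_{y'} p(y'\mid x)\,s_Y(y'),
\]
with $t(x) = (1-\lambda)s_X(x)/(\lambda N)$ and $N=\sum_{x'}p(x')\,s_X(x')\,M(x')$. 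The decisive observation is that $\log t(\cdot)$ is an affine function of $x\in\{0,1\}^k$, whose $k+1$ coefficients can be made to take any values in $\R^{k+1}$ by a suitable choice of $\lambda\in(0,1)$ and strictly positive product distribution $s_X$ (the coefficient of $x_i$ is $\log(s_{X,i}(1)/s_{X,i}(0))$, and the constant term is tunable via $\lambda$).

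First, I take $s_Y$ to concentrate at $\tilde y$ by choosing its sigmoid parameters of large magnitude, so that $s_Y(y)\to \mathbbm{1}\{y=\tilde y\}$ and $M(x)\to p(\tilde y\mid x)$. In this limit the conditional update collapses to
\[
p'(\cdot\mid x)\;\longrightarrow\;\mu(x)\,p(\cdot\mid x) + (1-\mu(x))\,\delta_{\tilde y},\qquad \mu(x):=\frac{1}{1 + t(x)\,p(\tilde y\mid x)}.
\]
Matching the target on $B\cap C$ therefore reduces to prescribing $\log t(x)=\log\!\bigl((1-\lambda^x)/(\lambda^x\,p(\tilde y\mid x))\bigr)$ at each point of $B\cap C$, while matching it on $\{0,1\}^k\setminus C$ reduces to driving $\log t(x)\to -\infty$ there so that $\mu(x)\to 1$.

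The key step is to realize both requirements simultaneously with a single affine function $\log t(x)=a_0 + \sum_i a_i x_i$. Let $\Lambda\subseteq [k]$ be the fixed coordinates of $C$ and $z$ the center of $B$. Every point of $B\cap C$ agrees with $z$ on $\Lambda$, and, projected to the free coordinates $\Lambda^c$, the set $B\cap C$ is affinely independent of cardinality $|\Lambda^c|+1$. I use the $|\Lambda^c|+1$ parameters $a_0$ and $\{a_i\}_{i\in\Lambda^c}$ to match the prescribed values on $B\cap C$ exactly, and set the remaining coefficients $a_i$, $i\in\Lambda$, to $-(1-2z_i)\,M$ for a large parameter $M$. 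Any $x\in\{0,1\}^k\setminus C$ differs from $z$ in at least one $i\in\Lambda$, which contributes $-M$ to $\log t(x)$; hence $\log t(x)\to -\infty$ uniformly outside $C$ as $M\to\infty$. Taking $M$ large and $s_Y$ sufficiently concentrated, continuity of the update yields the total-variation bound $\sum_y\lvert p''(y\mid x)-p'(y\mid x)\rvert\leq \epsilon$ uniformly for $x\in (B\cap C)\cup(\{0,1\}^k\setminus C)$. The main bookkeeping is verifying that the prescribed $(a_0,a_1,\ldots,a_k)$ can indeed be implemented by some admissible $(\lambda,s_X)$ in spite of the self-consistent dependence of $N$ on both $s_X$ and $s_Y$; since $N>0$ and $\lambda\in(0,1)$ ranges so that $\log((1-\lambda)/(\lambda N))$ covers all of $\R$, this reduces to solving a single scalar equation and poses no genuine difficulty.
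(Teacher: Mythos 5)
Your proposal is correct and follows essentially the same route as the paper: a single sharing step with a product distribution whose $y$-part concentrates on $\tilde y$ and whose $x$-part concentrates on $C$, using the affine independence of the radius-$1$ Hamming ball (equivalently, the $|\Lambda^c|+1$ free log-ratios of $s_X$ plus the constant term) to realize arbitrary mixture weights $\lambda^x$ on $B\cap C$ while leaving the rows outside $C$ untouched. Your write-up just makes explicit the limit computation and the affine parametrization of $\log t(x)$ that the paper leaves implicit.
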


\begin{proof}
	We define the sharing step $p'=\lambda p + (1-\lambda)p\ast s$ with a product distribution $s$ supported on $\{\tilde y\}\times C\subseteq \{0,1\}^{k+n}$.
	Note that given any distribution $q$ on $C$ and a radius-$1$ Hamming ball $B$ whose center is contained in $C$,
	there is a product distribution $s$ on $C$ such that $(s(x))_{x\in C\cap B} \propto (q(x))_{x\in C\cap B}$.
	In other words, the restriction of a product distribution $s$ to a radius-$1$ Hamming ball $B$ can be made proportional to any non-negative vector of length $|B|$.
	To see this, note that a product distribution is a vector with entries $s(x) = \prod_{i\in[k]} s_i(x_i)$ for all $x=(x_1,\ldots, x_k)$, with factor distributions $s_i$.
	Hence the restriction of $s$ to $B$ is given by the vector $\Big(\prod_i s_i(0), \frac{s_1(1)}{s_1(0)}\prod_{i} s_i(0), \ldots, \frac{s_k(1)}{s_k(0)}\prod_{i} s_i(0)\Big)$, where, without loss of generality, we chose $B$ centered at $(0,\ldots,0)$.
	Now, by choosing the factor distributions $s_i$ appropriately, the vector $\big(\frac{s_1(1)}{s_1(0)},\ldots, \frac{s_k(1)}{s_k(0)}\big)$ can be made arbitrary in $\R^k_+$.
\end{proof}

We have the following two implications of Lemma~\ref{lemma:sharing}:

\begin{corollary}
	\label{implication1}
	For any $\epsilon>0$ and $q(\cdot|x)\in\Delta_n$ for all $x\in B\cap C$,
	there is an $\epsilon'>0$ such that, for any strictly positive joint distribution $p\in\Delta_{k+n}$ with conditionals satisfying $\sum_y | p(y | x) -  \delta_0(y)| \leq \epsilon'$ for all $x\in B\cap C$,
	there are $2^n-1$ sharing steps taking $p$ to a joint distribution $p''$ with conditionals satisfying $\sum_{y} |p''(y|x)- p'(y|x)|\leq \epsilon$ for all $x\in (B\cap C)\cup (\{0,1\}^k\setminus C)$, where
	$\delta_0$ is the Dirac delta on $\{0,1\}^n$ assigning probability one to the vector of zeros and
	\begin{equation*}
	p'(\cdot|x) : =
	\begin{cases}
	q(\cdot|x),& \text{for all $x\in B\cap C$} \\
	p(\cdot|x), & \text{for all $x\in \{0,1\}^k\setminus C$}
	\end{cases} .
	\end{equation*}
\end{corollary}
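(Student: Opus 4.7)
The plan is to realize $p \mapsto p'$ by applying Lemma~\ref{lemma:sharing} exactly $J := 2^n - 1$ times, once for each non-zero output vector, re-using the same Hamming ball $B$ and cylinder $C$ throughout but varying $\tilde y$ and the per-step coefficients $\lambda^x$ so that after all $J$ steps the conditional at each $x \in B \cap C$ lands on the target $q(\cdot|x)$.

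First I reduce to a strictly positive target: replace $q(\cdot|x)$ by a strictly positive $q^{\ast}(\cdot|x) \in \Delta_n$ with $\sum_y |q(y|x) - q^{\ast}(y|x)| \leq \epsilon/3$ for every $x \in B \cap C$. Enumerate the non-zero elements of $\{0,1\}^n$ as $y_1, \ldots, y_J$, set $Q_k(x) := q^{\ast}(0|x) + \sum_{j=1}^k q^{\ast}(y_j|x)$ so that $Q_0(x) > 0$ and $Q_J(x) = 1$, and define
\begin{equation*}
\lambda_k^x := \frac{Q_{k-1}(x)}{Q_k(x)} \in (0,1), \qquad x \in B \cap C, \;\; k = 1,\ldots, J.
\end{equation*}
Consider the ideal chain $p^{(0)}(\cdot|x) := \delta_0$ and $p^{(k)}(\cdot|x) := \lambda_k^x\, p^{(k-1)}(\cdot|x) + (1-\lambda_k^x)\, \delta_{y_k}$ for $x \in B \cap C$. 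Unrolling the recursion yields $p^{(k)}(0|x) = \prod_{j=1}^{k}\lambda_j^x = Q_0(x)/Q_k(x)$ and $p^{(k)}(y_j|x) = (1-\lambda_j^x)\prod_{i=j+1}^k \lambda_i^x = q^{\ast}(y_j|x)/Q_k(x)$ for $j \leq k$, so that $p^{(J)}(\cdot|x) = q^{\ast}(\cdot|x)$ exactly.

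Now apply Lemma~\ref{lemma:sharing} iteratively with per-step accuracy $\eta := \epsilon/(3J)$: at step $k$, feed it the ball $B$, the cylinder $C$, the coefficients $\{\lambda_k^x\}_{x \in B \cap C}$, and $\tilde y = y_k$. Let $\tilde p^{(k)}$ denote the joint distribution obtained after $k$ sharing steps, starting from $\tilde p^{(0)} := p$; strict positivity is preserved throughout since each step is a convex combination $\lambda p + (1-\lambda)\,p\ast s$ of a strictly positive distribution with a non-negative one. By the triangle inequality and $\lambda_k^x \leq 1$,
\begin{equation*}
\sum_y |\tilde p^{(k)}(y|x) - p^{(k)}(y|x)| \leq \eta + \sum_y |\tilde p^{(k-1)}(y|x) - p^{(k-1)}(y|x)|, \qquad x \in B \cap C,
\end{equation*}
which iterates to $\sum_y |\tilde p^{(J)}(y|x) - q^{\ast}(y|x)| \leq \sum_y |p(y|x) - \delta_0(y)| + J\eta$ on $B \cap C$. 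For $x \in \{0,1\}^k \setminus C$, Lemma~\ref{lemma:sharing} gives $\sum_y |\tilde p^{(k)}(y|x) - \tilde p^{(k-1)}(y|x)| \leq \eta$, and telescoping yields $\sum_y |\tilde p^{(J)}(y|x) - p(y|x)| \leq J\eta = \epsilon/3$. Setting $\epsilon' := \epsilon/3$ and combining with the $\epsilon/3$ slack between $q$ and $q^{\ast}$ gives the claimed bound $\sum_y |\tilde p^{(J)}(y|x) - p'(y|x)| \leq \epsilon$ on $(B \cap C) \cup (\{0,1\}^k \setminus C)$.

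The main obstacle is identifying the correct recursion $\lambda_k^x = Q_{k-1}(x)/Q_k(x)$ and verifying that $J = 2^n - 1$ steps exhaust all the degrees of freedom in an arbitrary target conditional; once that algebraic identity is in hand, everything else reduces to routine triangle-inequality bookkeeping over the $J$ steps. A minor subtlety is that $q$ may sit on the boundary of $\Delta_n$, forcing the initial reduction to a strictly positive $q^{\ast}$ so that every $\lambda_k^x$ lies in $(0,1)$ as required by Lemma~\ref{lemma:sharing}; this consumes one third of the $\epsilon$ budget but does not otherwise complicate the argument.
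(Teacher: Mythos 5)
Your proposal is correct and follows essentially the same route as the paper: you realize each target conditional as a chain of $2^n-1$ Dirac-delta mixing steps via Lemma~\ref{lemma:sharing}, and your weights $\lambda_k^x = Q_{k-1}(x)/Q_k(x)$ coincide exactly with the paper's choice $1 - q(\tilde y|x)/\bigl(1-\sum_{\sigma(\tilde y')>\sigma(\tilde y)}q(\tilde y'|x)\bigr)$. Your explicit $\epsilon/3$ bookkeeping and the perturbation to a strictly positive $q^{\ast}$ (needed so that all $\lambda_k^x$ lie in the open interval required by Lemma~\ref{lemma:sharing}) are details the paper leaves implicit, but they do not change the argument.
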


\begin{proof}
	Consider any $x\in B\cap C$.
	We will show that the probability distribution $q(\cdot|x)\in \Delta_n$ can be written as the transformation of a Dirac delta by $2^n-1$ sharing steps.
	Then the claim follows from Lemma~\ref{lemma:sharing}.
	Let $\sigma\colon \{0,1\}^n \to \{0,\ldots,2^n-1\}$ be an enumeration of $\{0,1\}^n$.
	Let $p^{(0)}(y|x) =\delta_{ \sigma^{-1}(0)}(y)$ be the starting distribution (the Dirac delta concentrated at the state $\tilde y\in\{0,1\}^n$ with $\sigma(\tilde y) = 0$) and let the $t$-th sharing step be defined by
	$p^{(t)}(y) = \lambda^x_{\sigma^{-1}(t)} p^{(t-1)}(y|x) + (1-\lambda^x_{\sigma^{-1}(t)})\delta_{\sigma^{-1}(t)}(y)$, 
	for some weight $\lambda^x_{\sigma^{-1}(t)}\in[0,1]$.
	After $2^n-1$ sharing steps, we obtain the distribution
	\begin{equation*}
	p^{(2^n-1)}(y|x) = \sum_{\tilde y} \Big( \prod_{\tilde y'\colon \sigma(\tilde y') > \sigma(\tilde y)} \lambda^x_{\tilde y'}\Big) ( 1- \lambda^x_{\tilde y}) \delta_{\tilde y}(y), \quad\text{for all $y\in\{0,1\}^n$},
	\end{equation*}
	whereby $\lambda^x_{\tilde y}:=0$ for $\sigma(\tilde y)=0$.
	This distribution is equal to $q(\cdot|x)$ for the following choice of weights:
	\begin{equation*}
	\lambda^x_{\tilde y} : = 1 - \frac{q(\tilde y|x)} { 1 - \sum_{\tilde y'\colon \sigma(\tilde y')>\sigma(\tilde y) } q(\tilde y'|x)}, \quad \text{for all } \tilde y\in\{0,1\}^n.
	\end{equation*}
	It is easy to verify that these weights satisfy the condition $\lambda^x_{\tilde y}\in[0,1]$ for all $\tilde y\in\{0,1\}^n$, and $\lambda^x_{\tilde y} =0$ for that $\tilde y$ with $\sigma(\tilde y) =0$, independently of the specific choice of $\sigma$.
\end{proof}

Note that this corollary does not make any statement about the rows $p''(\cdot|x)$ with $x\in C\setminus B$.
When transforming the $(B\cap C)$-rows of $p$ according to Lemma~\ref{lemma:sharing},
the $(C\setminus B)$-rows get transformed as well, in a non-trivial dependent way.
Fortunately, there is a sharing step that allows us to ``reset'' exactly certain rows to a desired point measure, without introducing new non-trivial dependencies:

\begin{corollary}
	\label{implication2}
	For any $\epsilon>0$, any cylinder set $C\subseteq\{0,1\}^k$, and any $\tilde y\in\{0,1\}^n$,
	any strictly positive joint distribution $p$ can be transformed by a probability sharing step to a joint distribution $p''$ with conditionals satisfying $\sum_{y}|p''(y|x)-p'(y|x)|\leq \epsilon$ for all $x\in\{0,1\}^k$, where
	\begin{equation*}
	p'(\cdot|x) :=
	\begin{cases}
	\delta_{\tilde y},& \text{for all $x\in C$} \\
	p(\cdot|x), & \text{for all $x\in \{0,1\}^k\setminus C$}
	\end{cases}.
	\end{equation*}
\end{corollary}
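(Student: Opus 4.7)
The plan is to realize the desired transformation by a single sharing step $p'' = \lambda p + (1-\lambda)\,p\ast s$ in which $s$ is a product distribution strongly concentrated on $C\times\{\tilde y\}$ and $\lambda$ is tuned small but not too small relative to the concentration parameter. Writing $C = \{x\in\{0,1\}^k : x_i = z_i \text{ for all } i\in\Lambda\}$ with $\Lambda\subseteq[k]$ and $z\in\{0,1\}^k$, and given a small $\delta\in(0,1/2)$, I would take the univariate factors of $s$ to assign mass $1-\delta$ to $z_i$ for each $i\in\Lambda$, to be uniform on the coordinates $i\in[k]\setminus\Lambda$, and to assign mass $1-\delta$ to $\tilde y_j$ on each output coordinate. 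Then $s = s_X\, s_Y$ is strictly positive, and $s_X$ is bounded below on $C$ while $s_X(x) = O(\delta)$ off $C$; similarly, $s_Y(\tilde y)\to 1$ while $s_Y(y)=O(\delta)$ for $y\neq\tilde y$.

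The next step is a direct asymptotic analysis of the conditionals of $p''$. Writing $(p\ast s)(x,y) = p(x,y) s(x,y)/Z$, the dominant contribution to $Z=\sum_{x',y'}p(x',y')s(x',y')$ comes from pairs $(x',\tilde y)$ with $x'\in C$, so $Z$ stays bounded below by a positive constant as $\delta\to 0$. For $x\notin C$, both $p''(x,y)$ and the marginal $\sum_{y'}p''(x,y')$ decompose as $\lambda p(x,y)+O(\delta)$ and $\lambda p(x)+O(\delta)$, so $p''(y|x) = p(y|x)\bigl(1+O(\delta/\lambda)\bigr)$. For $x\in C$, the Hadamard term concentrates on $y=\tilde y$ because $s_Y(y)=O(\delta)$ for $y\neq\tilde y$; writing $P_C := \sum_{x'\in C} p(x',\tilde y)$, one gets $p''(x,\tilde y) = \lambda p(x,\tilde y) + (1-\lambda) p(x,\tilde y)/P_C + o(1)$ and $p''(x,y)=\lambda p(x,y)+O(\delta)$ for $y\neq \tilde y$, which forces $p''(\tilde y|x)\to 1$ and $p''(y|x)\to 0$ for $y\neq\tilde y$, provided $\lambda\to 0$.

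The two asymptotic requirements $\lambda\to 0$ and $\delta/\lambda\to 0$ are jointly met by, for instance, $\lambda = \sqrt{\delta}$; choosing $\delta$ small enough then makes all error terms sum to at most $\epsilon$ in each row. The one delicate point I expect is precisely this balancing: $\lambda$ must shrink quickly enough that $(1-\lambda)\,p\ast s$ dominates $\lambda p$ on $C\times\{\tilde y\}$, yet not so quickly that the $O(\delta)$-sized residual of $p\ast s$ outside $C$ overwhelms $\lambda p$ and perturbs the rows $x\notin C$. The intermediate rate $\lambda = \sqrt{\delta}$ is the natural resolution of this tension, and once it is fixed the final quantitative estimate is a routine $\ell_1$ bound on each row using the strict positivity of $p$.
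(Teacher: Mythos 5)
Your proposal is correct and follows essentially the same route as the paper, whose proof is the one-line remark that the sharing step can be taken as $p''=\lambda p+(1-\lambda)p\ast s$ with $s$ close to the uniform distribution on $\{\tilde y\}\times C$ and $\lambda$ close to $0$. The balancing of the concentration parameter against $\lambda$ (e.g.\ $\lambda=\sqrt{\delta}$), which you rightly flag as the delicate point, is exactly the detail the paper leaves implicit in the phrase ``close enough depending on $\epsilon$.''
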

\begin{proof}
	The sharing step can be defined as $p''= \lambda p + (1-\lambda)p\ast s$ with $s$ close to the uniform distribution on $\{\tilde y\}\times C$ and $\lambda$ close to $0$ (close enough depending on $\epsilon$).
\end{proof}
We will refer to a sharing step as described in Corollary~\ref{implication2} as a {\em reset} of the $C$-rows of $p$. \\

With all the observations made above, we can construct an algorithm that generates an arbitrarily accurate approximation of any given conditional distribution by applying a sequence of sharing steps to any given strictly positive joint distribution.
We denote by {\em star} the intersection of a radius-$1$ Hamming ball and a cylinder set containing the center of the ball.
See Figure~\ref{figure:stars}A.
The details of the algorithm are given in Algorithm~\ref{alg1}.

\begin{algorithm} [t]
	\SetAlgoLined
	\KwIn{Strictly positive joint distribution $p$, target conditional distribution $q(\cdot|\cdot)$, and $\epsilon>0$}
	\KwOut{Transformation $p'$ of the input distribution with $\sum_y|p'(y|x) - q(y|x)|\leq \epsilon$ for all $x$}
	Initialize $\Bcal \leftarrow \emptyset$\;
	\While{$\Bcal\not\supseteq \{0,1\}^k$}{
		Choose (disjoint) cylinder sets $C^1,\ldots, C^K$ packing $\{0,1\}^k\setminus \Bcal$\;
		If needed, perform at most $K$ sharing steps resetting the $C^i$ rows of $p$ for all $i\in[K]$, \newline
		taking $p(\cdot|x)$ close to $\delta_0$ for all $x\in C^i$ for all $i\in[K]$ and leaving all other rows close to their current values,
		according to Corollary~\ref{implication2}\; %
		\For{each $i\in[K]$}{
			Perform at most $2^n-1$ sharing steps taking $p(\cdot|x)$ close to $q(\cdot|x)$ for all $x\in B^i$, where $B^i$ is some star contained in $C^i$, and leaving the $(\{0,1\}^k\setminus C^i)$-rows close to their current values, according to Corollary~\ref{implication1}\;
		}
		$\Bcal\leftarrow \Bcal \cup (\cup_{i\in[K]} B^i)$\;
	}
	\caption{%
		Algorithmic illustration of the proof of Theorem~\ref{theorem:universal}.
		The algorithm performs sequential sharing steps on a strictly positive joint distribution $p\in\Delta_{k+n}$ until the resulting distribution $p'$ has a conditional distribution $p'(\cdot|\cdot)$ satisfying $\sum_y |p'(y|x) - q(y|x)|\leq \epsilon$ for all $x$.
		Here $\Bcal\subseteq\{0,1\}^k$ denotes the set of inputs $x$ that have been readily processed in the current iteration.
	}\label{alg1}
	\BlankLine
\end{algorithm}

In order to obtain a bound on the number $m$ of hidden units for which $\RBM_{n,m}^k$ can approximate
a given target conditional distribution arbitrarily well, we just need to evaluate the number of sharing steps run by Algorithm~\ref{alg1}.
For this purpose, we investigate the combinatorics of sharing step sequences and evaluate their worst case lengths.
We can choose as starting distribution some $p \in\RBM_{n+k,0}$ with conditionals satisfying $\sum_y |p(y|x) - \delta_0(y)|\leq \epsilon'$ for all $x\in\{0,1\}^k$, for some $\epsilon'>0$ small enough depending on the target conditional $q(\cdot|\cdot)$ and the targeted approximation accuracy $\epsilon$.

\begin{definition}
	\normalfont
	A sequence of stars $B^1,\ldots, B^l$ packing $\{0,1\}^k$ with the property that the smallest cylinder set containing any of the stars in the sequence does not intersect any previous star in the sequence
	is called a {\em star packing sequence} for $\{0,1\}^k$.
\end{definition}

The number of sharing steps run by Algorithm~\ref{alg1} is bounded from above by $(2^n-1)$ times the length of a star packing sequence for the set of inputs $\{0,1\}^k$.
Note that the choices of stars and the lengths of the possible star packing sequences are not unique.
Figure~\ref{figure:stars}B gives an example showing that starting a sequence with large stars is not necessarily the best strategy to produce a short sequence.
The next lemma states that there is a class of star packing sequences of a certain length, depending on the size of the input space.
Thereby, this lemma upper-bounds the worst case complexity of Algorithm~\ref{alg1}.

\begin{lemma}
	\label{lemma:starpacking}
	Let $r\in\N$,  $S(r):=1+2+\cdots+r$, $k\geq S(r)$,  $f_i(z):= 2^{S(i-1)} + (2^i -(i+1))z$, and $F(r):=f_r(f_{r-1}( \cdots f_2(f_1) ))$.
	There is a star packing sequence for $\{0,1\}^k$ of length
	$2^{k- S(r)} F(r)$. 
	Furthermore, for this sequence, Algorithm~\ref{alg1} requires at most $R(r):=\prod_{i=2}^{r}(2^{i}-(i+1))$ resets.
\end{lemma}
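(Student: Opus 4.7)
The plan is to prove the lemma by induction on $r$, giving an explicit recursive construction of the star packing sequence.

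For the base case $r=1$, partition $\{0,1\}^k$ into $2^{k-1}$ disjoint edges (pairs of vertices differing in a single fixed coordinate). Each edge is a star of size $2$ whose smallest containing cylinder is the edge itself, and since the edges are pairwise disjoint the induced sequence is trivially a star packing sequence of length $2^{k-1}=2^{k-S(1)}F(1)$ (since $S(1)=1$ and $F(1)=f_1=1$). Algorithm~\ref{alg1} can process all these edges in a single iteration requiring at most $R(1)=1$ reset (the empty product convention serving as an upper bound).

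For the inductive step, write $x\in\{0,1\}^k$ as $(w,y)$ with $w\in\{0,1\}^{k-r}$ and $y\in\{0,1\}^r$, and partition $\{0,1\}^k$ into $2^{k-r}$ affine sub-cubes indexed by $w$. In Phase~1, in each sub-cube take the big star $B_w:=\{(w,0)\}\cup\{(w,e_i):i\in[r]\}$ of size $r+1$, where $e_i$ is the $i$-th standard basis vector in $\{0,1\}^r$; its smallest containing cylinder is exactly the sub-cube $\{w\}\times\{0,1\}^r$. List these $2^{k-r}$ big stars first. In Phase~2, for each leftover position $y\in\{0,1\}^r$ with Hamming weight at least $2$ the set $\{(w,y):w\in\{0,1\}^{k-r}\}$ is a copy of $\{0,1\}^{k-r}$; there are exactly $2^r-(r+1)$ such copies. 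Append, copy by copy, the star packing sequence of length $L(k-r,r-1):=2^{(k-r)-S(r-1)}F(r-1)$ supplied by the inductive hypothesis (which applies since $k\geq S(r)=S(r-1)+r$ implies $k-r\geq S(r-1)$).

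The packing property is then straightforward to verify: the Phase~1 stars and their cylinders are pairwise disjoint by construction; any Phase~2 star lies inside a single leftover copy, so its smallest cylinder fixes the last $r$ coordinates to some $y$ of Hamming weight at least $2$, hence is disjoint from every Phase~1 star (whose last $r$ coordinates have weight at most $1$) and from every other leftover copy, while within the same copy the inductive hypothesis applies. A short computation using $S(r)=S(r-1)+r$ gives the length
\begin{equation*}
2^{k-r}+(2^r-(r+1))\,L(k-r,r-1)=2^{k-S(r)}\bigl(2^{S(r-1)}+(2^r-(r+1))F(r-1)\bigr)=2^{k-S(r)}F(r).
\end{equation*}

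The hard part, and the place where care is needed, is the reset count. Phase~1 is processed by a single iteration of Algorithm~\ref{alg1} using the $2^{k-r}$ sub-cubes as cylinders $C^i$, and since the initial conditionals are close to $\delta_0$ no resets are required in this iteration. Each of the $2^r-(r+1)$ leftover copies is then handled by a recursive invocation of the construction, which by the inductive hypothesis uses at most $R(r-1)$ resets. The key subtlety is to argue that the single reset restoring the rows of a leftover copy to being close to $\delta_0$ before its recursive invocation can be identified with the first reset counted by the inductive $R(r-1)$, so that no double counting occurs. Modulo this identification, the recursion $R(r)=(2^r-(r+1))R(r-1)$ follows and yields the claimed closed form $R(r)=\prod_{i=2}^{r}(2^i-(i+1))$.
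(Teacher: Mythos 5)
Your construction of the star packing sequence is essentially the paper's own, just organized as an induction on $r$ instead of the paper's explicit multi-level branch bookkeeping: your Phase~1 stars are the paper's $2^{k-S(r)}\cdot 2^{S(r-1)}=2^{k-r}$ first-step stars, your leftover copies are the paper's cylinder sets $\cup_{j_1}D^{(j_1,j_2)}$, and the length computation $2^{k-r}+(2^r-(r+1))\,2^{(k-r)-S(r-1)}F(r-1)=2^{k-S(r)}F(r)$ is correct, as is your verification of the packing property (Phase-1 cylinders pairwise disjoint; each Phase-2 cylinder confined to a single leftover copy, hence disjoint from all Phase-1 stars and from the other copies, with the within-copy condition supplied by induction).

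The gap is exactly where you flagged it: the proposed identification of the reset of a leftover copy with ``the first reset counted by the inductive $R(r-1)$'' does not go through. The $R(r-1)$ resets of the inductive hypothesis are the resets \emph{internal} to the sub-construction---they restore the sub-construction's own leftover sub-copies \emph{after} its Phase~1, and they presuppose that the sub-construction starts from rows close to $\delta_0$. The reset that brings a corrupted leftover copy back to $\delta_0$ \emph{before} the recursive invocation is therefore an additional sharing step, and the recursion your construction actually yields is $R'(r)=(2^r-(r+1))\bigl(1+R'(r-1)\bigr)$ with $R'(1)=0$, i.e.
\begin{equation*}
R'(r)=\sum_{j=2}^{r}\prod_{i=j}^{r}\bigl(2^i-(i+1)\bigr),
\end{equation*}
which agrees with $R(r)$ for $r\le 2$ but exceeds it from $r=3$ on ($R'(3)=8$ versus $R(3)=4$; $R'(4)=99$ versus $R(4)=44$). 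To be fair, the paper's proof supports the product formula only with the sentence that the number of resets equals ``the number of branches created from the first step on,'' which counts the leaves of the branching tree rather than all branches, so you have located a genuine soft spot rather than merely failing to reproduce an argument; but as a proof of the lemma as stated, the reset bound is not established. Note that replacing $R(r)$ by $R'(r)$ costs only a bounded factor (about $2.3$, since $R'(r)/R(r)=1+1/a_2+1/(a_2a_3)+\cdots$ with $a_i:=2^i-(i+1)$) in the lower-order term of Theorem~\ref{theorem:universal} and leaves the leading coefficient $K(r)$ untouched.
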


\begin{figure}[t]
	\centering
	A\hspace{.1cm}\includegraphics[clip=true,trim=0cm .5cm .2cm 0cm,scale=.8]{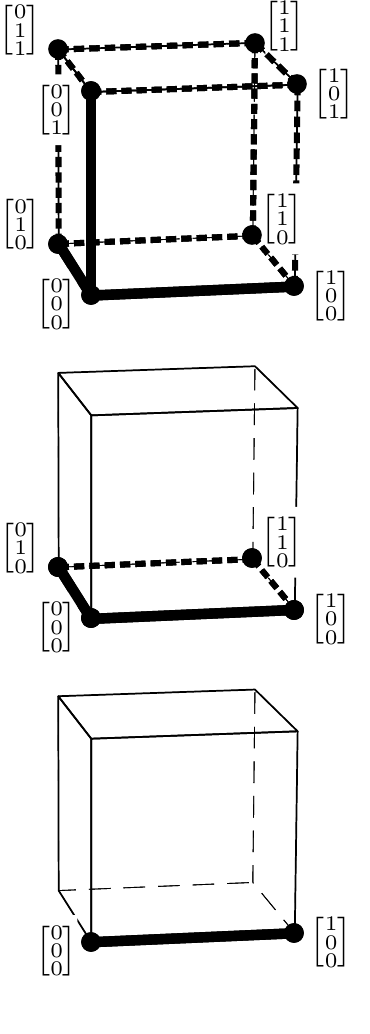}
	\;\;\;\;
	B\hspace{.1cm}\includegraphics[clip=true,trim=0cm .5cm .2cm 0cm,scale=.8]{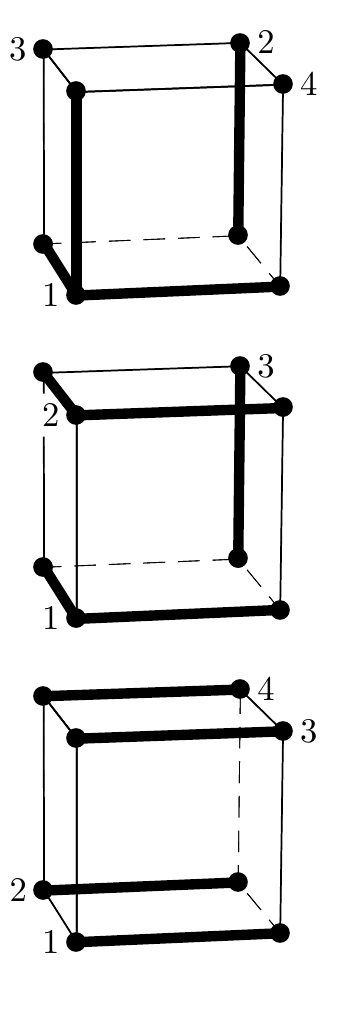}
	\;\;\;\;
	C\hspace{-.4cm}\includegraphics[clip=true,trim=0cm 0.8cm 1cm .6cm,scale=.9]{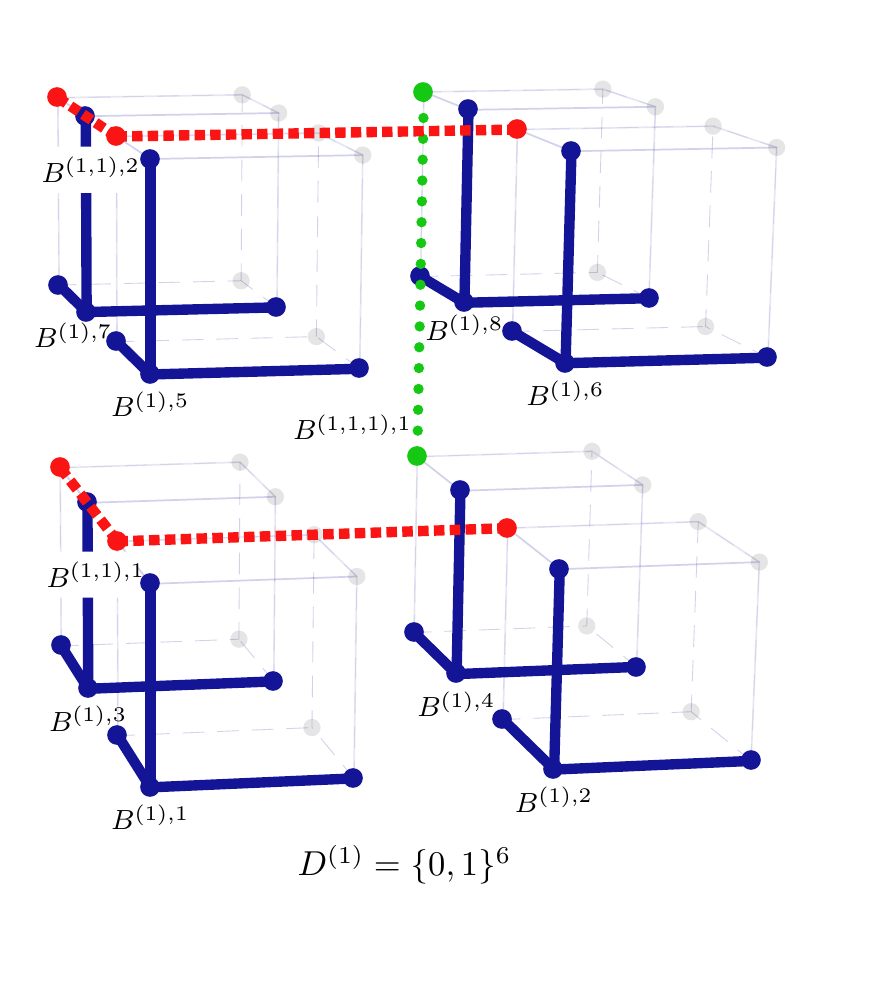} 
	\caption{
		A)~Examples of radius-$1$ Hamming balls in cylinder sets of dimension $3$, $2$, and $1$.
		The cylinder sets are shown as bold vertices connected by dashed edges, and the nested Hamming balls (stars) as bold vertices connected by solid edges.
		B)~Three examples of star packing sequences for $\{0,1\}^3$.
		C)~Illustration of the star packing sequence constructed in Lemma~\ref{lemma:starpacking} for $\{0,1\}^6$.
	}
	\label{figure:stars}
\end{figure}

\begin{proof}
	The star packing sequence is constructed by the following procedure.
	In each step, we define a set of cylinder sets packing all sites of $\{0,1\}^k$ that have not been covered by stars so far,
	and include a sub-star of each of these cylinder sets in the sequence.
	
	As an initialization step, we split $\{0,1\}^k$ into $2^{k-S(r)}$ $S(r)$-dimensional cylinder sets,
	denoted $D^{(j_1)}$, $j_1\in\{1,\ldots, 2^{k-S(r)} \}$.
	
	In the first step, for each $j_1$,
	the $S(r)$-dimensional cylinder set $D^{(j_1)}$ is packed by $2^{S(r-1)}$ $r$-dimensional cylinder sets $C^{(j_1),i}$, $i\in\{1,\ldots, 2^{S(r-1)} \}$.
	For each $i$, we define the star $B^{(j_1),i}$ as the radius-$1$ Hamming ball within $C^{(j_1),i}$ centered at the smallest element of $C^{(j_1),i}$ (with respect to the lexicographic order of $\{0,1\}^k$), and include it in the sequence.
	
	At this point, the sites in $D^{(j_1)}$ that have not yet been covered by stars is $D^{(j_1)} \setminus(\cup_{i} B^{(j_1),i} )$.
	This set is split into $2^r-(r+1)$ $S(r-1)$-dimensional cylinder sets,
	which we denote by $D^{(j_1,j_2)}$, $j_2\in\{1,\ldots,2^r-(r+1) \}$.
	
	Note that
	$\cup_{j_1} D^{(j_1,j_2)}$ is a cylinder set, and hence, for each $j_2$, the $(\cup_{j_1} D^{(j_1,j_2)})$-rows of a conditional distribution being processed by Algorithm~\ref{alg1} can be jointly reset by one single sharing step to achieve $p'(\cdot|x)\approx\delta_0$ for all $x\in \cup_{j_1} D^{(j_1,j_2)}$.
	
	In the second step, for each $j_2$,
	the cylinder set $D^{(j_1,j_2)}$ is packed by $2^{S(r-2)}$ $(r-1)$-dimensional cylinder sets $C^{(j_1,j_2),i}$, $i\in\{1,\ldots, 2^{S(r-2)}\}$,  and the corresponding stars are included in the sequence.
	
	The procedure is iterated until the $r$-th step. In this step, each $D^{(j_1,\ldots, j_r)}$ is a $1$-dimensional cylinder set
	and is packed by a single $1$-dimensional cylinder set $C^{(j_1,\ldots, j_r),1} = B^{(j_1,\ldots, j_r),1}$.
	Hence, at this point, all of $\{0,1\}^k$ has been exhausted and the procedure terminates.
	
	Summarizing, the procedure is initialized by creating the branches $D^{(j_1)}$, $j_1\in[2^{k-S(r)}]$.
	In the first step, each branch $D^{(j_1)}$ produces $2^{S(r-1)}$ stars and splits into the branches $D^{(j_1,j_2)}$, $j_2\in [2^r-(r+1)]$.
	More generally, in the $i$-th step, each branch $D^{(j_1,\ldots, j_i)}$ produces $2^{S(r-i)}$ stars,
	and splits into the branches $D^{(j_1,\ldots, j_i,j_{i+1})}$, $j_{i+1}\in[2^{r-(i-1)}-(r+1-(i-1))]$.
	
	The total number of stars $D^{(j_1,\ldots, j_r)}$ is given precisely by $2^{k-S(r)}$ times the value of the iterative function $F(r)=f_r(f_{r-1}(\cdots f_2(f_1)))$, whereby $f_1=1$.
	The total number of resets is given by the number of branches created from the first step on, which is precisely $R(r)=\prod_{i\in[r]}(2^i-(i+1))$.
	
	Figure~\ref{figure:stars}C offers an illustration of these star packing sequences. 
	The figure shows the case $k=S(3)=6$.
	In this case, there is only one initial branch $D^{(1)}=\{0,1\}^6$.
	The stars $B^{(1),i}$, $i\in[2^{S(2)}]=[8]$ are shown in solid blue, $B^{(1,1),i}$, $i\in[2^{S(1)}]=[2]$ in dashed red, and $B^{(1,1,1),1}$ in dotted green.
	For clarity, only these stars are highlighted.
	The stars $B^{(1,j_2),i}$ and $B^{(1,j_2,1),1}$ resulting from split branches are similar, translated versions of the highlighted ones. 
\end{proof}

With this, we obtain the general bound of the theorem:

\begin{proposition}[Theorem~\ref{theorem:universal}, general bound]
	\label{proposition:universal1}
	Let $k \geq S(r)$.
	The model $\RBM_{n,m}^k$ can approximate every conditional distribution from $\Del{k}{n}$ arbitrarily well whenever
	$m \geq m^{(r)}_{k,n}$, where $m^{(r)}_{k,n}:=2^{k- S(r)} F(r) (2^n-1) + R(r)$.
\end{proposition}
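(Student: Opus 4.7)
The plan is to combine Algorithm~\ref{alg1} with the star packing sequence from Lemma~\ref{lemma:starpacking} and simply count sharing steps, each of which corresponds to a single added hidden unit. Given a target conditional $q(\cdot|\cdot)\in\Del{k}{n}$ and a tolerance $\epsilon>0$, I would first pick an initial joint distribution $p\in\RBM_{n+k,0}$ whose every row satisfies $\sum_y|p(y|x)-\delta_0(y)|\leq\epsilon'$ for a sufficiently small $\epsilon'$. Since $\RBM_{n+k,0}$ consists of all product distributions on $\{0,1\}^{k+n}$, this is achieved by taking a product of uniform marginals on $x$ and sharply peaked marginals at $0$ on each $y_j$. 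This costs no hidden units.

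Then I would execute Algorithm~\ref{alg1} using the specific star packing sequence constructed in the proof of Lemma~\ref{lemma:starpacking}: the $2^{k-S(r)}F(r)$ stars grouped into branching levels, each level preceded by one reset that simultaneously clears the cylinder unions $\cup_{j_1} D^{(j_1,j_2,\ldots)}$. By Corollary~\ref{implication1}, each star in the sequence is processed using at most $2^n-1$ sharing steps, and by Corollary~\ref{implication2} each reset uses exactly one sharing step. Lemma~\ref{lemma:starpacking} bounds the number of resets by $R(r)$, so the total number of sharing steps is at most $2^{k-S(r)}F(r)(2^n-1) + R(r) = m^{(r)}_{k,n}$. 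Because each sharing step amounts to Hadamard-multiplying by a mixture representable by one new hidden unit, the final distribution $p'$ lies in $\RBM_{n+k,m^{(r)}_{k,n}}$, and its conditional approximates $q(\cdot|\cdot)$ to within $\epsilon$ in total variation on every row.

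The main obstacle is controlling error propagation across the long sequence of sharing steps, since each Corollary only guarantees that one step produces an output within some prescribed distance of the idealized intermediate distribution, and subsequent steps operate on this perturbed input. I would address this by a backward induction on the star packing sequence: working from the last step to the first, I would choose the tolerance $\epsilon_t>0$ at step $t$ small enough that, when fed as input to step $t+1$, it can be accommodated by the $\epsilon'$ promised in Corollary~\ref{implication1} or~\ref{implication2} for that later step, while still producing final error at most $\epsilon$. Corollary~\ref{implication1} explicitly states that an $\epsilon'$ on the preceding row values suffices to obtain the desired $\epsilon$ on the updated rows, and Corollary~\ref{implication2} preserves the unchanged rows to arbitrary accuracy, so such a schedule exists. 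Since the resulting bound $m^{(r)}_{k,n}$ does not depend on $\epsilon$ while the parameter choices do, taking $\epsilon\to 0$ shows that every $q(\cdot|\cdot)\in\Del{k}{n}$ is a limit point of the conditional distributions represented by $\RBM^k_{n,m^{(r)}_{k,n}}$, which is exactly the arbitrary-approximation claim.
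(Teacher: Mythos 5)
Your proposal is correct and follows essentially the same route as the paper: the paper's proof is exactly the observation that Algorithm~\ref{alg1}, run with the star packing sequence of Lemma~\ref{lemma:starpacking}, uses at most $2^{k-S(r)}F(r)$ stars at $2^n-1$ sharing steps each plus $R(r)$ resets at one step each, with each sharing step realized by one hidden unit. Your backward-induction schedule for the tolerances $\epsilon_t$ is a correct and welcome elaboration of the error-propagation issue that the paper leaves implicit in the quantifier structure of Corollaries~\ref{implication1} and~\ref{implication2}.
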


\begin{proof}
	This is in view of the complexity of Algorithm~\ref{alg1} for the sequence given in Lemma~\ref{lemma:starpacking}.
\end{proof}

In order to make the universal approximation bound more comprehensible, in Table~\ref{table:thm} we evaluated the sequence $m^{(r)}_{n,k}$ for $r=1,2,3\ldots$ and $k\geq S(r)$.
Furthermore, the next proposition gives an explicit expression for the coefficients $2^{-S(r)}F(r)$ and $R(r)$ appearing in the bound.
This yields the second part of Theorem~\ref{theorem:universal}.
In general, the bound $m^{(r)}_{n,k}$ decreases with increasing $r$, except possibly for a few values of $k$ when $n$ is small.
For a pair $(k,n)$, any $m^{(r)}_{n,k}$ with $k\geq S(r)$ is a sufficient number of hidden units for obtaining a universal approximator.

\begin{table}
	\centering
	\begin{tabular}{c | l c l r c c r }
		&  \multicolumn{6}{l}{$m_{n,k}^{(r)} = $}   \\[4pt]
		$r$&   $2^{k}$ &$2^{-S(r)} $ &   $F(r)$ & $(2^n-1)$& $ +$& $R(r)$   \\ 
		\hline
		$1$  	 &$2^{k}$ &$2^{-1} $ &   $1$ & $(2^n-1)$& $ +$&    $0$   \\
		$2$ 	 &$2^{k}$ &$2^{-3} $ &   $3$ & $(2^n-1)$& $ +$&    $1$   \\
		$3$ 	 &$2^{k}$ &$2^{-6} $ &  $20$ & $(2^n-1)$& $ +$&    $4$   \\
		$4$ 	 &$2^{k}$ &$2^{-10}$ & $284$ & $(2^n-1)$& $ +$&   $44$   \\
		$5$ 	 &$2^{k}$ &$2^{-15}$ &$8408$ & $(2^n-1)$& $ +$& $1144$   \\
		$\vdots$&$\vdots$&\multicolumn{2}{c}{$\vdots$} 	&$\vdots$& $\vdots$&$\vdots$  \\
		$>17$  &$2^{k}$ &\multicolumn{2}{c}{$0.2263$} &$(2^n-1)$& $+$ & $ 2^{S(r)} 0.0269$
	\end{tabular}
	\caption{Numerical evaluation of the bounds from Proposition~\ref{proposition:universal1}.
		Each row evaluates the universal approximation bound $m^{(r)}_{n,k}$ for a value of $r$.
	}
	\label{table:thm}
\end{table}

\begin{proposition}[Theorem~\ref{theorem:universal}, explicit bounds]
	\label{proposition:secondp}
	The function $K(r):=2^{-S(r)}F(r)$ is bounded from below and above as $K(6) \prod_{i=7}^r\left( 1 -\frac{i-3}{2^i}\right)\leq  K(r) \leq K(6) \prod_{i=7}^r\left( 1 -\frac{i-4}{2^i}\right)$ for all $r\geq 6$. Furthermore, $K(6)\approx 0.2442$ and $K(\infty)\approx 0.2263$.
	Moreover, $R(r):=\prod_{i=2}^r (2^i -(i+1)) = 2^{S(r)} P(r)$, where $P(r):=\frac{1}{2}\prod_{i=2}^r (1 - \frac{(i+1)}{2^i})$, and
	$P(\infty) \approx 0.0269$. 
\end{proposition}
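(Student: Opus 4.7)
The plan is to first convert the iterative definition of $K(r)$ into a first-order linear recursion. Since $F(r)=f_r(F(r-1))=2^{S(r-1)}+(2^r-(r+1))F(r-1)$, dividing by $2^{S(r)}=2^{S(r-1)+r}$ yields
\[
K(r) = 2^{-r} + \alpha_r K(r-1), \qquad \alpha_r := 1-(r+1)/2^r.
\]
Writing $\beta_r^{-}:=1-(r-3)/2^r$ and $\beta_r^{+}:=1-(r-4)/2^r$, the identities $\alpha_r-\beta_r^{-}=-4/2^r$ and $\alpha_r-\beta_r^{+}=-5/2^r$ give
\[
K(r)-\beta_r^{-}K(r-1)=\frac{1-4K(r-1)}{2^r},\qquad K(r)-\beta_r^{+}K(r-1)=\frac{1-5K(r-1)}{2^r}.
\]
So $K(r)\ge \beta_r^{-}K(r-1)$ is equivalent to $K(r-1)\le 1/4$, and $K(r)\le \beta_r^{+}K(r-1)$ to $K(r-1)\ge 1/5$. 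The whole proof therefore reduces to maintaining the two invariants $1/5\le K(r)\le 1/4$ for all $r\ge 6$, combined with simple inductions.

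The upper invariant is easy. Starting from the base value $K(6)=64003/2^{18}<1/4$ (read off from the iteration for $F$ in Lemma~\ref{lemma:starpacking}), the step is immediate: if $K(r-1)\le 1/4$, then $K(r)\le 2^{-r}+\alpha_r/4 = 1/4+(3-r)/2^{r+2}\le 1/4$ for $r\ge 3$. Plugging this into the first identity above and iterating yields the claimed lower bound $K(r)\ge K(6)\prod_{i=7}^r \beta_i^{-}$ by induction on $r\ge 6$.

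The hard part is the lower invariant $K(r)\ge 1/5$: the recursion alone does not preserve it, since $K(r-1)=1/5$ would force $K(r)<1/5$ once $r\ge 5$. I would get around this by using the already proved lower bound to write $K(r)\ge K(6)\prod_{i=7}^\infty(1-(i-3)/2^i)$, then bound the tail product from below by $1-\sum_{i=7}^\infty (i-3)/2^i$ using $\prod(1-a_i)\ge 1-\sum a_i$ for $a_i\in[0,1]$. The tail sum is evaluated exactly from the closed forms $\sum_{i\ge 0} i\cdot 2^{-i}=2$ and $\sum_{i\ge 0} 2^{-i}=2$, which give $\sum_{i=7}^\infty (i-3)/2^i=5/64$. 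Hence $K(r)\ge (59/64)K(6)>1/5$, after which the upper bound $K(r)\le K(6)\prod_{i=7}^r\beta_i^{+}$ follows by the same inductive scheme as before.

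For the limit, both tail products $\prod_{i=7}^\infty(1-(i-c)/2^i)$ converge (the terms are $O(i/2^i)$ summable), and at $r=\infty$ the two sides of the sandwich pin down $K(\infty)\approx 0.2263$ by direct numerical evaluation. The statement about $R(r)$ is a one-line algebraic manipulation: pulling the factor $2^i$ out of each term gives $R(r)=\prod_{i=2}^r 2^i(1-(i+1)/2^i)=2^{\sum_{i=2}^r i}\prod_{i=2}^r(1-(i+1)/2^i)=2^{S(r)-1}\prod_{i=2}^r(1-(i+1)/2^i)=2^{S(r)}P(r)$. Finally, $P(\infty)\approx 0.0269$ is obtained by numerical evaluation of the infinite product, whose convergence is controlled by $\sum (i+1)/2^i=4$.
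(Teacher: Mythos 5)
Your proof is correct and follows the same overall strategy as the paper: derive the first-order recursion $K(r)=2^{-r}+(1-(r+1)2^{-r})K(r-1)$ and observe that the comparison with the product factors $1-(i-3)/2^i$ and $1-(i-4)/2^i$ is exactly equivalent to the two invariants $K(r-1)\leq 1/4$ and $K(r-1)\geq 1/5$. Where you genuinely improve on the paper is the justification of the lower invariant $K(r)>1/5$: the paper establishes it by numerically evaluating the infinite product $\prod_{i=0}^\infty(1-(i-3)/2^i)\approx 7.7413$ with a computer algebra system and concluding $K(r)\geq 0.2259$, whereas you derive it self-containedly from the already-proved lower bound via $\prod(1-a_i)\geq 1-\sum a_i$ and the exact tail sum $\sum_{i=7}^\infty (i-3)/2^i=5/64$, giving $K(r)\geq (59/64)K(6)>1/5$ with only rational arithmetic. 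Your explicit induction $K(r)\leq 1/4+(3-r)/2^{r+2}$ for the upper invariant is also tighter than the paper's appeal to monotonicity of $K(r)$ plus the single computation $K(6)\leq 1/4$. One small imprecision: the sandwich does not by itself pin down $K(\infty)\approx 0.2263$ --- the two tail products only bracket the limit in roughly $[0.226,0.229]$ --- so, as in the paper, the stated value of $K(\infty)$ (and of $P(\infty)$) ultimately rests on direct numerical iteration of the recursion; your phrase ``by direct numerical evaluation'' covers this, but you should not attribute the four-digit value to the bounds. The algebra for $R(r)=2^{S(r)}P(r)$ and the value of $K(6)=64003/2^{18}=512024/2^{21}$ check out.
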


\begin{proof}
	From the definition of $S(r)$ and $F(r)$, we obtain that
	\begin{equation}
	K(r) = 2^{-r} + K(r-1)(1 - 2^{-r}(r+1)).
	\label{eq:K}
	\end{equation}
	Note that $K(1)=\frac{1}{2}$, and that $K(r)$ decreases monotonically.
	
	Now, note that if $K(r-1)\leq \frac{1}{c}$, then the left hand side of Equation~\eqref{eq:K} is bounded from below as
	$K(r) \geq K(r-1)(1 - 2^{-r}(r+1-c))$.
	For a given $c$, let $r^c$ be the first $r$ for which $K(r-1)\leq \frac{1}{c}$, assuming that such an $r$ exists.
	Then
	\begin{equation}
	K(r)\geq K(r^c-1)\prod_{i=r^c}^r \left( 1 -\frac{i+1-c}{2^i}\right),\quad \text{for all $r\geq r^c$}.
	\label{eq:Klower}
	\end{equation}
	Similarly, if $K(r)>\frac{1}{d}$ for all $r\geq r^{b}$, then
	\begin{equation*}
	K(r) \leq K(r^b-1) \prod_{i=r^b}^r \left( 1 -\frac{i+1-b}{2^i}\right),\quad\text{for any $r\geq r^b$}.
	\end{equation*}
	Direct computations show that $K(6)\approx 0.2445 \leq \frac{1}{4}$.
	On the other hand, using the computational engine \texttt{Wolfram|Alpha(access June 01, 2014)} we obtain that $\prod_{i=0}^\infty \left( 1 -\frac{i-3}{2^i}\right)\approx 7.7413$.
	Plugging both terms into Equation~\eqref{eq:Klower} yields that
	$K(r)$ is always bounded from below by $0.2259$.
	
	Since $K(r)$ is never smaller than or equal to $\frac{1}{5}$,
	we obtain that $K(r) \leq K(r'-1) \prod_{i=r'}^r \left( 1 -\frac{i-4}{2^i}\right)$, for any $r'$ and $r\geq r'$.
	Using $r'=7$, the right hand side evaluates in the limit of large $r$ to approximately $0.2293$.
	
	Numerical evaluation of $K(r)$ from Equation~\eqref{eq:K} for $r$ up to one million (using \texttt{Matlab R2013b}) indicates that, indeed,
	$K(r)$ tends to approximately $0.2263$ for large $r$.
\end{proof}

We close this subsection with the remark that the proof strategy can be used not only to study universal approximation, but also approximability of selected classes of conditional distributions: 

\begin{remark}
	\label{remark:supp}
	\normalfont
	If we only want to model a restricted class of conditional distributions,
	then adapting Algorithm~\ref{alg1} to these restrictions may yield tighter bounds for the number of hidden units that suffices to represent these restricted conditionals. For example:
	
	If we only want to model the target conditionals $q(\cdot|x)$ for the inputs $x$ from a subset $\Scal \subseteq \{0,1\}^k$ and do not care about $q(\cdot|x)$ for $x\not\in\Scal$, then in the algorithm we just need to replace $\{0,1\}^k$ by $\Scal$. In this case, a cylinder set packing of $\Scal\setminus\Bcal$ is understood as a collection of disjoint cylinder sets $C^1,\ldots, C^K\subseteq \{0,1\}^k$ with $\cup_{i\in[K]}C^i\supseteq \Scal\setminus\Bcal$ and $(\cup_{i\in[K]}C^i) \cap \Bcal=\emptyset$.
	
	Furthermore, if for some cylinder set $C^i$ and a corresponding star $B^i\subseteq C^i$ the conditionals $q(\cdot|x)$ with $x\in B^i$ have a common support set $T\subseteq\{0,1\}^n$, then the $C^i$-rows of $p$ can be reset to a distribution $\delta_y$ with $y\in T$, and only $|T|-1$ sharing steps are needed to transform $p$ to a distribution whose conditionals approximate $q(\cdot|x)$ for all $x\in B^i$ to any desired accuracy.
	In particular, for the class of target conditional distributions with $\supp q(\cdot|x) = T$ for all $x$,
	the term $2^n-1$ in the complexity bound of Algorithm~\ref{alg1} is replaced by $|T|-1$.
\end{remark}

\subsection{Necessary Number of Hidden Units}
\label{subsec:bounds}

Proposition~\ref{proposition:universallower} follows from simple parameter counting arguments.
In order to make this rigorous, first we make the observation that universal approximation of (conditional) probability distributions by Boltzmann machines or any other models based on exponential families, with or without hidden variables, requires the number of model parameters to be as large as the dimension of the set being approximated.
We denote by $\Del{\Xcal}{\Ycal}$ the set of conditionals with inputs form a finite set $\Xcal$ and outputs from a finite set $\Ycal$.
Accordingly, we denote by $\Delta_\Ycal$ the set of probability distributions on $\Ycal$. 

\begin{lemma}\label{lemma:parametercounting}
	Let $\Xcal$, $\Ycal$, and $\Zcal$ be some finite sets. 
	Let $\Mcal\subseteq\Del{\Xcal}{\Ycal}$ be defined as the set of conditionals of the marginal  $\Mcal'\subseteq\Delta_{\Xcal\times\Ycal}$ of an exponential family $\Ecal \subseteq\Delta_{ \Xcal\times\Ycal\times\Zcal}$. 
	If $\Mcal$ is a universal approximator of conditionals from $\Del{\Xcal}{\Ycal}$, 
	then $\dim(\Ecal) \geq \dim(\Del{\Xcal}{\Ycal})=|\Xcal|(|\Ycal|-1)$.
\end{lemma}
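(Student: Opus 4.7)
The plan is to combine three standard facts: (i) the dimension of the polytope $\Del{\Xcal}{\Ycal}$ equals $|\Xcal|(|\Ycal|-1)$; (ii) taking a marginal of a joint distribution and then conditioning are semialgebraic operations that cannot increase dimension; and (iii) a semialgebraic subset of $\R^N$ whose closure contains a set of dimension $d$ must itself have dimension at least~$d$.

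First I would set up the maps carefully. The exponential family $\Ecal$ is a smoothly parametrized (in fact analytic, semialgebraic after passing through the logarithm) subset of the open simplex $\Delta_{\Xcal\times\Ycal\times\Zcal}^\circ$, with dimension equal to the rank of the Jacobian of its parametrization. The marginalization map $\mu\colon \Delta_{\Xcal\times\Ycal\times\Zcal}\to \Delta_{\Xcal\times\Ycal}$, defined by $\mu(p)(x,y)=\sum_{z} p(x,y,z)$, is linear. The conditioning map $\kappa\colon \{r\in\Delta_{\Xcal\times\Ycal}: r_X(x)>0 \text{ for all } x\}\to \Del{\Xcal}{\Ycal}$, defined by $\kappa(r)(y|x)=r(x,y)/r_X(x)$, is rational and in particular smooth and semialgebraic on its (open, full-dimensional) domain. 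By hypothesis $\Mcal=\kappa(\Mcal')=\kappa(\mu(\Ecal))$, where we tacitly restrict to parameters producing strictly positive $X$-marginals (this is a dense open condition since $\Ecal\subseteq\Delta_{\Xcal\times\Ycal\times\Zcal}^\circ$).

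Next I would invoke the dimension inequalities for semialgebraic sets and maps (see, e.g., Bochnak--Coste--Roy). Since $\Ecal$, $\Mcal'$, and $\Mcal$ are all semialgebraic and related by the semialgebraic maps $\mu$ and $\kappa$, one has
\begin{equation*}
\dim(\Mcal)\ \leq\ \dim(\Mcal')\ \leq\ \dim(\Ecal).
\end{equation*}
Moreover, for any semialgebraic set $S$ the closure satisfies $\dim(\overline{S})=\dim(S)$. Universal approximation says that every point of $\Del{\Xcal}{\Ycal}$ is a limit of points of $\Mcal$, so $\overline{\Mcal}\supseteq \Del{\Xcal}{\Ycal}$; since $\Del{\Xcal}{\Ycal}$ is a polytope of dimension $|\Xcal|(|\Ycal|-1)$, this forces $\dim(\overline{\Mcal})\geq |\Xcal|(|\Ycal|-1)$. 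Chaining these,
\begin{equation*}
\dim(\Ecal)\ \geq\ \dim(\Mcal)\ =\ \dim(\overline{\Mcal})\ \geq\ |\Xcal|(|\Ycal|-1),
\end{equation*}
which is the claim.

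The main obstacle I anticipate is purely technical: making precise that $\kappa$ is only defined on the subset where the $X$-marginal is strictly positive, so that one must argue that the parameters of $\Ecal$ producing strictly positive $X$-marginals are a dense open subset (which they are, since $\Ecal$ lies in the open simplex), and hence the image $\Mcal$ has the same dimension as if $\kappa$ were defined everywhere. A reader who prefers to avoid semialgebraic geometry can substitute the following measure-theoretic alternative: if $\dim(\Ecal)< |\Xcal|(|\Ycal|-1)$, then $\Mcal$ is contained in a countable union of smooth manifolds of strictly smaller dimension than $\Del{\Xcal}{\Ycal}$, hence has Lebesgue measure zero in $\Del{\Xcal}{\Ycal}$, and so does its closure; this contradicts $\overline{\Mcal}=\Del{\Xcal}{\Ycal}$.
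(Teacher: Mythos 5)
Your main argument is a genuinely different route from the paper's and is essentially sound, but it rests on a claim you have not justified and which is false in general: that an exponential family $\Ecal\subseteq\Delta_{\Xcal\times\Ycal\times\Zcal}$ is semialgebraic. A discrete exponential family is cut out inside the open simplex by conditions of the form $\prod_v p(v)^{u_v}=1$, with $u$ ranging over a basis of the orthogonal complement of the span of the sufficient statistics; these are polynomial (binomial) equations only when that complement admits a rational basis. For irrational sufficient statistics the family need not be semialgebraic (compare $\{(t,t^{\sqrt 2})\colon t>0\}$), and the lemma as stated allows arbitrary exponential families. The repair is either to add a rationality hypothesis on the sufficient statistics (harmless for the intended application to CRBMs, whose statistics are $0/1$-valued), or to replace ``semialgebraic'' by ``definable in the o-minimal structure $\R_{\exp}$'' throughout; the dimension theory you invoke (images under definable maps do not increase dimension, and $\dim\overline{S}=\dim S$) holds verbatim in any o-minimal structure. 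With that repair your chain $\dim(\Ecal)\geq\dim(\Mcal)=\dim(\overline{\Mcal})\geq|\Xcal|(|\Ycal|-1)$ is correct. Note also that your worry about the domain of $\kappa$ is vacuous: since $\Ecal$ lies in the open simplex, every point of $\mu(\Ecal)$ already has a strictly positive $X$-marginal.

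Your proposed ``measure-theoretic alternative,'' by contrast, has a genuine gap: from ``$\Mcal$ has Lebesgue measure zero'' you cannot conclude ``so does its closure'' --- the rationals are null with full closure. This is precisely the point the paper's proof is engineered to handle. The paper works at the joint level (reducing conditionals to joints by pairing $\Mcal$ with $\Delta_{\Xcal}$ rather than composing with the conditioning map $\kappa$ as you do), then uses compactness of $\overline{\Ecal}$ to get $\overline{\Mcal}=\overline{f(\Ecal)}=f(\overline{\Ecal})$, together with the structural fact that $\overline{\Ecal}$ is a \emph{finite} union of exponential families $\Ecal_F$ (one per support set), each of dimension at most $\dim(\Ecal)$; Sard's theorem then makes each $f(\Ecal_F)$, hence the finite union $\overline{\Mcal}$, a null set. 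The closure is thus controlled \emph{before} taking images, not after. If you want to keep the elementary fallback you must import this compactness-plus-stratification step; otherwise stick with the o-minimal version of your first argument, which buys a shorter proof at the price of a heavier black box (o-minimality of $\R_{\exp}$) than the paper's Sard-based argument.
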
 

The intuition of this lemma is that, for models defined by marginals of exponential families, the set of conditionals that can be approximated arbitrarily well is essentially equal to the set of conditionals that can be represented exactly, implying that there are no low-dimensional universal approximators of this type.

\begin{proof}[of Lemma~\ref{lemma:parametercounting}]
	We consider first the case of probability distributions; that is, the case with $|\Xcal| =1$ and $\Xcal\times \Ycal \cong \Ycal$.
	Let $\Mcal$ be the image of the exponential family $\Ecal$ by a differentiable map $f$ (for example, the marginal map).
	The closure $\overline{\Ecal}$, which consists of all distributions that can be approximated arbitrarily well by $\Ecal$, is a compact set.
	Since $f$ is continuous, the image of $\overline{\Ecal}$ is also compact, and $\overline{\Mcal}=\overline{f(\Ecal)}=f(\overline{\Ecal})$.
	The model $\Mcal$ is a universal approximator if and only if $\overline{\Mcal}=\Delta_{\Ycal}$.
	The set $\overline{\Ecal}$ is a finite union of exponential families;
	one exponential family $\Ecal_F$ for each possible support set $F$ of distributions from $\overline{\Ecal}$.
	When $\dim(\Ecal) < \dim(\Delta_{\Ycal})$, each point of each $\Ecal_F$ is a critical point of $f$ (the Jacobian is not surjective at that point).
	By Sard's theorem, each $\Ecal_F$ is mapped by $f$ to a set of measure zero in $\Delta_\Ycal$.
	Hence the finite union $\cup_F f(\Ecal_F) = f(\cup_F \Ecal_F)=f(\overline{\Ecal})=\overline{\Mcal}$ has measure zero in $\Delta_{\Ycal}$.
	
	For the general case, with $|\Xcal|\geq 1$, note that $\Mcal \subseteq\Del{\Xcal}{\Ycal}$ is a universal approximator iff the joint model $\Delta_{\Xcal}\Mcal=\{p(x)q(y|x)\colon p\in\Delta_\Xcal, q\in\Mcal \} \subseteq\Delta_{\Xcal\times\Ycal}$ is a universal approximator.
	The latter is the marginal of the exponential family $\Delta_{\Xcal}\ast\Ecal=\{ p\ast q\colon p\in\Delta_\Xcal, q\in\Ecal \}\subseteq\Delta_{\Xcal\times\Ycal\times\Zcal}$.
	Hence the claim follows from the first part.
\end{proof}

\begin{proof}[of Proposition~\ref{proposition:universallower}]
	If $\RBM_{n,m}^k$ is a universal approximator of conditionals from $\Del{k}{n}$,
	then the model consisting of all probability distributions of the form $p(x,y) = \frac{1}{Z}\sum_z\exp(z^\top W y + z^\top V x + b^\top y + c^\top z + f(x))$ is a universal approximator of probability distributions from $\Delta_{k+n}$.
	The latter is the marginal of an exponential family of dimension $mn + mk + n + m + 2^k-1$.
	Thus, by Lemma~\ref{lemma:parametercounting}, $m \geq \frac{2^{k+n}-2^k-n}{(n + k + 1)}$.
\end{proof}

\section{Details on the Maximal Approximation Errors}

\begin{proof}[of Proposition~\ref{proposition:errors}]
	We have that $D_{\RBM_{n,m}^k} \leq \max_{p\in\Delta_{k+n}\colon p_X=u_X}D(p\| \RBM_{n+k,m} )$.
	The right hand side is bounded by $n$, since the RBM model contains the uniform distribution.
	It is also bounded by the maximal divergence $D_{\RBM_{n+k,m}}\leq (n+k)-\lfloor\log_2(m+1)\rfloor -\frac{m+1}{2^{\lfloor\log_2(m+1)\rfloor}}$~\citep[][]{GSI2013}.
\end{proof}

In order to prove Theorem~\ref{theorem:errors}, we will upper bound the approximation errors of CRBMs by the approximation errors of submodels of CRBMs. First, we note the following:

\begin{lemma}
	\label{lemma:divergprod}
	The maximal divergence of a conditional model that is a Cartesian product of a probability model is bounded from above by the maximal divergence of that probability model:
	if $\Mcal = \times_{x\in\{0,1\}^k}\Ncal\subseteq\Del{k}{n}$ for some $\Ncal\subseteq\Delta_n$, then
	$D_{\Mcal}\leq D_{\Ncal}$.
\end{lemma}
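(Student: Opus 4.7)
The plan is to exploit the product structure of $\Mcal$ to decouple the infimum defining the approximation error into row-wise infima over $\Ncal$, and then bound each such row-wise error by the maximum divergence $D_\Ncal$.

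First, I would unwind the definition of the divergence between conditional distributions given earlier in Section~\ref{sec:approximation_errors}, writing
\begin{equation*}
D(p(\cdot|\cdot)\|q(\cdot|\cdot)) = \sum_{x\in\{0,1\}^k} u_X(x)\, D(p(\cdot|x)\|q(\cdot|x))
\end{equation*}
for any $p(\cdot|\cdot)\in\Del{k}{n}$ and $q(\cdot|\cdot)\in\Mcal$. The key observation is that $\Mcal=\times_{x}\Ncal$ means that every tuple $(q_x)_{x\in\{0,1\}^k}$ with $q_x\in\Ncal$ corresponds to a valid element of $\Mcal$, so the $2^k$ rows of $q(\cdot|\cdot)$ can be chosen completely independently.

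Second, I would use this independence to interchange the infimum with the sum: since each summand in the display above only depends on the single row $q(\cdot|x)$, and each row ranges freely over $\Ncal$, one obtains
\begin{equation*}
\inf_{q(\cdot|\cdot)\in\Mcal} D(p(\cdot|\cdot)\|q(\cdot|\cdot))
= \sum_{x} u_X(x) \inf_{q_x\in\Ncal} D(p(\cdot|x)\|q_x)
= \sum_{x} u_X(x)\, D(p(\cdot|x)\|\Ncal).
\end{equation*}

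Third, I would bound each term $D(p(\cdot|x)\|\Ncal)\leq D_\Ncal$ using the very definition of the maximal divergence of the probability model $\Ncal$, noting that $p(\cdot|x)\in\Delta_n$ for every $x$. Since the $u_X(x)$ are nonnegative weights summing to one, the weighted average is also at most $D_\Ncal$. Taking the maximum over $p(\cdot|\cdot)\in\Del{k}{n}$ on the left gives $D_\Mcal\leq D_\Ncal$, as required. The argument is essentially bookkeeping; the only substantive step is the factorization of the infimum, which fails without the Cartesian product assumption, so there is no real obstacle beyond recognizing the role of this structural hypothesis.
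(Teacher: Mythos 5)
Your proposal is correct and follows essentially the same route as the paper's proof: unwinding the definition of the conditional divergence with uniform weights $u_X(x)=2^{-k}$, exchanging the infimum with the sum via the Cartesian product structure of $\Mcal$, and bounding each row-wise divergence by $D_\Ncal$. No gaps.
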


\begin{proof}
	For any $p\in\Del{k}{n}$, we have
	\begin{eqnarray*}
		D(p\|\Mcal) 
		&=& \inf_{q\in\Mcal}\frac{1}{2^k}\sum_x D(p(\cdot|x) \| q(\cdot|x))\\
		&=& \frac{1}{2^k}\sum_x \inf_{q(\cdot|x)\in\Ncal}D(p(\cdot|x) \| q(\cdot|x))\\
		&\leq& \frac{1}{2^k} \sum_{x} D_{\Ncal} =  D_\Ncal. 
	\end{eqnarray*}
	\vspace{-.5cm}
\end{proof}

\begin{definition}
	Given a partition $\Zcal =\{\Ycal_1,\ldots, \Ycal_L \}$ of $\{0,1\}^n$, the \emph{partition model} $\Pcal_\Zcal\subseteq\Delta_n$ is the set of all probability distributions on $\{0,1\}^n$ with constant value on each partition block. 
\end{definition}

The set $\{0,1\}^l$, $l\leq n$ naturally defines a partition of $\{0,1\}^n$ into cylinder sets $\{y\in\{0,1\}^n\colon y_{[l]} = z \}$ for all $z\in\{0,1\}^l$.  The divergence from $\Pcal_{\Zcal}$ is bounded from above by $D_{\Pcal_{\Zcal}}\leq l-n$.

Now, the model $\RBM_{n,m}^k$ can approximate certain products of partition models arbitrarily well:

\begin{proposition}
	\label{theorem:universalpartition}
	Let $\Zcal=\{0,1\}^l$ with $l\leq n$. 
	Let $r$ be any integer with $k\geq S(r)$.
	The model $\RBM_{n,m}^k$ can approximate any conditional distribution from the product of partition models $\Pcal_\Zcal^k:=\Pcal_\Zcal\times\cdots\times\Pcal_\Zcal$ arbitrarily well whenever $m\geq 2^{k-S(r)}F(r)(|\Zcal|-1)+R(r)$.
\end{proposition}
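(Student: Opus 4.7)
The plan is to reduce the problem to universal approximation of $\Del{k}{l}$ by the smaller conditional model $\RBM_{l,m}^k$, for which Proposition~\ref{proposition:universal1} already gives the required bound. The key observation is that, when $\Zcal=\{0,1\}^l$, the partition model $\Pcal_\Zcal\subseteq\Delta_n$ is in natural bijection with $\Delta_l$: a distribution on $\{0,1\}^n$ that is constant on each cylinder block $\{y\in\{0,1\}^n\colon y_{[l]}=z\}$ must be uniform on each block and hence has the form $p(y)=q(y_{[l]})/2^{n-l}$ for some $q\in\Delta_l$. Extended blockwise to conditionals, this identifies $\Pcal_\Zcal^k$ with $\Del{k}{l}$ via the continuous lift $q(\cdot|\cdot)\mapsto q(y_{[l]}|x)/2^{n-l}$.

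Next I would show that $\RBM_{n,m}^k$ contains a faithful copy of $\RBM_{l,m}^k$ under this lift. Given parameters $(V,W',b',c)\in \R^{m\times k}\times\R^{m\times l}\times\R^{l}\times\R^{m}$ of $\RBM_{l,m}^k$, I define parameters of $\RBM_{n,m}^k$ by keeping $V$ and $c$ unchanged and setting $W:=[W'\mid 0_{m\times(n-l)}]$ and $b:=(b',0_{n-l})$. A direct inspection of Definition~\ref{definitionCRBM} shows that the factor $\exp(z^\top V x + z^\top W y + b^\top y + c^\top z)$ is then independent of $y_{l+1},\ldots,y_n$, so the resulting $\RBM_{n,m}^k$ conditional factors exactly as $q'(y_{[l]}|x)/2^{n-l}$, where $q'(\cdot|\cdot)$ is the conditional represented by $\RBM_{l,m}^k$ on $(V,W',b',c)$. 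This is the only piece of calculation; it is routine.

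Finally I would apply Proposition~\ref{proposition:universal1}, with $l$ in the role of $n$, to $\RBM_{l,m}^k$. Since $|\Zcal|-1=2^l-1$, the assumed lower bound $m\geq 2^{k-S(r)}F(r)(|\Zcal|-1)+R(r)$ is precisely the universal approximation bound of that proposition for $\RBM_{l,m}^k$. Consequently $\RBM_{l,m}^k$ approximates every element of $\Del{k}{l}$ arbitrarily well, and composing with the continuous lift yields arbitrarily good approximation of every element of $\Pcal_\Zcal^k$ by $\RBM_{n,m}^k$. I do not anticipate a serious obstacle; the main work is packaged inside Proposition~\ref{proposition:universal1}, and the only additional ingredient is the embedding that suppresses the trailing $n-l$ output coordinates by zeroing out the corresponding rows of $W$ and entries of $b$.
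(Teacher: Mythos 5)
Your proposal is correct, and it takes a genuinely different (and arguably cleaner) route than the paper. The paper's own proof stays inside the machinery of Algorithm~\ref{alg1}: it re-runs the star-packing construction of Proposition~\ref{proposition:universal1}, but replaces the Dirac-delta sharing steps $p'=\lambda p+(1-\lambda)p\ast\delta_{\tilde y}$ by steps of the form $p'=\lambda p+(1-\lambda)u_z$, where $u_z$ is the uniform distribution on the cylinder set $\{y\colon y_{[l]}=z\}$ (still a product distribution, hence a legitimate sharing step); each star then costs $|\Zcal|-1$ steps instead of $2^n-1$, which yields the stated bound. You instead observe that $\Pcal_\Zcal$ is affinely isomorphic to $\Delta_l$ via $p(y)=q(y_{[l]})/2^{n-l}$, that zero-padding $W$ and $b$ embeds $\RBM_{l,m}^k$ into $\RBM_{n,m}^k$ compatibly with this lift, and that Proposition~\ref{proposition:universal1} applied to $\RBM_{l,m}^k$ already delivers the bound since $|\Zcal|-1=2^l-1$; continuity of the lift transports closures, so approximability carries over. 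Both arguments rest on the same underlying fact --- that hidden units need only interact with the first $l$ output coordinates --- but your version is modular: it uses Proposition~\ref{proposition:universal1} as a black box and confines the new work to a routine embedding computation, whereas the paper's version requires the reader to re-trace the algorithm and check that the complexity count survives the change of sharing step. The paper's phrasing has the mild advantage of suggesting generalizations to other support structures in the spirit of Remark~\ref{remark:supp}, but for the statement as given your reduction is complete and I see no gap in it.
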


\begin{proof}
	This is analogous to the proof of Proposition~\ref{theorem:universalsubset}, with a few differences.
	Each element $z$ of $\Zcal$ corresponds to a cylinder set $\{y\in\{0,1\}^n\colon y_{[l]} =z \}$ and the collection of cylinder sets for all $z\in\Zcal$ is a partition of $\{0,1\}^n$. Now we can run Algorithm~\ref{alg1} in a slightly different way, with sharing steps defined by $p'=\lambda p + (1-\lambda) u_{z}$, where $u_{z}$ is the uniform distribution on the cylinder set corresponding to $z$.
\end{proof}

\begin{proof}[of Theorem~\ref{theorem:errors}]
	This follows directly from Lemma~\ref{lemma:divergprod} and Proposition~\ref{theorem:universalpartition}. 
\end{proof}

\section{Details on the Representation of Conditional Distributions from Markov Random Fields}

The proof of Theorem~\ref{theorem:interaction} is based on ideas from~\citet[][]{Younes1996109}, who discussed the universal approximation property of Boltzmann machines. 
We will use the following~\citep[][Lemma~1]{Younes1996109}: 

\begin{lemma}
	\label{lemma:Younes}
	Let $\varrho$ be a real number. Consider a fixed integer $N$ and binary variables $x_1,\ldots, x_N$. 
	There are real numbers $w$ and $b$ such that:
	\begin{itemize}
		\item If $\varrho\geq0$, 
		$\log\left( 1 + \exp(w(x_1 + \cdots + x_N) + b) \right) = \varrho \prod_i x_i + Q(x_1,\ldots, x_N)$.   
		\item If $\varrho\leq0$,
		$\log\left( 1 + \exp(w(x_1 + \cdots + x_{N-1} - x_N) + b) \right) = \varrho \prod_i x_i + Q(x_1,\ldots, x_N)$. 
	\end{itemize}
	Where $Q$ is in each case a polynomial of degree less than $N-1$ in $x_1,\ldots, x_N$. 
\end{lemma}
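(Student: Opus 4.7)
The plan is to exploit the fact that any function $\{0,1\}^N\to\R$ admits a unique expansion as a multilinear polynomial $\sum_{A\subseteq[N]}c_A\prod_{i\in A}x_i$, whose top-degree coefficient is given by Möbius inversion on the Boolean lattice,
\[
c_{[N]} \;=\; \sum_{S\subseteq[N]}(-1)^{N-|S|}\,f(\mathbf{1}_S),
\]
where $f(x):=\log(1+\exp(s(x)))$ and $\mathbf{1}_S\in\{0,1\}^N$ denotes the indicator of $S\subseteq[N]$. Thus the lemma reduces to showing that $w,b$ can be chosen so that this $c_{[N]}$ equals the prescribed $\varrho$; the polynomial $Q$ is then automatically the sum of the remaining monomials, which has degree at most $N-1$.

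For the first case, $s(x)=w\sum_i x_i+b$ makes $f(\mathbf{1}_S)=\log(1+\exp(w|S|+b))$ depend only on $|S|$, so
\[
c_{[N]}(w,b) \;=\; \sum_{k=0}^N (-1)^{N-k}\binom{N}{k}\log\bigl(1+\exp(wk+b)\bigr).
\]
I would check three points: at $w=0$ the sum vanishes because $\sum_k(-1)^{N-k}\binom{N}{k}=0$; with the tying $b=-w(N-\tfrac12)$, as $w\to\infty$ only the $k=N$ term avoids exponential decay, yielding $c_{[N]}\sim w/2\to\infty$; and along this one-parameter family the coefficient depends continuously on $w$. The intermediate value theorem then supplies some $w$ realizing any prescribed $\varrho\geq 0$.

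For the second case, with $s(x)=w(x_1+\cdots+x_{N-1}-x_N)+b$, splitting the Möbius sum according to whether $N\in S$ gives
\[
c_{[N]}(w,b) \;=\; \sum_{k=0}^{N-1}(-1)^{N-k}\binom{N-1}{k}\Bigl[\log\bigl(1+\exp(wk+b)\bigr)-\log\bigl(1+\exp(w(k-1)+b)\bigr)\Bigr].
\]
Choosing $b=-w(N-\tfrac32)$, an analogous asymptotic analysis shows that as $w\to\infty$ the only surviving contribution comes from $k=N-1$ and is asymptotic to $-w/2$, so $c_{[N]}\to-\infty$; the expression is again continuous in $w$ and vanishes at $w=0$, so the intermediate value theorem handles every $\varrho\leq 0$.

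The main obstacle is the asymptotic bookkeeping: one has to verify that with the chosen affine tie between $b$ and $w$ every exponent $wk+b(w)$ except the intended one tends to $-\infty$ (so the corresponding terms vanish), and that the surviving term really grows linearly in $w$ rather than saturating. Once this is in place, the image of the continuous map $w\mapsto c_{[N]}(w,b(w))$ is all of $\R_{\geq 0}$ (resp.\ $\R_{\leq 0}$), and defining $Q$ as $f-\varrho\prod_i x_i$ — i.e., the sum of the remaining multilinear monomials of degree at most $N-1$ — completes the argument.
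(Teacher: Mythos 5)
Your argument is correct and follows essentially the same route as the paper's: Möbius inversion on the Boolean lattice to isolate the top multilinear coefficient $\sum_{k}(-1)^{N-k}\binom{N}{k}\log(1+\exp(wk+b))$, a one-parameter family $b=b(w)$ along which this coefficient runs continuously from $0$ to $\pm\infty$, and the intermediate value theorem; the only (cosmetic) difference is that the paper obtains the case $\varrho\le 0$ from the case $\varrho\ge 0$ via the substitution $x_N\mapsto 1-x_N$, whereas you recompute the split Möbius sum directly. Note that, exactly like the paper's own argument, yours establishes $\deg Q\le N-1$ rather than the stated ``degree less than $N-1$'', which appears to be a typo for ``degree less than $N$'' and is all that the application in Lemma~\ref{lemma:interaction} requires.
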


The following is a generalization of~\citep[][Lemma~2]{Younes1996109}: 

\begin{lemma}
	\label{lemma:interaction}
	Let $I$ and $J$ be two simplicial complexes on $[n]$ with $J\subseteq I$. 
	If $p$ is any distribution from $\Ecal_I$ and $m \geq |\{A\in I\setminus J \colon |A|>1 \}|$, 
	then there is a distribution $p'\in \Ecal_J$, such that $p\ast p'$ is contained in $\RBM_{n,m}$. 
\end{lemma}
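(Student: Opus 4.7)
The plan is to directly construct $p'\in \Ecal_J$ and $m$ hidden units so that the log--density of $p\ast p'$ matches that of an RBM with those units. Writing $\log p(y)=\text{const}+\sum_{A\in I}\theta_A m_A(y)$ with $m_A(y):=\prod_{i\in A}y_i$, and $\log p'(y)=\text{const}+\sum_{B\in J}\phi_B m_B(y)$ with free parameters $\phi_B$, the identity to be achieved is
\begin{equation*}
\sum_{A\in I}\theta_A m_A(y) + \sum_{B\in J}\phi_B m_B(y)
\;=\;
\tilde b^\top y + \sum_{j=1}^{m}\log\!\bigl(1+\exp(w_j^\top y + c_j)\bigr) + \text{const}.
\end{equation*}
Since both sides are polynomials in $y\in\{0,1\}^n$ of partial degree at most one in each $y_i$, it suffices to match their coefficients on each monomial $m_B$, $B\subseteq[n]$. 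The free parameters are $\tilde b\in\R^n$, the hidden--unit weights $(w_j,c_j)$, the parameters $\phi_B$ of $p'$, and the normalization constant.

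I would enumerate $\{A\in I\setminus J:|A|>1\}=\{A_1,\dots,A_m\}$ in weakly decreasing order of $|A_j|$ and assign one hidden unit to each $A_j$, choosing $w_j$ supported on the coordinates in $A_j$ (with one sign flipped if the target coefficient is negative) as in Lemma~\ref{lemma:Younes}. This yields
\begin{equation*}
\log\!\bigl(1+\exp(w_j^\top y + c_j)\bigr) \;=\; \varrho_j\, m_{A_j}(y) + Q_j(y),
\qquad Q_j(y)=\sum_{B\subsetneq A_j} q_{j,B}\,m_B(y),
\end{equation*}
where $\varrho_j$ is a scalar that we may tune and the coefficients $q_{j,B}$ are then determined. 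Matching coefficients monomial by monomial: for $B$ with $|B|>1$ and $B\notin I$, simpliciality of $I$ forces $B\not\subsetneq A_j$ for any $j$, so the RHS coefficient vanishes automatically, matching the LHS. For $B=A_j$, the RHS coefficient equals $\varrho_j+\sum_{j'<j,\,A_j\subsetneq A_{j'}}q_{j',A_j}$ (ordering by decreasing size rules out contributions from $j'\ge j$), so we solve for $\varrho_j=\theta_{A_j}-\sum_{j'<j,\,A_j\subsetneq A_{j'}}q_{j',A_j}$ recursively, using that the $q_{j',A_j}$ are already fixed. For $B\in J$ with $|B|>1$, set $\phi_B=\sum_{j':B\subsetneq A_{j'}}q_{j',B}-\theta_B\mathbf{1}[B\in I]$ to absorb all leftover contributions; for singletons, set $\tilde b_i$ accordingly (with $\phi_{\{i\}}$ chosen freely when $\{i\}\in J$); the constant term is absorbed into the normalization.

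The main obstacle is the bookkeeping needed to ensure that every leftover term produced by the $Q_j$ lands in a slot with an available free parameter or vanishes identically. This is exactly where the hypotheses enter: simpliciality of $I$ guarantees that any $B$ appearing in some $Q_j$ is itself in $I$, so it is either one of the $A_{j'}$ (and hence handled later in the recursion), or in $J$ (and absorbed by $\phi_B$), or a singleton/constant (absorbed by the RBM bias and normalization); the inclusion $J\subseteq I$ ensures that using the $\phi_B$ for $B\in J$ is consistent with keeping $p'\in\Ecal_J$. Once all parameters are solved as above, the resulting RBM distribution coincides with $p\ast p'$, establishing $p\ast p'\in\RBM_{n,m}$.
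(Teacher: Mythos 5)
Your proposal is correct and follows essentially the same route as the paper's proof: one hidden unit per element of $\{A\in I\setminus J\colon |A|>1\}$, Lemma~\ref{lemma:Younes} to realize the top monomial $\varrho\prod_{i\in A}x_i$ plus lower-order terms, recursive cancellation in decreasing order of cardinality, and absorption of the leftover monomials into $p'\in\Ecal_J$, the visible biases, and the normalization. Your coefficient-matching bookkeeping (in particular the observation that simpliciality of $I$ guarantees every leftover monomial lands in $K$, $J$, or a singleton/constant slot) is exactly the argument the paper leaves implicit.
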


\begin{proof}
	The proof follows closely the arguments presented in~\citep[][Lemma~2]{Younes1996109}.  
	Let $K = \{A\in I\setminus J \colon |A|>1 \}$. 
	Consider an RBM with $n$ visible units and $m = |K|$ hidden units. 
	Consider a joint distribution $q(x,u) = \frac{1}{Z} \exp(H(x,u))$ of the fully observable RBM, defined as follows. 
	We label the hidden units by subsets $A \in K$. 
	For each $A\in K$, let $s(A)$ denote the largest element of $A$, and let 
	\begin{align*}
	H(x,u) =& \sum_{A\in K} u_A  \left( w_A  S_A^{\epsilon_A}(x_A) + b_A \right) + \sum_{s\in [n]} b_s x_s , 
	\intertext{where}
	S_A^{\epsilon_A}(x_A) =& \Big(\sum_{s\in A, s<s(A)}x_s\Big) + \epsilon_A x_{s(A)}, 
	\end{align*}
	for some $\epsilon_A\in\{-1, +1\}$, $w_A, b_A , b_s\in \R$ that we will specify further below. 
	
	Denote the log probabilities of $p(x)$ and $p'(x)$ by  
	\begin{equation*}
	E(x) = \sum_{A\in I} \theta_A \prod_{i\in A} x_i \quad \text{and} \quad E'(x) = \sum_{A\in J} \vartheta_A \prod_{i\in A} x_i . 
	\end{equation*} 
	
	We obtain the desired equality $(p\ast p' )(x) = \sum_u q(x,u)$ when 
	\begin{equation}
	E(x) = \log\left(\sum_u \exp(H(x,u))\right)  - \sum_{A\in J} \vartheta_A \prod_{i\in A} x_i , 
	\label{eq:conditioninter}
	\end{equation}
	for some choice of $\vartheta_A$, for $A\in J$, some choice of $\epsilon_A, w_A, b_A$, for $A\in K$, and some choice of $b_s$, for $s\in[n]$. 
	We have 
	\begin{eqnarray*}
		\log\left(\sum_u \exp(H(x,u))\right)
		&=& \log\left( \sum_u \exp\Big( \sum_A u_A (w_A S_A^{\epsilon_A}(x_A) + b_A)   + \sum_{s\in [n]} b_s x_s\Big) \right)\\ 
		&=& \log\left( 
		\Big(\sum_u \prod_A \exp( u_A ( w_A S_A^{\epsilon_A}(x_A) + b_A)) \Big) \exp\Big(\sum_{s\in [n]} b_s x_s\Big) \right)\\
		&=& \log\left( 
		\Big(\prod_A \sum_{u_A} \exp( u_A (w_A S_A^{\epsilon_A}(x_A) + b_A)) \Big) \exp\Big(\sum_{s\in [n]} b_s x_s\Big) \right)\\
		&=& \sum_A\log(1 + \exp(w_A S_A^{\epsilon_A}(x_A) + b_A)) + \sum_{s\in [n]} b_s x_s 
	\end{eqnarray*}
	
	The terms 
	\begin{equation*}
	\phi_A^{\epsilon_A} (x_A) := \log \left( 1 + \exp(w_A S_A^{\epsilon_A}(x_A) + b_A) \right) 
	\end{equation*}
	are of the same form as the functions from Lemma~\ref{lemma:Younes}. 
	
	To solve Equation~\eqref{eq:conditioninter}, 
	we first apply Lemma~\ref{lemma:Younes} 
	on $\phi_{A}^{\epsilon_{A}}$ to cancel the terms 
	$\theta_A \prod_{i\in A} x_i$ of $E(x)$, 
	for which $A$ is a maximal element of $I\setminus J$ of cardinality more than one. 
	This involves choosing appropriate $\epsilon_{A}\in\{-1, +1\}$, $w_{A}$ and $b_{A}$, for the corresponding $A$. 
	The remaining polynomial consists of terms with strictly smaller monomials. 
	We apply lemma~\ref{lemma:Younes} repeatedly on this polynomial, until only 
	monomials with $A\in J$ or $|A|=1$ remain. 
	These terms are canceled with $\vartheta_A \prod_{i\in A}x_i$, $A\in J$, or with $b_s x_s$, $s\in[n]$. 
\end{proof}

\begin{proof}[of Theorem~\ref{theorem:interaction}]
	By Lemma~\ref{lemma:interaction}, 
	there is a $p'\in \Ecal_J$, $J=2^{[k]}$, such that $p\ast p'$ is in $\RBM_{k+n,m}$. 
	Now, the conditionals distribution $(p\ast p')(y|x)$ of the last $n$ units, given the first $k$ units, are idependent of $p'$, since this is independent of $y$. 
	\end{proof}

\begin{proof}[of Corollary~\ref{corollary:interaction}]
	The statement follows from Theorem~\ref{theorem:interaction}, considering the simplicial complex $I = 2^{[k]}\times J$ and a joint probability distribution $p\in \Ecal_I\subseteq\Delta_{k+n}$ with  the desired conditionals $p(\cdot|x)=p^x$. 
\end{proof}

\section{Details on the Approximation of Conditional Distributions with Restricted Supports}

\begin{proof}[of Proposition~\ref{proposition:boundedsupport}]
	This follows from the fact that $\RBM_{n+k,m}$ can approximate any probability distribution with support of cardinality $m+1$ arbitrarily well~\citep{Montufar2011}.
\end{proof}

\begin{proof}[of Proposition~\ref{theorem:universalsubset}]
	This is analogous to the proof of Proposition~\ref{proposition:universal1}.
	The complexity of Algorithm~\ref{alg1} as evaluated there does not depend on the specific structure of the support sets, but only on their cardinality, as long as they are the same for all $x$.
\end{proof}

The following lemma states that a  CRBM can compute all deterministic conditionals that can be computed by a feedforward linear threshold network with the same number of hidden units. 

\begin{lemma}
	\label{lemma:feedforward}
	Consider a function $f\colon \{0,1\}^k\to\{0,1\}^n$. 
	The model $\RBM^k_{n,m}$ can approximate the deterministic policy $p(y|x) = \delta_{f(x)}(y)$ arbitrarily well, whenever  
	this can be represented by a feedforward linear threshold network with $m$ hidden units; that is, when 
	\begin{equation*}
	f(x) = \hs(W^\top (\hs( V x + c) ) + b), \quad\text{for all $x\in\{0,1\}^k$}, 
	\end{equation*} 
	for some generic choice of $W,V,b,c$. 
\end{lemma}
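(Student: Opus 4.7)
The plan is to take the parameters $(W,V,b,c)$ realizing the feedforward linear threshold computation of $f$ and use them, up to a two-scale rescaling, as the parameters of the CRBM. Specifically, I would use parameters $(\alpha^2 V,\ \alpha^2 c,\ \alpha W,\ \alpha b)$ and show that as $\alpha \to \infty$ the resulting conditional $p(\cdot|x)$ concentrates on $\delta_{f(x)}$ for every $x$. Genericity of $(W,V,b,c)$ will be used to guarantee a uniform margin: there exists $\mu>0$ such that $|(Vx+c)_i|\geq \mu$ for every $x\in\{0,1\}^k$ and $i\in[m]$, and $|(W^\top z^*(x)+b)_j|\geq \mu$ for every $j\in[n]$, where $z^*(x):=\hs(Vx+c)$ is the hidden pattern computed by the feedforward network on input $x$.

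Step one is to compute the effective energy of the hidden layer after marginalizing out $y$. A short calculation gives
\[
-\log p(z|x) = -\alpha^2 z^\top(Vx+c) - \sum_{j=1}^{n} \log\bigl(1+\exp(\alpha((W^\top z)_j+b_j))\bigr) + \text{const}(x).
\]
The first term is of order $\alpha^2$ with unique minimizer $z^*(x)$ and a strict gap of at least $\mu$ over any other $z\in\{0,1\}^m$. The second term is bounded in absolute value by $O(\alpha(\|W\|_1 + \|b\|_1))$, hence is $o(\alpha^2)$. Thus for $\alpha$ sufficiently large $z^*(x)$ is the unique minimizer of the effective energy, and $p(z^*(x)\mid x)\to 1$ uniformly in $x\in\{0,1\}^k$ (finite set).

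Step two is to analyze $p(y|x,z^*(x))$. In any RBM with clamped hidden state $z$, the outputs are conditionally independent Bernoullis with $p(y_j=1|x,z)=\sigma(\alpha((W^\top z)_j+b_j))$. Evaluating at $z=z^*(x)$ and using the margin gives each $p(y_j=f(x)_j|x,z^*(x)) \geq \sigma(\alpha \mu) \to 1$, so this conditional concentrates on $f(x)$. Combining with step one via $p(y|x)=\sum_z p(y|x,z)p(z|x)$ yields $p(f(x)|x)\to 1$, which is precisely the desired approximation of $\delta_{f(x)}$. Since $\{0,1\}^k$ is finite, the convergence is uniform in $x$, and therefore the CRBM approximates the deterministic conditional arbitrarily well.

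The main obstacle is step one: unlike in a pure feedforward network, the CRBM introduces a coupling between hidden and output layers through the partition function, so that the effective energy of $z$ receives a contribution from the log-sum-exp over $y$, which in principle could shift the mode of $p(z|x)$ away from $z^*(x)$. The resolution is the two-speed scaling: by making the input-to-hidden coefficients grow as $\alpha^2$ while the output-related coefficients grow only as $\alpha$, the feedforward hidden threshold decision dominates the confounding marginalization term, so the hidden layer becomes effectively deterministic before the coupling has a chance to interfere. The generic-parameter assumption is what allows one to exhibit the positive margin $\mu$ that makes this separation of scales work uniformly.
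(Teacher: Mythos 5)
Your proposal is correct and follows essentially the same route as the paper's proof: the paper likewise introduces a two-speed scaling (weights $t\alpha V, t\alpha c$ on the input-to-hidden side versus $t\beta W, t\beta b$ on the output side, with $\alpha$ large relative to $\beta$) precisely so that the $\argmax$ over $z$ is unaffected by the $y$-dependent coupling term, and then lets the joint distribution concentrate on the unique energy maximizer $(z^\ast,y^\ast)$ as $t\to\infty$, invoking genericity for uniqueness of the maximizers. Your version merely makes the same argument quantitative via the decomposition $p(y|x)=\sum_z p(y|z)p(z|x)$ and an explicit margin $\mu$, which is a fine (arguably more detailed) execution of the identical idea.
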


\begin{proof} 
	Consider the conditional distribution $p(\cdot|x)$. 
	This is the visible marginal of $p(y,z|x) = \frac{1}{Z} \exp((V x +c)^\top z + b^\top y + z^\top W y )$. 
	Consider weights $\alpha$ and $\beta$, 
	with $\alpha$ large enough, such that $\argmax_z(\alpha V x + \alpha c)^\top z = \argmax_z (\alpha V x + \alpha c)^\top z + (\beta W^\top z + \beta b)^\top y$ for all $y\in\{0,1\}^n$. 
	Note that for generic choices of $V$ and $c$, the set $\argmax_z(\alpha V + \alpha c)^\top z$ consists of a single point $z^\ast = \hs(V x + c)$. 
	We have $\argmax_{(y,z)} (\alpha V x + \alpha c)^\top z + (\beta W^\top z + \beta b)^\top y = (z^\ast, \argmax_y (\beta W^\top z^\ast + \beta b)^\top y)$. 
	Here, again, for generic choices of $V$ and $b$, the set $\argmax_y(\beta W^\top z^\ast +\beta b)^\top y$ consists of a single point $y^\ast=\hs(W^\top z^\ast + b)$. 
	The joint distribution $p(y,z|x)$ with parameters $t \beta W, t\alpha V,  t\beta b, t\alpha c$ tends to the point measure $\delta_{(y^\ast, z^\ast)}(y,z)$ as $t\to\infty$. 
	In this case $p(y|x)$ tends to $\delta_{y^\ast}(y)$ as $t\to\infty$, 
	where $y^\ast = \hs(W^\top z^\ast + b) = \hs(W^\top \hs(V x + c) +b)$, for all $x\in\{0,1\}^k$. 
\end{proof}

\begin{proof}[of Theorem~\ref{theorem:ltn}]
	The second statement is precisely Lemma~\ref{lemma:feedforward}. 
	For the more general statement the arguments are as follows. 
	Note that the conditional distribution $p(y|z)$ of the output units, given the hidden units, 
	is the same for a CRBM and for its feedforward network version. 
	Furthermore, for each input $x$, the CRBM output distribution is $p(y|x) = \sum_z  (q(z|x)\ast p(z) )  p(y|z)$, where 
	\begin{equation*}
	q(z|x) = \frac{\exp(z^\top V x + c^\top z)}{\sum_{z'} \exp(z'^\top V x + c^\top z')}
	\end{equation*}
	is the conditional distribution represented by the first layer, 
	\begin{equation*}
	p(y,z) = \frac{\exp(z^\top W y + b^\top y)}{\sum_{y',z'}\exp(z'^\top W y' + b^\top y')}
	\end{equation*}
	is the distribution represented by the RBM with parameters $W,b,0$, and 
	\begin{equation*}
	q(z|x)\ast p(z)  = \frac{q(z|x) p(z) }{ \sum_{z'}q(z'|x) p(z')},\quad\text{for all $z$}
	\end{equation*} 
	is the renormalized entry-wise product of the conditioned distribution $q(\cdot|x)$ and the RBM hidden marginal distribution 
	\begin{equation*}
	p(z) = \sum_{y} p(y,z). 
	\end{equation*}
	Now, if $q$ is deterministic, 
	then $q(z|x)\ast p(z)$ is the same as $q(z|x)$, regardless of $p(z)$ (strictly positive). 
\end{proof}

The proof of Theorem~\ref{proposition:universaldeterministic} builds on the following lemma, which describes a combinatorial property of the deterministic policies that can be approximated arbitrarily well by CRBMs. 
Recall that the Heaviside step function $\hs$ maps a real number $a$ to $0$ if $a<0$, to $1/2$ if $a=0$, and to $1$ if $a>0$.

\begin{lemma}
	\label{lemma:deter}
	Consider a function $f\colon\{0,1\}^k\to\{0,1\}^n$.
	The model $\RBM_{n,m}^k$ can approximate the deterministic policy $p(y|x) = \delta_{f(x)}(y)$ arbitrarily well
	only if there is a choice of the model parameters $W,V,b,c$ for which
	\begin{equation*}
	f(x)=\hs(W^\top \hs([W,V] \left[\begin{smallmatrix}f(x) \\x\end{smallmatrix}\right] +c) +b),\quad\text{for all $x\in\{0,1\}^k$},
	\label{eq:deter}
	\end{equation*}
	where the Heaviside function $\hs$ is applied entry-wise to its argument.
\end{lemma}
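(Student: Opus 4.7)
The plan is to construct parameters $(W,V,b,c)$ as a rescaled limit of any approximating sequence, and then identify the resulting zero-temperature (``tropical'') maximization with the claimed fixed-point equation.

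First, I would sum out the hidden units to get $p^\theta(y|x)\propto\exp(E^\theta(y|x))$ with
\begin{equation*}
E^\theta(y|x) := b^\top y + \sum_{i=1}^m \log\bigl(1+\exp((Vx+Wy+c)_i)\bigr).
\end{equation*}
Assume $\RBM_{n,m}^k$ approximates $\delta_{f(x)}$ arbitrarily well, so there is a sequence $\theta^{(t)}=(W^{(t)},V^{(t)},b^{(t)},c^{(t)})$ with $p^{\theta^{(t)}}(\cdot|x)\to\delta_{f(x)}$ for every $x\in\{0,1\}^k$. Equivalently, $E^{\theta^{(t)}}(f(x)|x)-E^{\theta^{(t)}}(y|x)\to+\infty$ for every $y\neq f(x)$, which forces $M^{(t)}:=\max\{\|W^{(t)}\|,\|V^{(t)}\|,\|b^{(t)}\|,\|c^{(t)}\|,1\}\to\infty$. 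Passing to a subsequence and rescaling, $\theta^{(t)}/M^{(t)}\to\theta^\infty=(W^\infty,V^\infty,b^\infty,c^\infty)$ with $\|\theta^\infty\|=1$.

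Next, using the elementary limit $\log(1+e^{Ma})/M\to\max(a,0)$ as $M\to\infty$, I would obtain the pointwise convergence $E^{\theta^{(t)}}(y|x)/M^{(t)}\to\tilde E(y|x)$, where
\begin{equation*}
\tilde E(y|x) := (b^\infty)^\top y + \sum_{i=1}^m \max\bigl((V^\infty x+W^\infty y+c^\infty)_i,\,0\bigr).
\end{equation*}
Dividing the divergence condition by $M^{(t)}>0$ and passing to the limit gives $\tilde E(f(x)|x)\geq\tilde E(y|x)$ for all $y$, so $f(x)\in\arg\max_y \tilde E(y|x)$. Rewriting $\max(u,0)=\max_{z\in\{0,1\}}zu$, I would express
\begin{equation*}
\tilde E(y|x) = \max_{z\in\{0,1\}^m}\bigl[(b^\infty)^\top y + z^\top(V^\infty x+W^\infty y+c^\infty)\bigr],
\end{equation*}
so $f(x)$ is the $y$-component of a joint argmax of the bilinear energy $(b^\infty)^\top y + z^\top V^\infty x + z^\top W^\infty y+(c^\infty)^\top z$. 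Factoring this joint maximization, for $y=f(x)$ the maximizing $z$ is $z^\ast(x):=\hs(W^\infty f(x)+V^\infty x+c^\infty)$, and for $z=z^\ast(x)$ the maximizing $y$ is $\hs((W^\infty)^\top z^\ast(x)+b^\infty)$. Since this $y$ must equal $f(x)$, and $[W^\infty,V^\infty]\bigl[\begin{smallmatrix}f(x)\\ x\end{smallmatrix}\bigr]=W^\infty f(x)+V^\infty x$, the fixed-point equation follows.

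The main obstacle is the convention $\hs(0)=1/2$: if some coordinate of $V^\infty x+W^\infty f(x)+c^\infty$ or $(W^\infty)^\top z^\ast(x)+b^\infty$ happens to vanish, the Heaviside step returns $1/2$ rather than a binary value, so the equation would not literally hold. I would address this by a final generic perturbation of $\theta^\infty$: since $f(x)\in\{0,1\}^n$ lies in the interior of the argmax region in $y$-space, small perturbations of $(W^\infty,V^\infty,b^\infty,c^\infty)$ can be chosen to keep $f(x)$ as the unique $y$-argmax for every $x$ while moving every affine form $(V x + W f(x)+c)_i$ and $(W^\top z^\ast(x)+b)_j$ off zero. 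The perturbed parameters then satisfy the required identity exactly, completing the proof.
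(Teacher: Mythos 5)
Your overall strategy---rescale a sequence of increasingly good approximants, tropicalize the free energy, and read the fixed-point equation off the zero-temperature argmax---is a genuinely different route from the paper's, and everything up to your last paragraph is sound: the softplus form of $E^\theta(y|x)$, the divergence of the energy gaps, the subsequential limit $\theta^\infty$ on the unit sphere, and the identity $\tilde E(y|x)=\max_z[\cdots]$ are all correct. The gap is the final tie-breaking step. Dividing $E^{\theta^{(t)}}(f(x)|x)-E^{\theta^{(t)}}(y|x)\to+\infty$ by $M^{(t)}\to\infty$ only yields the non-strict inequality $\tilde E(f(x)|x)\geq\tilde E(y|x)$, so at $\theta^\infty$ the point $f(x)$ may be a \emph{tied} maximizer, both in $y$ and in $z$. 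Your assertion that ``$f(x)$ lies in the interior of the argmax region'' has no content for a maximization over the discrete set $\{0,1\}^n$, and when a genuine tie $\tilde E(f(x)|x)=\tilde E(y'|x)$ occurs, a generic perturbation of $\theta^\infty$ is as likely to promote $y'$ to the unique argmax as $f(x)$: genericity removes the zeros of the affine forms but says nothing about which way ties resolve. Nor can you appeal to the approximants themselves as the needed perturbation, because the fixed-point condition involves the discontinuous $\hs$ and is neither an open nor a closed condition in $\theta$; it neither passes to the limit along $\theta^{(t)}/M^{(t)}\to\theta^\infty$ nor transfers from $\theta^\infty$ to nearby points for free.

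The paper sidesteps this by never taking the limit: it fixes a single approximant with error $\alpha$ small enough and derives \emph{strict}, quantitatively large gaps $(z^\top W+b^\top)f(x)\gg(z^\top W+b^\top)y$ for every $z$ carrying non-negligible weight $p(z|x)>\epsilon$. Comparing $f(x)$ with its single-bit flips then forces $\sgn(z^\top W+b^\top)=\sgn(f(x)-\tfrac12)$ with no zero coordinates, so the outer $\hs$ returns exactly $f(x)$; and since $p(z|x)$ is, up to a vanishing correction, proportional to $\exp(z^\top(Wf(x)+Vx+c))$, every maximizer of $z^\top(Wf(x)+Vx+c)$ satisfies this sign condition --- and so does $z^\ast=\hs(Wf(x)+Vx+c)$ even when it has $\tfrac12$-entries, because $W^\top z^\ast+b$ is then a convex combination of vectors $W^\top z+b$ all having the correct strict signs. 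If you want to keep the tropical framing, the cleanest repair is to run your argmax analysis on $\theta^{(t)}$ for a fixed sufficiently large $t$, where all the relevant inequalities are strict, rather than on the limit $\theta^\infty$; at that point your argument essentially becomes the paper's.
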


\begin{proof} 
	Consider a choice of $W,V,b,c$.
	For each input state $x$,
	the conditional represented by $\RBM_{n,m}^k$ is equal to the mixture distribution
	$p(y|x) = \sum_z p(z|x)p(y|x,z)$,
	with mixture components $p(y|x,z) = p(y|z) \propto \exp((z^\top W +b^\top) y)$ and mixture weights $p(z|x) \propto \sum_{y'} \exp((z^\top W + b^\top)y' + z^\top( V x +c))$ for all $z\in\{0,1\}^m$.
	The support of a mixture distribution is equal to the union of the supports of the mixture components with non-zero mixture weights.
	In the present case, 
	if $\sum_y|p(y|x)-\delta_{f(x)}(y)|\leq \alpha$, then $\sum_y|p(y|x,z)-\delta_{f(x)}(y)|\leq \alpha/\epsilon$ for all $z$ with $p(z|x)>\epsilon$, for any $\epsilon>0$.
	Choosing $\alpha$ small enough, $\alpha/\epsilon$ can be made arbitrarily small for any fixed $\epsilon>0$. In this case, for every $z$ with $p(z|x)>\epsilon$, necessarily
	\begin{equation}
	(z^\top W + b^\top)f(x)\gg (z^\top W + b^\top)y, \quad\text{for all $y\neq f(x)$},
	\label{eq:deterb}
	\end{equation}
	and hence
	\begin{equation*}
	\sgn (z^\top W +b^\top) = \sgn(f(x)-\tfrac{1}{2}).
	\end{equation*}
	
	Furthermore, the probability assigned by $p(z|x)$ to all $z$ that do not satisfy Equation~\eqref{eq:deterb} has to be very close to zero (upper bounded by a function that decreases with $\alpha$).
	The probability of $z$ given $x$ is given by
	\begin{equation*}
	p(z|x)
	= \frac{1}{Z_{z|x}} \exp(z^\top( V x +c)) \sum_{y'}\exp((z^\top W + b^\top)y').
	\end{equation*}
	In view of Equation~\eqref{eq:deterb}, for all $z$ with $p(z|x)>\epsilon$, if $\alpha$ is small enough,  $p(z|x)$ is arbitrarily close to
	\begin{equation*}
	\frac{1}{Z_{z|x}} \exp( z^\top(V x +c)) \exp((z^\top W + b^\top)f(x) ).
	\label{eq:unaeqs}
	\end{equation*}
	This holds, in particular, for every $z$ that maximizes $p(z|x)$. Therefore,  
	\begin{equation*}
	\argmax_z p(z|x)
	=  \argmax_z z^\top( W f(x) + V x +c). 
	\end{equation*}
	Each of these $z$ must satisfy Equation~\eqref{eq:deterb}. 
	This completes the proof.
\end{proof}

\begin{proof}[of Theorem~\ref{proposition:universaldeterministic}]
	\emph{Sufficient condition: }
	The bound $2^k-1$ follows directly from Proposition~\ref{proposition:boundedsupport}. 
	For the second bound, note that any function $f\colon \{0,1\}^k \to \{0,1\}^n;\; x\mapsto y$ can be computed by a parallel composition of the functions $f_i\colon x\mapsto y_i$, for all $i\in[n]$. 
	Hence the bound follows from Lemma~\ref{lemma:feedforward} and the fact that a feedforward linear threshold network with $\frac{3}{k+2}2^k$ hidden units can compute any Boolean function. 
	
	\emph{Necessary condition: }
	Recall that a linear threshold function with $N$ input bits and $M$ output bits is a function of the form
	$\{0,1\}^N\to\{0,1\}^M$; $y\mapsto \hs(W y + b)$ with $W\in\R^{M\times N}$ and $b\in\R^M$.
	Lemma~\ref{lemma:deter} shows that each deterministic policy that can be approximated by $\RBM_{n,m}^k$ arbitrarily well corresponds to the $y$-coordinate fixed points of a map defined as the composition of two linear threshold functions
	$\{0,1\}^{k+n}\to \{0,1\}^m$; $(x,y)\mapsto \hs([W,V] \left[\begin{smallmatrix}y \\x\end{smallmatrix}\right] +c)$
	and $\{0,1\}^m \to \{0,1\}^n$; $z\mapsto \hs(W^\top z +b)$.
	In particular, we can upper bound the number of deterministic policies that can be approximated arbitrarily well by $\RBM_{n,m}^k$, by the total number of compositions of two linear threshold functions; one with $n+k$ inputs and $m$ outputs and the other with $m$ inputs and $n$ outputs. 
	
	Let $\LTF(N,M)$ be the number of linear threshold functions with $N$ inputs and $M$ outputs.
	It is known that~\citep{857765, Wenzel:2000:HAS:361159.361171}
	\begin{equation*}
	\LTF(N,M) \leq 2^{N^2 M}.
	\end{equation*}
	The number of deterministic policies that can be approximated arbitrarily well by $\RBM_{n,m}^k$ is thus bounded above by
	$\LTF(n+k,m)\cdot \LTF(m,n) \leq 2^{m(n+k)^2 + n m^2}$.
	The actual number may be much smaller, in view of the fixed-point and shared parameter constraints.
	On the other hand, the number of deterministic policies in $\Del{k}{n}$ is as large as $(2^n)^{2^k}=2^{n 2^k}$.
	The claim follows from comparing these two numbers.
\end{proof}

\vskip .2in

\subsection*{Acknowledgment}

We acknowledge support from the DFG Priority Program Autonomous Learning (DFG-SPP 1527).
G.~M. and K.~G.-Z. would like to thank the Santa Fe Institute for hosting them during the initial work on this article. 

\renewcommand\bibsection{\subsection*{\refname}}
\bibliography{referenzen}
\bibliographystyle{abbrvnat}

\end{document}